\documentclass[aos]{imsart}
\RequirePackage[utf8]{inputenc}
\RequirePackage[T1]{fontenc}
\RequirePackage{amsthm,amsmath,amsfonts,amssymb,mathtools}
\RequirePackage[numbers,sort&compress]{natbib}
\RequirePackage{xcolor}
\RequirePackage[colorlinks,citecolor=blue,urlcolor=blue]{hyperref}
\RequirePackage{graphicx,booktabs,subcaption,microtype}
\startlocaldefs
\numberwithin{equation}{section}
\hypersetup{bookmarksdepth=2}

\newcommand{\E}{\mathbb{E}}
\newcommand{\R}{\mathbb{R}}
\newcommand{\Var}{\mathrm{Var}}
\newcommand{\Cov}{\mathrm{Cov}}
\newcommand{\bA}{\mathbf{A}}
\DeclareMathOperator{\tr}{Tr}

\theoremstyle{plain}
\newtheorem{theorem}{Theorem}[section]
\newtheorem{proposition}[theorem]{Proposition}
\newtheorem{lemma}[theorem]{Lemma}
\newtheorem{corollary}[theorem]{Corollary}
\theoremstyle{remark}

\newtheorem{remark}[theorem]{Remark}
\newtheorem{example}[theorem]{Example}
\endlocaldefs
\begin{document}
\begin{frontmatter}
\title{Intrinsic Gradient Barriers in Pushforward Deep Generative Models}

\runtitle{Intrinsic Gradient Barriers}
\begin{aug}
\author[A]{\fnms{Edric}~\snm{Tam}\thanksref{T1}\ead[label=e1]{edrictam@stanford.edu}},
\author[B]{\fnms{Qingyang}~\snm{Liu}\ead[label=e2]{liuqi@wfu.edu}},
\author[A]{\fnms{Barbara}~\snm{Engelhardt}\ead[label=e3]{bengelhardt@stanford.edu}}
\and
\author[C]{\fnms{Ray}~\snm{Bai}\thanksref{T1}\ead[label=e4]{rbai2@gmu.edu}}
\address[A]{Department of Biomedical Data Science, Stanford University\printead[presep={,\ }]{e1,e3}}
\address[B]{Department of Statistical Sciences, Wake Forest University\printead[presep={,\ }]{e2}}
\address[C]{Department of Statistics, George Mason University\printead[presep={,\ }]{e4}}
\end{aug}
\thankstext{T1}{Corresponding authors.}

\begin{abstract}
Broad classes of deep generative models (DGMs) operate by learning to map a simple source distribution, such as a standard Gaussian, to a complex high-dimensional target distribution. Despite their success, DGMs often empirically struggle to accurately represent distributions with heavy tails or well-separated modes. In this work, we show that these difficulties reflect intrinsic properties of the underlying transport problem rather than particular neural network architectures or training procedures. We establish quantitative lower bounds on the gradients and Jacobians of pushforward mappings that transport log-concave source distributions to such targets. For heavy-tailed target distributions, the required gradient magnitude grows as the target develops increasingly large moments. For multimodal target distributions, the low-probability regions between well-separated modes similarly force large gradients in the transport map. We discuss how such large derivatives relate to trade-offs between expressivity and numerical stability. Our results hold under substantially weaker regularity conditions than previous analyses and remain stable under perturbations of the target distribution. Finally, we show that these obstructions can be mitigated when the goal is approximation of a given density rather than generative modeling. Specifically, we propose simple classes of bijective reparametrizations that can alleviate the difficult geometric structure and reduce the gradient burden placed on the learned generative map. Numerical experiments support our theoretical analysis.
\end{abstract}

\begin{keyword}[class=MSC]
\kwd[Primary ]{60E15}
\kwd{62G05}
\kwd[; secondary ]{68T07}
\kwd{60G70}
\end{keyword}
\begin{keyword}
\kwd{deep generative models}
\kwd{heavy tails}
\kwd{multimodality}
\kwd{functional inequalities}
\kwd{transport maps}
\kwd{density approximation}
\end{keyword}
\end{frontmatter}

\section{Introduction}
\label{sec:intro}

Deep generative models (DGMs) are foundational tools in modern artificial intelligence. Despite their success in tasks such as image generation, it has been empirically observed that DGMs often struggle when the target distribution is heavy-tailed \citep{pandey2024heavy, kim2023t} or multimodal \citep{salmona2022can,tanielian2020}. In particular, DGMs can grossly under-characterize tail events \citep{tam2025statistical} and fail to represent distinct modes \citep{khayatkhoei2018disconnected}. These phenomena are consequential in practice since DGMs are increasingly adopted in scientific domains where heavy-tailed or multimodal data are the norm rather than the exception \citep{ma2024generative, horvarth2026marketgen}. Heavy-tailed data arise in applications such as the climate sciences \citep{coles2001introduction, katz2002statistics}, where inadequate representation of extreme events can lead to under-estimation of extreme climate occurrences \citep{zscheischler2018future}, and finance \citep{bradley2003financial}, where underestimating tail probabilities can distort estimates of important risk measures such as Expected Shortfall \citep{embrechts2013modelling, mcneil2015quantitative}. Similarly, multimodal distributions arise in applications ranging from single-cell analysis, where missing modes can obscure biologically distinct cell populations \citep{marouf2020realistic}, to the modeling of heterogeneous populations, where omitting modes associated with under-represented subgroups can undermine the accuracy and fairness of downstream analyses \citep{marchesi2025generative}. 

\begin{figure*}[t]
    \centering
    \captionsetup{labelfont={color=black},textfont={color=black}}
    \includegraphics[width=.98\textwidth]{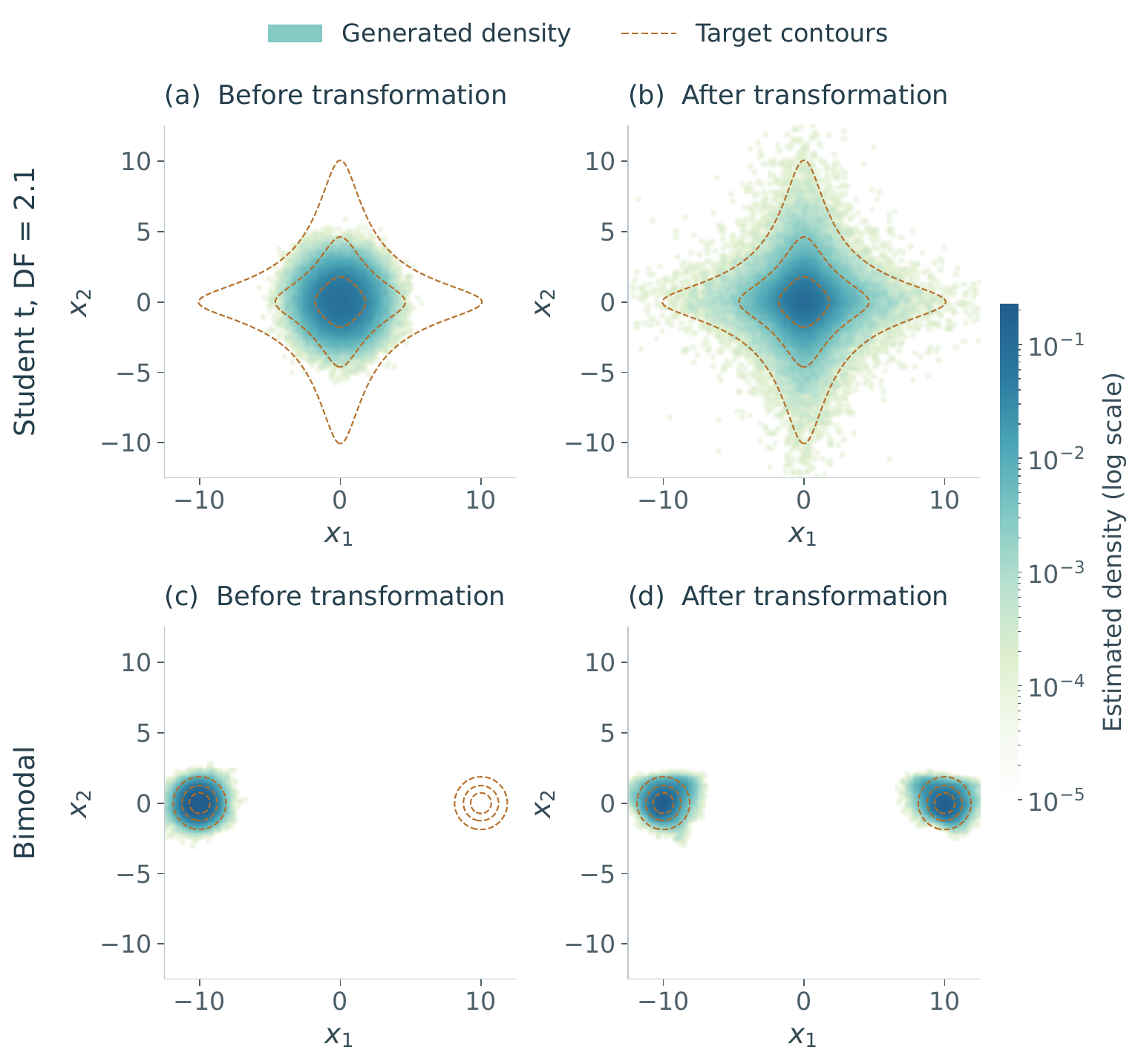}
    \caption{Results from using normalizing flows with a Gaussian source distribution to approximately sample from two target densities before (\textbf{left two panels}) and after (\textbf{right two panels}) applying bijective reparametrizations. The targets are a heavy-tailed Student-$t$ distribution (top) and a bimodal Gaussian mixture (bottom). Heatmaps show the densities estimated from the generated samples in the original coordinates, while dashed curves show the target contours.}
    \label{fig:density-fitting}
\end{figure*}

As DGMs have become more widely used, it is critical to understand \emph{why} DGMs can struggle to faithfully represent heavy-tailed and multimodal distributions. In this paper, we study the \emph{intrinsic} limitations of pushforward DGMs in representing such distributions. DGMs can be viewed through the lens of learning a map $T: \mathcal{Z} \to \mathcal{X}$ that pushes a simple source distribution $\nu$ (most commonly a Gaussian) on some latent space $\mathcal{Z}$ to a target distribution $\mu$ on the sample space $\mathcal{X}$. Prior work has examined limitations arising from global Lipschitz constraints on the neural network map $\widehat{T}$ used to approximate $T$ \citep{jaini2020tails, salmona2022can, tam2025statistical}. We ask a more fundamental question: what regularity constraints does the target distribution $\mu$ itself impose on \emph{any} transport map $T$ satisfying $T_\#\nu = \mu$? This perspective isolates the intrinsic obstruction in transporting $\nu$ to $\mu$, independently of the architecture or estimation procedure used to approximate the map. As a result, our analysis applies to a broad class of models, including various normalizing flow models whose induced transport maps need not be globally Lipschitz.

The left two panels of Figure~\ref{fig:density-fitting} highlight the intrinsic difficulties of implicitly sampling from two challenging target densities (dashed contours): a Student-$t$ with 2.1 degrees of freedom (top) and a bimodal Gaussian mixture (bottom). As shown by the heatmaps of the approximate samples, normalizing flows under a Gaussian source distribution struggle significantly with these targets. Specifically, they exhibit severe tail underestimation for the Student-$t$ density in Figure~\ref{fig:density-fitting}(a) and complete omission of one of the modes for the Gaussian mixture in Figure~\ref{fig:density-fitting}(c).  The experimental details for these simulations are provided in Section~\ref{supp:figure-one-fitting} of the Supplement. 

For our theoretical analysis of heavy-tailed targets, we leverage Poincar\'e-type inequalities to show that under the predominantly used Gaussian source measures, pushforward maps to targets with large centered moments must have large gradient components and Jacobians. We further generalize these results to log-concave and strongly log-concave source distributions. For multimodal target distributions with well-separated modes, we use isoperimetric and coarea inequalities to show that large gradients are a mathematical necessity, even when the target's moments remain bounded. It is well-known that, for invertible generative models, large transport derivatives can amplify numerical errors \citep{behrmann2021understanding}, leading to a tradeoff between expressivity and numerical stability. Our results thus unveil insights into why common DGMs can fundamentally struggle to learn transport maps for heavy-tailed and/or multimodal targets, \emph{irrespective} of their underlying neural network parameterization. 

Our analysis also suggests ways to mitigate these obstructions when the goal is to use such neural networks for approximation of a given target density rather than generative modeling. The target distribution in generative modeling is typically specified by training samples \citep{montanari2023sampling}. When the target density is known up to a normalizing constant, as is typical in variational inference \citep{rezende2015variational}, we can leverage this additional information to mitigate these obstructions. The key idea is to use known bijective reparametrizations to absorb part of the required derivative growth, leaving a more regular map to be learned. Although the resulting composite map remains subject to our lower bounds, this decomposition reduces the burden on the learned component of the map. 
The right two panels of Figure~\ref{fig:density-fitting} demonstrate the practical benefits of applying these bijective reparameterizations. Specifically, Figure~\ref{fig:density-fitting}(b) shows that the tail regions of the heavy-tailed Student-$t$ density are better represented, while Figure~\ref{fig:density-fitting}(d) demonstrates that both modes of the Gaussian mixture are correctly captured.
 
Our main contributions are summarized as follows:
\begin{enumerate}
\item For heavy-tailed targets, we derive lower bounds on transport gradients and Jacobians in terms of centered target moments. Our theory holds under Gaussian and log-concave source distributions without assuming a global Lipschitz bound. Importantly, these results characterize intrinsic limitations of the underlying transport problem, independent of any neural network architecture or training procedure.
\item For multimodal targets, we similarly show that low-probability valleys between modes force large transport gradients and Jacobians.
\item When the goal is to approximate a given density, we propose simple bijective reparametrizations for heavy-tailed and multimodal targets that reduce the derivative burden on the learned map. This allows us to alleviate the intrinsic gradient barriers in pushforward DGMs and more faithfully represent these challenging densities. 
\end{enumerate} 

The remainder of the paper is organized as follows. Section~\ref{sec:related}
reviews related work, and Section~\ref{sec:prelim} introduces the notation and setup.
Section~\ref{sec:gradient-bounds} uses functional inequalities to derive
gradient lower bounds in terms of target moments, while
Section~\ref{sec:multimodal} establishes localized Jacobian bounds for
multimodal targets. Section~\ref{sec:training_implications} discusses
practical implications for DGM architecture design and {numerical stability}.
Section~\ref{sec:transformations} develops {bijective reparametrizations} for
density approximation, and Section~\ref{sec:examples} presents numerical
illustrations. {\hypersetup{linkcolor=blue,filecolor=blue}Proofs, technical background, and additional experiments are provided in the Supplement.}

\section{Related Work}
\label{sec:related}

In this work, we focus on pushforward-based deep generative models, where the generated distribution is realized as the pushforward $T_\# \nu$ of a fixed source distribution $\nu$ by a learned map $T$. This perspective encompasses models that represent such maps explicitly with neural networks, including generative adversarial networks (GANs; \citep{goodfellow2020generative}), variational autoencoders (VAEs; \citep{kingma2014auto}), and normalizing flows \citep{rezende2015variational, dinh2016density}. It also includes continuous flow models, such as neural ordinary differential equations (ODEs) and related stochastic flow models \citep{chen2018neural, liu2022flow, albergo2023stochastic}, where a learned vector field is integrated over time to produce a transport map. Diffusion models \citep{song2020score, ho2020denoising, sohl2015deep} can also be treated within this pushforward framework after time discretization; the reverse sampling procedures can be viewed as pushforwards of a Gaussian measure on an augmented space \citep{salmona2022can, tam2025statistical}. Our results therefore apply to the resulting discretized sampling maps. Models that do not naturally specify a pushforward mechanism, such as energy-based models defined by unnormalized densities or large language models based on attention architectures, fall outside the scope of our analysis.

Several prior works have quantified the expressive power of DGMs under light-tailed/moments assumptions on the target \citep{lu2020universal, yang2022capacity}. However, there is growing empirical and methodological evidence that standard DGMs struggle to faithfully represent heavy-tailed targets \citep{ghane2025concentration, tam2025statistical, wiese2019copula, jaini2020tails}. These observations have motivated work aimed at mitigating such deficiencies, such as heavy-tailed diffusions \citep{pandey2024heavy}, Pareto GANs \citep{huster2021pareto}, and tail-adaptive normalizing flows \citep{hickling2024flexible}. For multimodal targets, \cite{salmona2022can} and \cite{tanielian2020} relate mode separation and approximation error to global Lipschitz constraints. Our multimodal analysis yields lower bounds on derivative moments under only local Lipschitz regularity.

The most closely related prior works are \citep{jaini2020tails} and \citep{salmona2022can}, who study heavy-tailed and multimodal targets under global Lipschitz assumptions. Related analyses of invertible networks and bi-Lipschitz flows also use global Lipschitz bounds \citep{hagemann2021stabilizing,verine2022}. Our results instead constrain gradients and Jacobians directly, without imposing a global bound on their size and allowing transport maps with unbounded derivatives. We require only Sobolev regularity for the moment bounds and local Lipschitz continuity for the multimodal bounds. These conditions are substantially weaker and cover a broader class of generative models, including normalizing flows and continuous flows whose transport maps need not be globally Lipschitz.

The main tools that we use in our analysis are functional and isoperimetric inequalities. These inequalities play a central role in probability theory, functional analysis, and the study of Markov chains \citep{bakry2013analysis}. In the context of sampling, such inequalities are often used to analyze the convergence of sampling algorithms such as Langevin dynamics \citep{chewi2025analysis}. While these inequalities are typically used to derive upper bounds on a Markov chain's distance to stationarity in the sampling literature, here we use them to \emph{lower-bound} the size of transport gradients and Jacobians.

{The gradient bounds derived in our analysis motivate changing the coordinates in which a target density is approximated. } Such coordinate transformations have a long history in the sampling literature. For example, measure transport methods \citep{marzouk2016sampling} and normalizing flows \citep{muller2019neural,gabrie2022adaptive} have been used to construct changes of coordinates that facilitate Monte Carlo sampling from complex target distributions. Gaussianization methods \citep{rosenblatt1952,chen2000gaussianization,goerg2015lambert} seek transformations that make a target distribution approximately Gaussian, whereas stereographic MCMC \citep{yang2024stereographic} uses stereographic projections to reformulate the sampling problem on a sphere. In this
article, we develop simple, bijective reparametrizations specifically for heavy-tailed and multimodal target densities.

\section{Preliminaries}
\label{sec:prelim}
\subsection{Setup and Definitions}
\label{subsec:notation}

For a matrix $\bA$, $\|\bA\|_F$ denotes its Frobenius norm, and $\|\bA\|_{\mathrm{op}}$ denotes its operator norm. For a vector $x \in \mathbb{R}^d$, $\|x\|_2$ denotes the Euclidean norm.  Generic positive constants are denoted by $C, c$, whose values may change from line to line.

Probability measures are denoted by Greek letters (e.g., $\nu, \mu$), with $\gamma$ reserved for the standard Gaussian measure. For a measurable map $T : \mathcal{Z} \to \mathcal{X}$ and a probability measure $\nu$ on $\mathcal{Z}$, we write $T_\# \nu$ for the pushforward measure induced by $T$. We denote by $J_T(x)$ the Jacobian of the map $T$ at $x$, and by $\nabla f$ the gradient of a scalar-valued function $f$. For vector-valued maps $T = (T_1, \dots, T_m)$, the Jacobian $J_T$ has entries $(J_T)_{ij} = \partial T_i / \partial x_j$. As is standard in the literature, when we write $\text{Var}_{\nu}(f)$ or $\E_\nu(f)$, we mean $\text{Var}(f(Z))$ and $\E(f(Z))$ for $Z \sim \nu$. Following a convention in high dimensional geometry, we call $\nu$ isotropic if it is centered and has identity covariance. We use $\kappa$ for Student-$t$ degrees of freedom, $\alpha$ for the Pareto shape parameter, and $\zeta$ for the Fr\'echet shape parameter.

Let $1 \le p < \infty$. For a full-dimensional log-concave source $\nu$, write $\Omega=\operatorname{int}(\operatorname{supp}\nu)$. A function $f$ belongs to the Sobolev space $W^{1,p}(\nu)$ if $f\in W^{1,p}_{\mathrm{loc}}(\Omega)$ and both $f$ and its weak gradient $\nabla f$ belong to $L^p(\nu)$. Weak derivatives are taken on $\Omega$; values outside $\Omega$ are irrelevant. All gradients and integrals in this work are understood in the above sense. See Section~\ref{append_technical} of the Supplement for background on weak gradients and Sobolev spaces.

A random variable $X \in \mathbb{R}$ is said to be heavy-tailed if $\E[e^{t|X|}] = \infty$ for all $t > 0$. A fundamental link between the $p$th moment of $X$ (if it exists) and its tail probabilities is given by the polynomial Markov (Chebyshev) inequality: for any $a > 0$, $
    \mathbb{P}(|X| \ge a)\le  \mathbb{E} |X|^p / a^p.$
This inequality shows that finite $p$th moments imply at least polynomial decay of the tail. We give more technical details on moments and tails in Section~\ref{append_technical} of the Supplement.

\subsection{Pushforward Maps and Deep Generative Models}
\label{subsec:existence_map}

Throughout, we adopt the push-forward perspective on DGMs. Let $\nu$ be a probability measure on a latent space $\mathcal Z \subseteq \mathbb R^d$ and let $\mu$ be a target probability measure on $\mathcal X \subseteq \mathbb R^m$. A measurable map $T:\mathcal Z \to \mathcal X$ is said to push $\nu$ forward to $\mu$ if $T_\# \nu = \mu$, i.e.,
\begin{align*}
\mu(A) = \nu\big(T^{-1}(A)\big) \quad \text{for all Borel } A \subseteq \mathcal X.
\end{align*}
When both $\mu$ and $\nu$ are non-atomic, the existence of such maps is guaranteed, although they are generally not unique \citep{villani2008optimal}.

In DGMs, one learns an approximation to such a pushforward map. That is, one considers a family $\{T_\theta : \theta \in \Theta\}$ of functions (typically involving neural networks) and seeks parameters $\hat\theta$ such that $(T_{\hat\theta})_\# \nu \approx \mu$ in an appropriate sense. The most typical setting in DGMs is to set $\nu = \mathcal{N}(0, I_d)$. One then generates approximate samples from $\mu$ by transforming a sample $Z \sim \nu$ into $X = T_{\hat\theta}(Z)$.

In models such as GANs and VAEs, the mapping $T_\theta$ is modeled explicitly by a neural network, where training involves minimizing a divergence (e.g., Jensen-Shannon or Kullback-Leibler) between $\mu$ and $(T_{\theta})_\# \nu$. In normalizing flows where the map is required to be invertible, neural networks with special coupling layers that ensure invertibility are used to model $T_\theta$. In dynamic models such as continuous flows, $T_\theta$ is realized implicitly through a composition of infinitesimal transformations (e.g., vector fields) that are modeled by neural networks. See \cite{tomczak2024deep} for a comprehensive overview of DGMs.

\section{Gradient Bounds for Deep Generative Models via Functional Inequalities} \label{sec:gradient-bounds}

Our goal is to formalize the principle that large moments of the target distribution necessarily require large gradients or Jacobians of the transport map, with an appropriate notion of size. To this end, it is natural to consider Poincar\'e inequalities, which relate the variance of the target to the gradient of the pushforward map. We begin with the most relevant case when the source distribution is Gaussian. Proofs of the exact-transport results in this section are delegated to Section~\ref{append_proofs} of the Supplement.

\subsection{Gaussian Source Distributions}

We say that $\nu$ satisfies a Poincar\'e inequality with constant $C_{\mathrm{PI}}(\nu) > 0$ if for all functions $f: \mathbb{R}^d \to \mathbb{R}$ belonging to $W^{1,2}(\nu)$ with finite variance, the following bound holds:
\begin{equation}
    \label{eq:pi}
    \text{Var}_{\nu}(f) \le C_{\mathrm{PI}}(\nu) \, \mathbb{E}_{\nu}[\|\nabla f\|_2^2].
\end{equation}
A Poincar\'e inequality implies that if $f(Z)$ has high variance (i.e., if $f_\# \nu$ has a large centered second moment), then the gradient $\nabla f$ must be large in expectation. In particular, the standard Gaussian measure $\mathcal{N}(0, I_d)$ satisfies (\ref{eq:pi}). Further background on the Poincar\'e constant is
provided in Section~\ref{supp:cheeger-poincare}
of the Supplement.

\begin{lemma}[Gaussian Poincar\'e inequality]
\label{thm:gaussian_poincare}
Let $\gamma$ denote the standard Gaussian measure on $\mathbb{R}^d$, and let
$f \in W^{1,2}(\gamma)$. Then
\begin{equation}  \label{eq:gaussian-pi-inequality}
\Var_{\gamma}(f)
 \le
C\int_{\mathbb{R}^d} \|\nabla f(x)\|_2^2\, d\gamma(x),
 \end{equation}
 where $C = 1$.
\end{lemma}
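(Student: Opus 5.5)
We prove the inequality first for smooth, compactly supported $f$ (or $f\in C^\infty$ with polynomially bounded derivatives) and then extend to all of $W^{1,2}(\gamma)$ by density. For the smooth case we use the Ornstein--Uhlenbeck semigroup
\[
P_t f(x)=\int_{\R^d} f\bigl(e^{-t}x+\sqrt{1-e^{-2t}}\,y\bigr)\,d\gamma(y),\qquad t\ge 0,
\]
which has $\gamma$ as invariant measure, satisfies $P_0f=f$, and satisfies $P_tf\to\E_\gamma f$ in $L^2(\gamma)$ as $t\to\infty$. Its generator is $L=\Delta-x\cdot\nabla$, and it obeys the Gaussian integration-by-parts identity $\int g\,Lh\,d\gamma=-\int\nabla g\cdot\nabla h\,d\gamma$.

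\textbf{Step 1: the variance as an integral of Dirichlet energies.}
\[
\Var_\gamma(f)=\int f^2\,d\gamma-\Bigl(\int f\,d\gamma\Bigr)^2=-\int_0^\infty\frac{d}{dt}\int (P_tf)^2\,d\gamma\,dt=2\int_0^\infty\int\|\nabla P_tf\|_2^2\,d\gamma\,dt.
\]

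\textbf{Step 2: the commutation relation.} Differentiating under the integral in the Mehler formula gives
\[
\nabla P_tf=e^{-t}P_t(\nabla f),
\]
applied componentwise. Jensen's inequality for the Markov operator $P_t$ then yields $\|\nabla P_tf\|_2^2\le e^{-2t}P_t(\|\nabla f\|_2^2)$. By invariance of $\gamma$,
\[
\int\|\nabla P_tf\|_2^2\,d\gamma\le e^{-2t}\int\|\nabla f\|_2^2\,d\gamma.
\]
Inserting this into Step 1 and using $2\int_0^\infty e^{-2t}\,dt=1$ gives the claim with $C=1$.

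\textbf{Alternative route.} If the semigroup machinery is unwanted, the same constant follows from the Hermite expansion. The normalized Hermite polynomials $H_\alpha$ form an orthonormal basis of $L^2(\gamma)$ with $\partial_iH_\alpha=\sqrt{\alpha_i}\,H_{\alpha-e_i}$. Writing $f=\sum_\alpha c_\alpha H_\alpha$, one gets
\[
\Var_\gamma f=\sum_{\alpha\ne0}c_\alpha^2\le\sum_\alpha|\alpha|\,c_\alpha^2=\int\|\nabla f\|_2^2\,d\gamma.
\]

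\textbf{Main obstacle: the approximation step.} The semigroup computation is routine. The real work is extending to general $f\in W^{1,2}(\gamma)$ with weak gradients. The plan is to show that $C_c^\infty(\R^d)$ is dense in $W^{1,2}(\gamma)$:
\begin{itemize}
\item truncate with cutoffs $\chi_R(x)=\chi(x/R)$; the extra gradient term $f\nabla\chi_R$ is bounded by $\|f\|_\infty$-free estimates $\le\|\nabla\chi\|_\infty R^{-1}|f|$, which tends to $0$ in $L^2(\gamma)$;
\item then mollify, using that the Gaussian density is bounded above and below on compact sets, so local $W^{1,2}$ convergence transfers to $W^{1,2}(\gamma)$.
\end{itemize}
Both sides of the inequality are continuous under $W^{1,2}(\gamma)$ convergence: the variance because $L^2$ convergence controls it, and the energy term trivially. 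Passing to the limit therefore completes the proof.
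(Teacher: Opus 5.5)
Your proposal is correct. The paper gives no argument of its own for this lemma; its proof is only a pointer to Theorem~1.6.4 of Bogachev's \emph{Gaussian Measures}. What you have written is therefore a self-contained replacement for that citation, along standard textbook lines.

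For $f\in C_c^\infty(\R^d)$, you write $\Var_\gamma(f)=2\int_0^\infty\int\|\nabla P_tf\|_2^2\,d\gamma\,dt$ and combine it with the commutation $\nabla P_tf=e^{-t}P_t\nabla f$, Jensen, and invariance of $\gamma$. This yields exactly $C=1$. Your Hermite computation is the spectral form of the same fact: $-LH_\alpha=|\alpha|H_\alpha$, so the spectral gap is $1$, attained on degree-one polynomials. This is also why affine functions give equality, as the paper notes after the lemma.

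What your version adds over the citation is the cutoff-and-mollification step. It checks that the inequality holds on precisely the class the paper uses: $W^{1,2}_{\mathrm{loc}}$ functions with a weak gradient and $f,\nabla f\in L^2(\gamma)$. The paper addresses this kind of density only later, in the remark citing Barthe--Klartag for general log-concave sources. For the Gaussian your direct argument suffices, because the density is bounded above and below on compact sets.

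Two minor points:
\begin{itemize}
\item The phrase about ``$\|f\|_\infty$-free estimates'' should simply say that $\nabla(\chi_Rf)=\chi_R\nabla f+f\nabla\chi_R$ with $|f\nabla\chi_R|\le\|\nabla\chi\|_\infty R^{-1}|f|$.
\item If you ran the Hermite route directly on weak-gradient $f$, the identity $\int\|\nabla f\|_2^2\,d\gamma=\sum_\alpha|\alpha|c_\alpha^2$ would itself need a cutoff argument, since polynomials are not compactly supported test functions.
\end{itemize}
With your density step in place, neither point affects the conclusion.
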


Intuitively, the Gaussian Poincar\'e inequality implies that producing a target distribution with large variance from a Gaussian source requires a pushforward map whose gradient has a large expected squared norm.

The space $W^{1,2}(\gamma)$ in Lemma \ref{thm:gaussian_poincare} only requires very weak regularity conditions (weak differentiability of the function and square-integrability of its gradient) that ensures expectations and gradients are well defined. This is substantially more general than global Lipschitz assumptions, which impose a uniform upper bound on the gradient. Notably, the inequality (\ref{eq:gaussian-pi-inequality}) is independent of the input dimension. It is a well-known fact that inequality (\ref{eq:gaussian-pi-inequality}) is tight, with equality achieved by affine functions \citep{klartag2025isoperimetric}.

Since DGMs concern mappings from a latent space to a high-dimensional target space, this motivates a multivariate extension of the Gaussian Poincar\'e inequality.

\begin{theorem}[Gaussian Poincar\'e inequality for vector-valued maps]
\label{lem:vector_poincare}
Let $Z\sim\mathcal N(0,I_d)$ and let $T:\mathbb R^d\to\mathbb R^m$ have components
$T=(T_1,\dots,T_m)$ with $T_j\in W^{1,2}(\gamma)$ for each $j$.
Let $J_T(z)\in\mathbb R^{m\times d}$ denote the Jacobian of $T$. Then:

\noindent\textbf{(i) Component-wise variance bounds}
For each $j=1,\dots,m$,
\begin{align*}
\Var\big(T_j(Z)\big)\le\E\big[\|\nabla T_j(Z)\|_2^2\big].
\end{align*}

\noindent\textbf{(ii) Covariance matrix bound}
For $\Cov(T(Z))\in\mathbb R^{m\times m}$, the covariance matrix of $T(Z)$, we have the following inequality:
\begin{align*}
\Cov\big(T(Z)\big)\preceq\E\big[\,J_T(Z)\,J_T(Z)^\top\big],
\end{align*}
where $\preceq$ denotes the Loewner order on the positive semidefinite matrices. Equivalently, for every $a\in\mathbb R^m$,
\begin{align*}
\Var\big(a^\top T(Z)\big)
&\le\E\big[\|J_T(Z)^\top a\|_2^2\big]
=a^\top \E\big[J_T(Z)J_T(Z)^\top\big]a.
\end{align*}
\noindent\textbf{(iii) Trace Bound}
In particular,
\begin{align*}
\tr\big[\Cov(T(Z))\big]
\le \E\big[\|J_T(Z)\|_F^2\big]
=\sum_{j=1}^m \E\big[\|\nabla T_j(Z)\|_2^2\big].
\end{align*}
\noindent\textbf{(iv) Entrywise covariance bound}
For $i,j\in\{1,\dots,m\}$,
\begin{align*}
|\Cov(T_i(Z),T_j(Z))|
&\le
\sqrt{\E\|\nabla T_i(Z)\|_2^2}\,
\sqrt{\E\|\nabla T_j(Z)\|_2^2}.
\end{align*}
\end{theorem}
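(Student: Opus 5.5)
All four parts follow from Lemma~\ref{thm:gaussian_poincare} applied to suitable scalar functions built from $T$. The only real work is checking that each such function lies in $W^{1,2}(\gamma)$ and that its weak gradient has the expected form.

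\textbf{Part (i).} Apply Lemma~\ref{thm:gaussian_poincare} with $f=T_j$, which lies in $W^{1,2}(\gamma)$ by hypothesis. This gives directly $\Var(T_j(Z))\le \E\|\nabla T_j(Z)\|_2^2$ with $C=1$.

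\textbf{Part (ii).} Fix $a\in\R^m$ and set $f_a=a^\top T=\sum_j a_jT_j$. Since $W^{1,2}(\gamma)$ is a vector space and weak differentiation is linear, $f_a\in W^{1,2}(\gamma)$ with weak gradient
\[
\nabla f_a=\sum_j a_j\nabla T_j=J_T^\top a .
\]
Lemma~\ref{thm:gaussian_poincare} then gives $\Var(a^\top T(Z))\le \E\|J_T(Z)^\top a\|_2^2$. Expanding, $\|J_T^\top a\|_2^2=a^\top J_TJ_T^\top a$. Linearity of expectation is justified because each entry of $J_TJ_T^\top$ is a sum of products $\partial_k T_i\,\partial_k T_j$, which are integrable by Cauchy--Schwarz since every $\nabla T_i\in L^2(\gamma)$. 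Also $\Var(a^\top T(Z))=a^\top\Cov(T(Z))a$, where the covariance is finite because each $T_j\in L^2(\gamma)$. Since the resulting inequality holds for every $a$, the difference $\E[J_TJ_T^\top]-\Cov(T(Z))$ is positive semidefinite, which is the stated Loewner bound.

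\textbf{Part (iii).} Take the trace of the Loewner inequality from (ii); equivalently, sum (ii) over $a=e_1,\dots,e_m$, or simply sum (i) over $j$. The identities $\tr(J_TJ_T^\top)=\|J_T\|_F^2=\sum_j\|\nabla T_j\|_2^2$ then give the stated form.

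\textbf{Part (iv).} Use the covariance Cauchy--Schwarz inequality,
\[
|\Cov(T_i,T_j)|\le\sqrt{\Var(T_i)\Var(T_j)},
\]
and bound each variance factor by part (i).

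The main (mild) obstacle is the integrability bookkeeping in (ii): confirming that $a^\top T$ is in the Sobolev class with the stated weak gradient, and that $\E[J_TJ_T^\top]$ is finite entrywise. Both reduce to linearity of weak derivatives and Cauchy--Schwarz, as indicated above.
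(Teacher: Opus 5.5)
Your proposal is correct and matches the paper's proof essentially step for step: the scalar Gaussian Poincar\'e inequality per component for (i), applied to $a^\top T$ with weak gradient $J_T^\top a$ for (ii), the trace of (ii) for (iii), and Cauchy--Schwarz combined with (i) for (iv). The integrability checks you add (linearity of weak derivatives, entrywise integrability of $J_TJ_T^\top$, finiteness of the covariance) are sound, and the paper leaves them implicit.
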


The above inequalities relate the covariance of $T(Z)$ with the gradients and Jacobians of the map $T$. As an immediate consequence, if the target $\mu = T_\# \nu$ exhibits large centered second moments in its components, the Jacobian of the pushforward map $T$ must be correspondingly large. We discuss the practical implications of these results for DGMs in Section \ref{sec:training_implications}.

\subsection{$L_p$--$L_q$ Poincar\'e Inequalities}
\label{subsec:lp_lq_poincare}

The Gaussian Poincar\'e inequality lower bounds the expected gradient components and Jacobian by the second moments of the target. Given the close relationship between higher-order moments and heavy-tailed behavior, it is natural to seek analogous lower bounds on the gradients and Jacobian of the pushforward map in terms of higher-order moments of the target distribution. To achieve this, we use more recently developed mathematical tools known as $L_p$--$L_q$ Poincar\'e inequalities \citep{milman2009role, saumard2019isoperimetric, chewi22}. An $L_p$--$L_q$ Poincar\'e inequality for a probability measure $\nu$ on $\mathbb{R}^d$ asserts the existence of a positive constant $C_{p,q}<\infty$ such that
\begin{equation} \label{eq:poincare-inequality-lp-lq}
    \| f - \E_\nu f \|_{L^p(\nu)} \le C_{p,q}  \big\| {\lVert \nabla f\rVert}_2 \big\|_{L^q(\nu)},
\end{equation}
for all $f \in W^{1,q}(\nu) \cap L^p(\nu)$ for $p, q \in [1, \infty]$ with $q \geq p$. In other words, (\ref{eq:poincare-inequality-lp-lq}) quantifies how the $p$th centered moment of the target lower bounds the $L^q$-norm of the pushforward's gradient. The classical Poincar\'e inequality corresponds to the case $p=q=2$. For log-concave $\nu$, there is a natural notion of equivalence between $L_p$--$L_q$ Poincar\'e inequalities with different $p$'s and $q$'s \citep{milman2009role, chewi22}. For our purposes, we will focus on the case where $p = q$, and refer to these as $L_p$ Poincar\'e inequalities.

It is well known that the standard Gaussian measure satisfies an $L_p$ Poincar\'e inequality for all $2 \le p< \infty$, with constant $C_{p}$ that depends on $p$ but not on $d$.
\begin{lemma}[$L_p$ Gaussian Poincar\'e inequalities \citep{bogachev1998gaussian}]
\label{thm:gaussian_lp_poincare}
Let $\gamma$ denote the standard Gaussian measure on $\mathbb{R}^d$ and let
$Z\sim \mathcal N(0,I_d)$. For any $2\le p <\infty$ there exists a finite constant
$C_{p}$ depending only on $p$ such that for all $f\in W^{1,p}(\gamma)$,
$$
\|f-\E_\gamma f\|_{L^p(\gamma)}
\;\le\;
C_{p}\,\big\|{\lVert\nabla f\rVert}_2\big\|_{L^p(\gamma)}.
$$
\end{lemma}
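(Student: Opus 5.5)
The plan is to prove the $L_p$ Gaussian Poincaré inequality through the Ornstein--Uhlenbeck (Mehler) interpolation, commonly called the Pisier or Maurey--Pisier argument. This approach gives a constant that depends only on $p$, such as $C_p=\tfrac{\pi}{2}\,\|G\|_{L^p}$ with $G\sim\mathcal N(0,1)$, and the constant is manifestly free of $d$.

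First, reduce to smooth $f$. Assume $f\in C_c^\infty(\mathbb R^d)$, or more generally $f$ smooth with bounded derivatives. The general case $f\in W^{1,p}(\gamma)$ then follows by density: smooth compactly supported functions are dense in $W^{1,p}(\gamma)$ because $\gamma$ has a smooth positive density. Both sides of the inequality are continuous under $W^{1,p}(\gamma)$ convergence, since $f\mapsto \E_\gamma f$ is continuous on $L^p(\gamma)$.

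Next, set up the interpolation. Let $X,Y$ be independent standard Gaussians on $\mathbb R^d$. For $\theta\in[0,\pi/2]$, put $X_\theta=X\sin\theta+Y\cos\theta$ and $X_\theta'=X\cos\theta-Y\sin\theta$. Then $(X_\theta,X_\theta')$ is again a pair of independent standard Gaussians for every $\theta$, by rotation invariance of $\gamma\otimes\gamma$. The fundamental theorem of calculus gives
\[
f(X)-f(Y)=\int_0^{\pi/2}\frac{d}{d\theta}f(X_\theta)\,d\theta=\int_0^{\pi/2}\langle\nabla f(X_\theta),X_\theta'\rangle\,d\theta .
\]
Jensen's inequality in $Y$ gives $\|f-\E_\gamma f\|_{L^p(\gamma)}\le \|f(X)-f(Y)\|_{L^p}$. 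Minkowski's integral inequality then bounds the latter by
\[
\int_0^{\pi/2}\big\|\langle\nabla f(X_\theta),X'_\theta\rangle\big\|_{L^p}\,d\theta=\tfrac{\pi}{2}\,\big\|\langle\nabla f(X),Y\rangle\big\|_{L^p},
\]
where the equality uses that the integrand's law does not depend on $\theta$.

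Finally, condition on $X$. Given $X$, the variable $\langle\nabla f(X),Y\rangle$ is a centered Gaussian with standard deviation $\|\nabla f(X)\|_2$. Hence $\E\big[|\langle\nabla f(X),Y\rangle|^p\mid X\big]=\|\nabla f(X)\|_2^p\,\E|G|^p$. Taking expectations yields the claim with $C_p=\tfrac{\pi}{2}(\E|G|^p)^{1/p}$, which depends only on $p$.

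The main technical obstacle is the approximation step. One must justify that the interpolation identity and the use of Minkowski's integral inequality pass to general $f\in W^{1,p}(\gamma)$, which requires checking density of smooth functions in the weighted Sobolev space. I would handle this by truncation and mollification: multiply by cutoffs $\chi(x/R)$, whose gradient contribution $\|f\|_{L^p(\gamma)}\,\|\nabla\chi\|_\infty/R$ vanishes as $R\to\infty$, and then mollify on compact sets, where $\gamma$ is comparable to Lebesgue measure. The condition $p\ge 2$ is not essential to this argument; it is only the range stated in the lemma.
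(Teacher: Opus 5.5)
Your argument is correct, and it takes a genuinely different route from the paper. The paper gives no argument of its own: it cites Corollary~1.7.3 of Bogachev's \emph{Gaussian Measures} for the inequality and leaves the value of $C_p$ implicit. The extension to the Sobolev class is treated only in the general density remark (via Barthe--Klartag) that follows the log-concave theorems.

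You instead give the self-contained Maurey--Pisier rotation argument, and each step is sound:
the identity $f(X)-f(Y)=\int_0^{\pi/2}\langle\nabla f(X_\theta),X_\theta'\rangle\,d\theta$ for $C^1$ functions; conditional Jensen; Minkowski's integral inequality; rotation invariance of $\gamma\otimes\gamma$; and the conditional Gaussianity of $\langle\nabla f(X),Y\rangle$ given $X$. Your truncation--mollification step correctly proves density of $C_c^\infty$ in $W^{1,p}(\gamma)$ as the paper defines it. One small point of framing: you do not need the interpolation identity itself for general $f$, only the final inequality. That passes to the limit by the continuity of both sides, which you already noted.

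What your route buys is an explicit, dimension-free constant $C_p=\tfrac{\pi}{2}(\E|G|^p)^{1/p}$, valid for every $1\le p<\infty$ and not only $p\ge 2$. Testing $f(x)=x_1$ shows that no constant below $(\E|G|^p)^{1/p}$ is possible. So your constant is sharp up to the factor $\pi/2$ and grows like $\sqrt{p}$. It also makes the constant $C_p'=C_p^p$ in the paper's vector-valued corollary explicit. The citation buys only brevity.
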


Lemma \ref{thm:gaussian_lp_poincare} provides a lower bound on the norm of the gradient of a scalar-valued pushforward map in terms of the higher centered moments of the target distribution. Since DGMs have high-dimensional targets, we develop the corresponding multivariate bound below.
\begin{corollary}[$L_p$ Gaussian Poincar\'e for vector-valued maps]
\label{cor:vector_gaussian_lp_poincare}
Let $T=(T_1,\dots,T_m):\mathbb R^d\to\mathbb R^m$ with $T_j\in W^{1,p}(\gamma)$ for some $p\ge 2$.
We have
$$
\sum_{j=1}^m \|T_j-\E_\gamma T_j\|_{L^p(\gamma)}^p
\le
C'_{p}\,\E_\gamma \|J_T\|_F^p.
$$
where $C'_{p} >0$ is some constant that depends on $p$.
\end{corollary}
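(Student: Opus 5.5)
The plan is to apply Lemma~\ref{thm:gaussian_lp_poincare} to each component $T_j$ separately and then sum, using the pointwise bound $\|\nabla T_j(z)\|_2\le \|J_T(z)\|_F$ to control the sum of the right-hand sides. Since each $T_j\in W^{1,p}(\gamma)$ with $p\ge 2$, Lemma~\ref{thm:gaussian_lp_poincare} applies to $f=T_j$. Raising both sides to the $p$th power gives, for each $j$,
\[
\|T_j-\E_\gamma T_j\|_{L^p(\gamma)}^p \le C_p^p\,\E_\gamma\|\nabla T_j\|_2^p .
\]
Here $\nabla T_j$ is the $j$th row of $J_T$.

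Summing over $j$, it remains to bound $\sum_{j=1}^m \|\nabla T_j(z)\|_2^p$ pointwise by $\|J_T(z)\|_F^p$. Set $a_j=\|\nabla T_j(z)\|_2^2\ge 0$, so that $\sum_j a_j=\|J_T(z)\|_F^2$. Because $p/2\ge 1$, we have the elementary superadditivity inequality $\sum_j a_j^{p/2}\le \big(\sum_j a_j\big)^{p/2}$. To justify it, normalize so that $\sum_j a_j=1$; then each $a_j\in[0,1]$ satisfies $a_j^{p/2}\le a_j$, and summing gives the claim. Hence
\[
\sum_j \|\nabla T_j(z)\|_2^p\le \|J_T(z)\|_F^p
\]
holds for almost every $z$.

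Integrating against $\gamma$ and combining with the summed component bounds yields the corollary with $C'_p=C_p^p$. This constant depends only on $p$; notably, it is independent of both $d$ and $m$.

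The main point requiring care is that the summation step loses nothing in $m$. Using instead the crude bound $\|\nabla T_j\|_2\le\|J_T\|_F$ term by term would introduce a factor of $m$, whereas superadditivity for $p\ge2$ removes it. The remaining measure-theoretic details are routine. The weak Jacobian is the matrix whose rows are the weak gradients $\nabla T_j$, so the pointwise inequality holds $\gamma$-a.e. If $\E_\gamma\|J_T\|_F^p$ is infinite, the bound is trivial; otherwise every quantity involved is finite by the assumption $T_j\in W^{1,p}(\gamma)$.
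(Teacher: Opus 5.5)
Your proposal is correct and follows the paper's proof essentially step for step. You apply Lemma~\ref{thm:gaussian_lp_poincare} componentwise, raise to the $p$th power, sum, and use the pointwise inequality $\sum_j \|\nabla T_j\|_2^p \le \|J_T\|_F^p$ for $p\ge 2$, which gives the same constant $C'_p=C_p^p$.
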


An immediate consequence of Corollary \ref{cor:vector_gaussian_lp_poincare} is that if the target distribution exhibits large centered $p$th moments $\E |T_j(Z)-\E T_j(Z)|^p$ for one or more components, then the expected $p$th power of the Jacobian norm must be correspondingly large.

\subsection{More General Log-Concave Distributions}
A natural question is whether the above results hold only for Gaussian measures. It turns out that similar results also hold for the more general class of log-concave measures $\nu \propto e^{-V}$ for $V$ convex. These include the Laplace and logistic distributions, uniform distributions on convex bodies, exponential distributions, and gamma distributions with shape parameter of at least one. Some of these distributions have been explored as alternative latent priors in DGMs \citep{floto2023tilted}. Definitions of log-concavity and strong log-concavity
are provided in Section~\ref{supp:log-concavity}
of the Supplement.

In the strongly log-concave regime where $V$ satisfies the lower bound $\nabla^2 V \succeq t \cdot I_d$ for $t > 0$, it is known that $C_{\mathrm{PI}}(\nu) \le 1/t$ \citep{klartag2025isoperimetric, bakry2013analysis}, implying Gaussian-like Poincar\'e inequalities. When such a curvature lower bound is not available, it is known that isotropic log-concave measures satisfy the Poincar\'e inequality with constant $C_{\mathrm{PI}}(\nu) \le C \log(ed)$ \citep{bobkov1999logconcave,klartag2023logarithmic}. As such, the dimension dependence is only logarithmic. If a conjecture known as the Kannan-Lov\'asz-Simonovits (KLS) conjecture holds true, then even this $\log(ed)$ factor may be removed \citep{klartag2022bourgain, lee2018kannan}. Hence, our results hold analogously for not just the typical case of Gaussian source distributions, but also for general log-concave measures.

In our setting, these results imply that all moment-to-gradient lower bounds derived from Poincar\'e or $L_p-L_q$-Poincar\'e inequalities extend verbatim to log-concave sources but with constants that are dependent on $\nu$. Concretely, we have the following theorem.
\begin{theorem}[Log-concave Poincar\'e inequalities \citep{bobkov1999logconcave,klartag2023logarithmic}]
\label{thm:log_concave_poincare}
Let $\nu$ be an isotropic log-concave measure on $\mathbb{R}^d$. Then for all smooth $f \in W^{1,2}(\nu)$, we have \begin{align*}
\Var_\nu(f) \le C_{\mathrm{PI}}(\nu)\E_\nu\|\nabla f\|_2^2,
\end{align*}
where $0<C_{\mathrm{PI}}(\nu)\le C\log(ed)$ for a universal constant $C>0$.
\end{theorem}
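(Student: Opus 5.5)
This statement is a cited result (Bobkov 1999 for the $C\sqrt{d}$-type and $\log$-type bounds; Klartag 2023 for the $\log(ed)$ bound). So the plan is to assemble it from known ingredients rather than to reprove KLS-type estimates from scratch. The argument has three steps.

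\textbf{Step 1: reduce to a Cheeger/isoperimetric bound.} For log-concave measures, the Poincar\'e constant is equivalent, up to universal factors, to the inverse square of the Cheeger constant. One direction is Cheeger's inequality, $C_{\mathrm{PI}}(\nu)\le 4/h(\nu)^2$, which holds for any measure. The reverse direction is Buser--Ledoux: for log-concave $\nu$, $h(\nu)\gtrsim C_{\mathrm{PI}}(\nu)^{-1/2}$. It therefore suffices to bound $C_{\mathrm{PI}}$ directly, or $1/h^2$, by $C\log(ed)$ for isotropic log-concave $\nu$.

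\textbf{Step 2: Eldan's stochastic localization.} Run the localization process $d\nu_t(x)=\nu_t(x)\langle x-a_t, dW_t\rangle$, with $\nu_0=\nu$ and $a_t$ the barycenter of $\nu_t$. The process $\nu_t$ is a measure-valued martingale, so for a fixed test function $f$ the variance decomposes as
\[
\Var_\nu(f)=\E\,\Var_{\nu_t}(f)+\E\,\Var\big(\E_{\nu_t} f\big).
\]
Since $\nu_t$ is $t$-strongly log-concave (it is $\nu$ tilted by a Gaussian factor $e^{-t|x|^2/2+\langle c_t,x\rangle}$), Bakry--\'Emery gives $\Var_{\nu_t}(f)\le t^{-1}\E_{\nu_t}\|\nabla f\|^2$. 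The second term is controlled through the covariance $A_t$ of $\nu_t$, using
\[
d\,\E_{\nu_t}f=\langle \Cov_{\nu_t}(x,f),dW_t\rangle,
\qquad
|\Cov_{\nu_t}(x,f)|^2\le \|A_t\|_{\mathrm{op}}\Var_{\nu_t}(f).
\]

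\textbf{Step 3: control $\|A_t\|_{\mathrm{op}}$, the main obstacle.} The key fact needed is that $\|A_t\|_{\mathrm{op}}\le C$ with high probability up to time $t\approx 1/\log d$. Klartag (2023) achieves this using a bound on the tensor $\E\langle x,\theta\rangle\, x\otimes x$ together with an improved $\log$-concavity estimate. Given this fact, integrating from $0$ to $t_0\asymp 1/\log(ed)$ yields $\Var_\nu(f)\lesssim t_0^{-1}\E\|\nabla f\|^2 \lesssim \log(ed)\,\E\|\nabla f\|^2$. I would not reprove this operator-norm control; I would cite \citep{klartag2023logarithmic} for it and present only the reduction above.

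Finally, positivity of $C_{\mathrm{PI}}(\nu)$ is trivial: testing the inequality with linear $f$ in an isotropic direction gives $C_{\mathrm{PI}}\ge 1$. The extension from smooth $f$ to all of $W^{1,2}(\nu)$ follows by density of smooth functions in $W^{1,2}(\nu)$ for log-concave $\nu$.
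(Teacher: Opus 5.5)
Your reduction in Steps 2--3 has a genuine gap. You start from $\Var_\nu(f)=\E\Var_{\nu_{t_0}}(f)+\E\int_0^{t_0}|\Cov_{\nu_s}(x,f)|^2\,ds$ and $|\Cov_{\nu_s}(x,f)|^2\le\|A_s\|_{\mathrm{op}}\Var_{\nu_s}(f)$. To absorb the second term into the left side, you need $\E[\|A_s\|_{\mathrm{op}}\Var_{\nu_s}(f)]\le C\,\E\Var_{\nu_s}(f)$. That follows from an \emph{almost sure} bound on $\|A_s\|_{\mathrm{op}}$, but not from the high-probability bound you cite; almost surely you only have $A_s\preceq s^{-1}I_d$. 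For a general $f\in W^{1,2}(\nu)$, nothing prevents $\Var_{\nu_s}(f)$ from being large exactly on the exceptional event. All you know is that $\E[\nu_s\mathbf 1_{\mathrm{bad}}]$ is a measure dominated by $\nu$ with small total mass. Bounding $\int(f-c)^2\,d\,\E[\nu_s\mathbf 1_{\mathrm{bad}}]$ by a small multiple of $\Var_\nu(f)$ would need reverse H\"older control on $f$, which you do not have. So ``integrating from $0$ to $t_0$ yields $\Var_\nu(f)\lesssim t_0^{-1}\E\|\nabla f\|_2^2$'' does not follow.

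The standard repair restricts the test functions and then transfers back, in one of two ways.
\begin{enumerate}
\item Take $f$ to be $1$-Lipschitz. Then $\Var_{\nu_s}(f)\le 1/s$ almost surely, so the exceptional event contributes at most $\int_0^{t_0}s^{-2}\,\mathbb P(\|A_s\|_{\mathrm{op}}>2)\,ds$. This is harmless once the failure probability decays fast enough as $s\downarrow0$. Then invoke E.~Milman's theorem: for log-concave measures, concentration of Lipschitz functions implies a Poincar\'e inequality with comparable constant.
\item Run the martingale $g_t=\nu_t(E)$ for Borel sets $E$. Here $|\Cov_{\nu_s}(x,\mathbf 1_E)|^2\le\|A_s\|_{\mathrm{op}}\,g_s(1-g_s)$, so high-probability control of $\int_0^{t_0}\|A_s\|_{\mathrm{op}}\,ds$ already keeps $g_{t_0}$ comparable to $\nu(E)$ with probability bounded below. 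Gaussian-type isoperimetry of the $t_0$-strongly log-concave $\nu_{t_0}$, together with $\nu(E^\varepsilon\setminus E)=\E\,\nu_{t_0}(E^\varepsilon\setminus E)$, gives $h_\nu\gtrsim\sqrt{t_0}$. Cheeger's inequality $C_{\mathrm{PI}}(\nu)\le 4/h_\nu^2$ then finishes. This is exactly what your Step 1 sets up, but as written you never use it.
\end{enumerate}

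The paper's proof is a different and much shorter route, with no stochastic localization. For $d=1$ it cites Bobkov's Corollary~4.3, giving $C_{\mathrm{PI}}(\nu)\le 12\Var_\nu(x)=12$. For $d\ge 2$ it cites Klartag's Theorem~1.2 with (1.4), giving $C_{\mathrm{PI}}(\nu)\le C_0\log d$. It then sets $C=\max\{12,C_0\}$. That case split is the only real content of the argument, and your proposal skips it. The estimate you cite is stated in terms of $\log d$, which vanishes at $d=1$, so it does not by itself justify $t_0\asymp1/\log(ed)$ there. You need a separate one-dimensional input, which is what Bobkov's paper is used for; the logarithmic bound is Klartag's, not Bobkov's. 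Since the statement is itself a citation, re-deriving Klartag's reduction only buys an exposition of the mechanism. If you want that, present one of the corrected versions above; otherwise cite the final theorems as the paper does.
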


Similar results also extend to the $L_p-L_q$ case. We state the case for $p = q$ below.
\begin{theorem}[$L_p$-Poincar\'e inequalities for isotropic log-concave measures \citep{milman2009role}]
\label{thm:log_concave_Lp_poincare}
Let $\nu$ be an isotropic log-concave measure on $\mathbb{R}^d$. Then for all smooth $f \in W^{1,p}(\nu)$ where $2 \le p < \infty$, we have
\begin{align*}
\|f-\E_\nu f\|_{L^p(\nu)} \le C_{p,d, \nu} \big\|\|\nabla f\|_{2}\big\|_{L^p(\nu)}.
\end{align*}
where $C_{p,d, \nu} > 0$ is a constant that can depend on $\nu, p$ and $d$.
\end{theorem}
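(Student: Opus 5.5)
The statement is an $L_p$ Poincaré inequality for isotropic log-concave $\nu$ with a constant that may depend on $p$, $d$ and $\nu$. The plan is to reduce it to the $L_2$ Poincaré inequality of Theorem~\ref{thm:log_concave_poincare} by the standard "power trick". Since $p\ge 2$ and $\nu$ is a probability measure, Hölder gives $\|\|\nabla f\|_2\|_{L^2(\nu)}\le \|\|\nabla f\|_2\|_{L^p(\nu)}$. So it suffices to control $\|f-\E_\nu f\|_{L^p(\nu)}$ by a constant times the $L^p$ norm of the gradient. Before that, I would extend Theorem~\ref{thm:log_concave_poincare} from smooth $f$ to all of $W^{1,2}(\nu)$ by density of smooth functions. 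This is routine for log-concave $\nu$, whose support interior $\Omega$ is convex.

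The core step is the following. Take $g=f-\E_\nu f$ and apply the $L_2$ Poincaré inequality to $h=|g|^{p/2}$. By the chain rule for Sobolev functions, $\|\nabla h\|_2=\tfrac p2 |g|^{p/2-1}\|\nabla f\|_2$. This gives
\begin{align*}
\E_\nu|g|^p - \big(\E_\nu |g|^{p/2}\big)^2 \le C_{\mathrm{PI}}(\nu)\,\tfrac{p^2}{4}\,\E_\nu\big[|g|^{p-2}\|\nabla f\|_2^2\big].
\end{align*}
Hölder with exponents $p/(p-2)$ and $p/2$ bounds the right-hand side by $C_{\mathrm{PI}}\tfrac{p^2}{4}\|g\|_p^{p-2}\|\nabla f\|_{L^p}^2$. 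For the subtracted term $(\E|g|^{p/2})^2$, I would interpolate between $L^2$ and $L^p$, or more simply induct on $p$. Writing $\|g\|_{p/2}^{p}$, use $\|g\|_{p/2}\le \|g\|_2^{\theta}\|g\|_p^{1-\theta}$ together with the $L_2$ bound $\|g\|_2\le C_{\mathrm{PI}}^{1/2}\|\nabla f\|_{L^2}\le C_{\mathrm{PI}}^{1/2}\|\nabla f\|_{L^p}$. Young's inequality then lets me absorb the $\|g\|_p$ powers into the left-hand side. The result is $\|g\|_p\le C_{p}\,C_{\mathrm{PI}}(\nu)^{1/2}\|\nabla f\|_{L^p}$, with $C_p\lesssim p$. (For $p<4$, $p/2<2$ and plain Hölder $\|g\|_{p/2}\le\|g\|_2$ already suffices.)

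Combined with $C_{\mathrm{PI}}(\nu)\le C\log(ed)$, this yields the claim with $C_{p,d,\nu}\lesssim p\sqrt{\log(ed)}$. That is stronger than required, since the statement only asks that the constant be finite. Alternatively, one can simply cite E.~Milman's equivalence of $L_p$--$L_q$ Poincaré/isoperimetric inequalities for log-concave measures.

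The main obstacle is making the absorption step rigorous. It needs $\|g\|_p<\infty$ a priori, which follows from $f\in L^p$, and it needs $|g|^{p/2}\in W^{1,2}(\nu)$. If $f$ is smooth but $|g|^{p/2}$ is only Lipschitz near $\{g=0\}$, I would approximate $|g|^{p/2}$ by $(g^2+\varepsilon)^{p/4}$. Truncating $f$ at level $\pm M$ ensures integrability. One then passes to the limits $\varepsilon\to0$ and $M\to\infty$ by monotone and dominated convergence.
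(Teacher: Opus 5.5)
Your argument is correct, but it is not the route the paper takes. The paper's proof is essentially a citation. It invokes Theorem~2.4 and Lemma~2.1 of Milman to obtain $\|f-\mathrm{med}_\nu f\|_{L^p(\nu)}\le \widetilde C_{p,d,\nu}\big\|\|\nabla f\|_2\big\|_{L^p(\nu)}$, centered at a median, with a constant depending on $p$ and $C_{\mathrm{PI}}(\nu)$ that is never made explicit. It then recenters at the mean via $|\E_\nu f-\mathrm{med}_\nu f|\le\|f-\mathrm{med}_\nu f\|_{L^p(\nu)}$, losing a factor $2$.

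You instead derive the $L^p$ bound directly from the $L^2$ inequality of Theorem~\ref{thm:log_concave_poincare}. You apply it to $|f-\E_\nu f|^{p/2}$, then use H\"older, interpolation and absorption. The bookkeeping checks out: with $x=\|g\|_{L^p}/(C_{\mathrm{PI}}^{1/2}\|\|\nabla f\|_2\|_{L^p})$, dividing by $x^{p-2}$ gives $x^2\le 1+p^2/4$ whenever $x\ge 1$, in both the $p\le 4$ and $p>4$ cases.

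What your route buys:
\begin{itemize}
\item It is self-contained and centered at the mean from the start.
\item It gives the explicit constant $C_{p,d,\nu}\le\sqrt{1+p^2/4}\,C_{\mathrm{PI}}(\nu)^{1/2}\lesssim p\sqrt{\log(ed)}$.
\item It shows that log-concavity enters only through the bound on $C_{\mathrm{PI}}(\nu)$. The passage from the $L^2$ to the $L^p$ inequality for $p\ge 2$ holds for any probability measure satisfying an $L^2$ Poincar\'e inequality.
\end{itemize}

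The cost is that you must apply the $L^2$ inequality to $|g|^{p/2}$, which is $C^1$ but generally not $C^\infty$. So the extension of Theorem~\ref{thm:log_concave_poincare} beyond smooth functions that you mention is genuinely needed in your argument. The paper supplies exactly this density argument, via Barthe--Klartag, in the remark following the proof. The paper's route is shorter and places the result inside Milman's $L_p$--$L_q$ framework, where convexity matters for the harder directions, but it treats the whole step as a black box.

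Finally, your $\varepsilon$-regularization and truncation at $\pm M$ are unnecessary. For $p>2$ the map $t\mapsto |t|^{p/2}$ is already $C^1$. Moreover, $f\in L^p(\nu)$ together with your own H\"older bound on $\E_\nu\|\nabla h\|_2^2$ already gives $h\in W^{1,2}(\nu)$ and the a priori finiteness $\|g\|_{L^p}<\infty$ needed for absorption.
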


The proof for $L_p-L_q$-Poincar\'e inequalities for log-concave measures is first found in the seminal work of \cite{milman2009role}, and an exposition is in \cite{chewi22}. Thus, our results and analysis for the case of a Gaussian source distribution follow through to the much more general case of a log-concave source. In this scenario, heavy-tailed targets with large moments once again force large Jacobian norms of any pushforward map.

\subsection{Approximate Pushforward Maps}
\label{subsec:tail-stability}

Our previous bounds concern an exact pushforward map. In practice, however, a learned model only approximates the target. It is therefore useful to ask whether the same conclusion continues to hold for an approximation. For the second-moment bound, Wasserstein distance provides a direct answer. For measures $P,Q$ with finite second moments, define
\[
W_2(P,Q)=\inf\big\{(\E\|X-Y\|_2^2)^{1/2}: X\sim P,\ Y\sim Q\big\},
\]
where the infimum is over couplings of the two measures.

\begin{proposition}[Stability of the second-moment bound]
\label{ht:stability}
Let $T$ have components in $W^{1,2}(\gamma)$, let $\widehat\mu=T_\#\gamma$, and suppose that the target $\mu$ has finite second moments. If $W_2(\widehat\mu,\mu)\le\varepsilon$, then
\[
\big(\E_\gamma\|J_T\|_F^2\big)^{1/2}
\ge \big(\sqrt{\tr[\Cov(\mu)]}-\varepsilon\big)_+,
\]
where $(a)_+=\max(a,0)$.
\end{proposition}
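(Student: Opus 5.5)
The plan is to combine the trace bound, part (iii) of Theorem~\ref{lem:vector_poincare}, with the fact that the centered $L^2$ norm is $1$-Lipschitz with respect to $W_2$. Write $\sigma(P)=\sqrt{\tr[\Cov(P)]}$ for a measure $P$ with finite second moments.

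First, Theorem~\ref{lem:vector_poincare}(iii) applied to $T$ with $Z\sim\gamma$ gives
\[
\sigma(\widehat\mu)^2=\tr[\Cov(T(Z))]\le \E_\gamma\|J_T\|_F^2 .
\]
Since each $T_j\in W^{1,2}(\gamma)\subset L^2(\gamma)$, the measure $\widehat\mu$ has finite second moments. Hence $W_2(\widehat\mu,\mu)$ is well defined, and it suffices to show
\[
\sigma(\widehat\mu)\ge \sigma(\mu)-\varepsilon .
\]
Because $(\E_\gamma\|J_T\|_F^2)^{1/2}\ge 0$ always, taking the positive part then costs nothing.

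The key step is the inequality $|\sigma(P)-\sigma(Q)|\le W_2(P,Q)$. Take any coupling $(X,Y)$ with $X\sim P$ and $Y\sim Q$, and work in the Hilbert space $L^2(\Omega;\R^m)$. Then $\sigma(P)=\|X-\E X\|_{L^2}$ and $\sigma(Q)=\|Y-\E Y\|_{L^2}$. The reverse triangle inequality gives
\[
|\sigma(P)-\sigma(Q)|\le \|(X-Y)-\E(X-Y)\|_{L^2}\le \|X-Y\|_{L^2},
\]
where the last step holds because centering is an orthogonal projection in $L^2$ and so does not increase the norm (equivalently, $\Var\le$ second moment). Taking the infimum over couplings gives the claim.

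Combining this with the first step,
\[
(\E_\gamma\|J_T\|_F^2)^{1/2}\ge \sigma(\widehat\mu)\ge \sigma(\mu)-W_2(\widehat\mu,\mu)\ge \sigma(\mu)-\varepsilon .
\]
Together with nonnegativity of the left-hand side, this yields the stated bound with $(\cdot)_+$.

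I do not expect a real obstacle. The only points needing care are that $\widehat\mu$ and $\mu$ have finite second moments, so that $W_2$ and the covariances are defined, and that the infimum in $W_2$ need not be attained. The latter is harmless, since the Lipschitz bound holds for every coupling and so passes to the infimum.
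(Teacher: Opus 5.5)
Your proof is correct and follows the paper's argument essentially step for step. The paper also fixes an arbitrary coupling, applies the $L^2$ triangle inequality to $X-\E X=(Y-\E Y)+((X-Y)-\E(X-Y))$, bounds the centered difference by $\E\|X-Y\|_2^2$, takes the infimum over couplings, and then invokes the trace bound of Theorem~\ref{lem:vector_poincare}(iii); your remark that $\widehat\mu$ has finite second moments because each $T_j\in L^2(\gamma)$ makes explicit a point the paper leaves implicit.
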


Proposition \ref{ht:stability} shows that an approximation in $W_2$ \emph{cannot} eliminate the gradient requirement when the error is small relative to the square root of the target's centered second moment.  The proof of Proposition~\ref{ht:stability} is given in Section~\ref{app:tail-stability} of the Supplement.


\section{Gradient Bounds for Multimodal Targets}
\label{sec:multimodal}

While moment bounds quantify the derivative growth associated with heavy-tailed targets, they do not generally capture the difficulty of transporting to well-separated modes. 
Large moments in the target distribution are not the only source of large Jacobians. 
In this section, we show that accurate representation of target distributions with well-separated modes \emph{also} requires large Jacobians. 

Here, the relevant property is the presence of two regions with substantial probability separated by a low-probability region, which we refer to as a valley. An accurate transport map must send substantial probability mass to each of the two high probability regions while sending little mass to the valley. In what follows, we use isoperimetric inequalities to show that satisfying these requirements forces large Jacobian norms on the transport map, with quantitative bounds determined by the separation and the probability masses of these regions. This generalizes and complements analyses that rely on global Lipschitz assumptions \citep{salmona2022can,tanielian2020}.
All proofs in this section are given in Section~\ref{mm:appendix} of the Supplement.

\subsection{A Localized Jacobian Bound}
\label{mm:localized}

Let $\gamma$ be the standard Gaussian measure on $\R^d$, and let
$T:\R^d\to\R^m$ be locally Lipschitz. We write $P=T_\#\gamma$ for the learned distribution. Local Lipschitz continuity ensures that the
Jacobian $J_T$ exists almost everywhere, but we impose no uniform
bound on its norm. Here, $\|J_T\|_{\mathrm{op}}$ denotes the operator
norm induced by the Euclidean norms.

For a nonempty closed set $A\subseteq\R^m$, define
\begin{equation}
\label{mm:geometry}
A^t=\{y:\operatorname{dist}(y,A)\leq t\},\qquad
U=\{y:0<\operatorname{dist}(y,A)<\Delta\},
\end{equation}
where $\Delta>0$ and the set $U$ separates $A$ from the complement of
$A^\Delta$. We will apply the bound when the target assigns substantial
probability to both $A$ and the complement of
$A^\Delta$ but little probability to $U$.
Let $\varphi$ and $\Phi$ denote the density and cumulative distribution function (CDF) of
a standard univariate Gaussian. Define the Gaussian isoperimetric profile function as
$I(s)=\varphi(\Phi^{-1}(s))$ for $0<s<1$, with
$I(0)=I(1)=0$. Intuitively, the Gaussian isoperimetric profile $I(s)$ quantifies the smallest possible size of the boundary of a region with probability $s$. It expresses the fact that dividing a Gaussian distribution into two regions of substantial probability requires a boundary of non-negligible size. It is a known mathematical fact that this function is symmetric about
$1/2$ and increasing on $[0,1/2]$ \citep{milman2009role}. A general definition and further interpretation of
isoperimetric profiles are given in
Section~\ref{supp:isoperimetric-profile} of the Supplement.

\begin{theorem}[Localized Jacobian bound]
\label{mm:theorem}
Let $T:\R^d\to\R^m$ be locally Lipschitz and $P=T_\#\gamma$.
For every nonempty closed $A\subseteq\R^m$ and every $\Delta>0$,
with $U$ as in \eqref{mm:geometry},
\begin{equation}
\label{mm:master}
\int_{T^{-1}(U)}\|J_T(z)\|_{\mathrm{op}}\,d\gamma(z)
\geq \int_0^\Delta I\big(P(A^t)\big)\,dt
\geq \Delta\cdot I(b),
\end{equation}
where $b=\min\{P(A),1-P(A^\Delta)\}$. \end{theorem}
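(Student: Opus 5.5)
The plan is to pull the problem back to the Gaussian source through the scalar function $f(z)=\operatorname{dist}(T(z),A)$ and then apply the coarea formula together with the Gaussian isoperimetric inequality.

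\emph{Regularity of $f$.} The map $y\mapsto\operatorname{dist}(y,A)$ is $1$-Lipschitz and $T$ is locally Lipschitz, so $f$ is locally Lipschitz. By Rademacher's theorem $\nabla f$ exists almost everywhere. Where both $f$ and $T$ are differentiable, the chain-rule bound gives $\|\nabla f(z)\|_2\le\|J_T(z)\|_{\mathrm{op}}$. To make this rigorous I would compare difference quotients, using $|f(z+h)-f(z)|\le\|T(z+h)-T(z)\|_2$, rather than differentiating the distance function.

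\emph{Restriction to the valley.} Note that $\{0<f<\Delta\}=T^{-1}(U)$. This gives
\[
\int_{T^{-1}(U)}\|J_T\|_{\mathrm{op}}\,d\gamma\ \ge\ \int_{\{0<f<\Delta\}}\|\nabla f\|_2\,\varphi_d\,dz,
\]
where $\varphi_d$ is the Gaussian density. Applying the coarea formula for locally Lipschitz functions to the right-hand side yields
\[
\int_0^\Delta\int_{\{f=t\}}\varphi_d\,d\mathcal H^{d-1}\,dt .
\]
For a.e.\ $t$, the level set $\{f=t\}$ contains the essential (reduced) boundary of the sublevel set $E_t=\{f\le t\}=T^{-1}(A^t)$. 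Hence the inner integral is at least the Gaussian perimeter of $E_t$.

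\emph{Isoperimetry.} The Gaussian isoperimetric inequality (Borell, Sudakov--Tsirelson) in perimeter form states $\mathrm{Per}_\gamma(E)\ge I(\gamma(E))$. Since $\gamma(E_t)=P(A^t)$, integrating over $t\in(0,\Delta)$ gives the first inequality in \eqref{mm:master}.

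\emph{The final bound.} The function $t\mapsto P(A^t)$ is nondecreasing with $P(A)\le P(A^t)\le P(A^\Delta)$ for $t\in(0,\Delta)$. Every value $s=P(A^t)$ therefore lies in $[P(A),P(A^\Delta)]\subseteq[b,1-b]$. Because $I$ is symmetric about $1/2$ and increasing on $[0,1/2]$, we get $I(s)\ge I(b)$ on that interval. Integrating over $t$ gives $\Delta\, I(b)$. If $b=0$ the bound is trivial.

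\emph{Main obstacle.} The main technical obstacle is the coarea/perimeter step in low regularity. The issue is justifying that, for a.e.\ $t$, $\int_{\{f=t\}}\varphi_d\,d\mathcal H^{d-1}$ dominates the Gaussian perimeter of $E_t$, and that the isoperimetric inequality applies to such sets. A finiteness issue also arises: $f$ need not be globally Lipschitz, and $\|\nabla f\|$ need not be integrable.

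To avoid the geometric-measure-theory subtleties, I would instead use the functional form of Gaussian isoperimetry. Take the truncation $g=\min(f,\Delta)$, and use the Bobkov-type co-area inequality for locally Lipschitz functions with truncations. This is $\int\|\nabla g\|\,d\gamma\ge\int_{-\infty}^{\infty}I(\gamma(g>t))\,dt$, obtained from the functional isoperimetric inequality applied to Lipschitz approximations $\min(1,\max(0,(g-t)/\epsilon))$. Then let $\epsilon\to0$ by Fatou's lemma.

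Since $\nabla g=0$ a.e.\ on $\{f=0\}\cup\{f\ge\Delta\}$, the left side localizes to $T^{-1}(U)$. Local integrability is handled by exhausting with balls, or by noting that the statement is trivial if the left side is infinite.
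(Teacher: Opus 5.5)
Your proposal is correct, and the route you settle on is exactly the paper's: the truncated function $g=\min\{\operatorname{dist}(T(\cdot),A),\Delta\}$, the functional Gaussian coarea inequality $\int\|\nabla g\|_2\,d\gamma\ge\int_{\R} I(\gamma(\{g>t\}))\,dt$, localization through $\nabla g=0$ a.e.\ off $T^{-1}(U)$, and finally $P(A^t)\in[b,1-b]$. The only difference is that you sketch a derivation of that functional inequality from Bobkov's inequality and Fatou's lemma, whereas the paper simply cites it (Borell's isoperimetry plus the coarea formula, as in Caselles--Miranda--Novaga).
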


Note that no integrability assumption on $J_T$ is required; the integral on
the left-hand side may be infinite.
Theorem~\ref{mm:theorem} identifies where the map must have large
Jacobian norms. Intuitively, $I(b)$ gives the minimum size of a boundary needed to partition the source space into two regions each with probability mass at least $b$. We therefore require a contribution of at least
$\Delta \cdot I(b)$ to the Jacobian from the points mapped into the valley $U$.
For fixed $b$, this lower bound increases linearly with the separation $\Delta$.

We stress that our results do not depend on the dimension of the
source or target distributions. The proof of Theorem~\ref{mm:theorem}
 applies Gaussian isoperimetry to
the level sets of the distance from $T(z)$ to $A$ and integrates the
result over distances from $0$ to $\Delta$. Proposition~\ref{mm:logconcave} extends these localized and approximation bounds to general log-concave sources, with a source-dependent constant.

\subsection{Approximation of Multimodal Targets}
\label{mm:approximation}

We next apply the localized Jacobian bound \eqref{mm:master} in
Theorem~\ref{mm:theorem} to maps that approximate a target distribution $\mu$. Define
the total variation distance by
$\operatorname{TV}(P,\mu)=\sup_B|P(B)-\mu(B)|$, where the supremum
is over Borel sets. Suppose that for the sets in
\eqref{mm:geometry},
\begin{equation}
\label{mm:target-geometry}
\mu(A)\geq w_1>0,\qquad
\mu\big((A^\Delta)^c\big)\geq w_2>0,\qquad
\mu(U)\leq\delta.
\end{equation}
These conditions describe the geometry needed for the result without
requiring a density or a mixture representation.

\begin{corollary}[Jacobian moments for a small-probability valley]
\label{mm:corollary}
Suppose that \eqref{mm:target-geometry} holds, and write
$w=\min\{w_1,w_2\}$. Let $T:\R^d\to\R^m$ be locally Lipschitz,
with $P=T_\#\gamma$ and $\operatorname{TV}(P,\mu)\leq\varepsilon<w$.
Then
\begin{equation}
\label{mm:localized-mass}
P(U)\leq\delta+\varepsilon,\qquad
\int_{T^{-1}(U)}\|J_T\|_{\mathrm{op}}\,d\gamma
\geq\Delta I(w-\varepsilon).
\end{equation}
If $\delta+\varepsilon>0$, then, for every $q>1$,
\begin{equation}
\label{mm:moment-bound}
\begin{split}
\big(\E_\gamma\|J_T\|_{\mathrm{op}}^q\big)^{1/q}
&\geq
\left(\int_{T^{-1}(U)}\|J_T\|_{\mathrm{op}}^q\,d\gamma\right)^{1/q}
\\
&\geq
\frac{\Delta I(w-\varepsilon)}
     {(\delta+\varepsilon)^{1-1/q}}.
\end{split}
\end{equation}
\end{corollary}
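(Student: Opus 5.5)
The proof reduces to Theorem~\ref{mm:theorem} followed by a H\"older step; I would carry it out in three parts.

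\textbf{Mass of the valley.} The bound $P(U)\le\delta+\varepsilon$ follows directly from the definition of total variation:
\[
P(U)\le\mu(U)+\operatorname{TV}(P,\mu)\le\delta+\varepsilon .
\]
Applying the same definition to the sets $A$ and $(A^\Delta)^c$ gives
\[
P(A)\ge\mu(A)-\varepsilon\ge w-\varepsilon, \qquad 1-P(A^\Delta)=P\big((A^\Delta)^c\big)\ge w-\varepsilon .
\]
Hence $b=\min\{P(A),1-P(A^\Delta)\}\ge w-\varepsilon>0$.

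\textbf{Localized Jacobian mass.} Theorem~\ref{mm:theorem} gives
\[
\int_{T^{-1}(U)}\|J_T\|_{\mathrm{op}}\,d\gamma\ge\Delta I(b).
\]
To replace $I(b)$ by $I(w-\varepsilon)$ I use monotonicity of $I$. The profile $I$ is symmetric about $1/2$ and increasing on $[0,1/2]$. We have $b\ge w-\varepsilon$, and also $b\le 1/2$. Indeed, $A\subseteq A^\Delta$ gives $P(A)+P\big((A^\Delta)^c\big)\le 1$, so the smaller of the two is at most $1/2$. Also $w-\varepsilon\le b\le 1/2$. Therefore both $b$ and $w-\varepsilon$ lie in $[0,1/2]$ with $w-\varepsilon\le b$, so $I(b)\ge I(w-\varepsilon)$. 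This is the one step that needs care: monotonicity alone would not suffice if $b$ could exceed $1/2$, and the containment $A\subseteq A^\Delta$ is what rules that out.

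\textbf{Moment bound.} The first inequality is trivial, since $T^{-1}(U)\subseteq\R^d$ and the integrand is nonnegative. For the second, apply H\"older's inequality on $T^{-1}(U)$ with exponents $q$ and $q/(q-1)$:
\[
\int_{T^{-1}(U)}\|J_T\|_{\mathrm{op}}\,d\gamma\le\Big(\int_{T^{-1}(U)}\|J_T\|_{\mathrm{op}}^q\,d\gamma\Big)^{1/q}\gamma\big(T^{-1}(U)\big)^{1-1/q}.
\]
Here $\gamma(T^{-1}(U))=P(U)\le\delta+\varepsilon$. Combining this with the localized bound and dividing by $(\delta+\varepsilon)^{1-1/q}>0$ gives the claim.

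Two edge cases need a remark. If the left-hand $q$-integral is infinite, the inequality holds trivially. If $P(U)=0$, the localized bound forces $0\ge\Delta I(w-\varepsilon)>0$, which is impossible. So $P(U)>0$ automatically, and the H\"older step is valid as stated.
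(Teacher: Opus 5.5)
Your proof is correct and follows the paper's route: total-variation bounds on $A$, $(A^\Delta)^c$ and $U$ give $b\ge w-\varepsilon$, and monotonicity of $I$ on $[0,1/2]$ (with $b\le 1/2$) upgrades Theorem~\ref{mm:theorem} to $\Delta I(w-\varepsilon)$. H\"older on $T^{-1}(U)$ with $\gamma(T^{-1}(U))=P(U)\le\delta+\varepsilon$ then gives the moment bound; the only cosmetic difference is that you bound $P(U)^{1-1/q}$ by $(\delta+\varepsilon)^{1-1/q}$ before dividing (so your $P(U)>0$ remark is not actually needed), whereas the paper first shows $P(U)>0$ and divides by $P(U)^{1-1/q}$.
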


Bound~\eqref{mm:localized-mass} implies
$\E_\gamma\|J_T\|_{\mathrm{op}}\geq
\Delta I(w-\varepsilon)$. For fixed $w$ and $\varepsilon$, this lower
bound increases linearly with the separation $\Delta$.
Bound~\eqref{mm:moment-bound} additionally quantifies the effect
of a small upper bound $\delta$ on the probability $\mu(U)$
assigned to the separating region $U$. The Jacobian
contribution $\Delta I(w-\varepsilon)$ required by
\eqref{mm:localized-mass} must come from the source points mapped into
$U$, whose total probability is at most $\delta+\varepsilon$.
Concentrating this Jacobian contribution on a set of small probability
forces the higher Jacobian moments to be large. In particular, for each
fixed $q>1$, bound~\eqref{mm:moment-bound} diverges as
$\delta,\varepsilon\to0$ if $\Delta$ and $w$ remain bounded away from zero.

We can obtain similar results for log-concave sources, as formalized in the next proposition. For a probability measure $\nu$ with a log-concave density on $\R^d$, its Cheeger constant $h_\nu>0$ is the largest constant satisfying
\begin{equation}
\label{mm:cheeger-functional}
h_\nu\inf_{c\in\R}\E_\nu|g-c|
\leq\E_\nu\|\nabla g\|_2
\end{equation}
for every bounded locally Lipschitz function $g$ \citep{bobkov1999logconcave,milman2009role}. The Cheeger constant and its connection to the Poincar\'e constant are reviewed in Section~\ref{supp:cheeger-poincare} of the Supplement.

{\hypersetup{linkcolor=blue,filecolor=blue}
\begin{proposition}[Multimodal bounds for log-concave sources]
\label{mm:logconcave}
Let $\nu$ have a log-concave density on $\R^d$, and let $T:\R^d\to\R^m$ be locally Lipschitz, with $P=T_\#\nu$.
For $A$, $\Delta$, and $U$ as in \eqref{mm:geometry}, put
$b=\min\{P(A),1-P(A^\Delta)\}$. Then
\begin{equation}
\label{mm:logconcave-localized}
\int_{T^{-1}(U)}\|J_T\|_{\mathrm{op}}\,d\nu
\geq h_\nu\Delta b.
\end{equation}
If \eqref{mm:target-geometry} holds and
$\operatorname{TV}(P,\mu)\leq\varepsilon<w=\min\{w_1,w_2\}$,
then $P(U)\leq\delta+\varepsilon$. For $\delta+\varepsilon>0$
and every $q>1$,
\begin{equation}
\label{mm:logconcave-moment}
\begin{split}
\big(\E_\nu\|J_T\|_{\mathrm{op}}^q\big)^{1/q}
&\geq
\left(\int_{T^{-1}(U)}\|J_T\|_{\mathrm{op}}^q\,d\nu\right)^{1/q}
\\
&\geq
\frac{h_\nu\Delta(w-\varepsilon)}
     {(\delta+\varepsilon)^{1-1/q}}.
\end{split}
\end{equation}
All integrals are understood in the extended nonnegative sense.
\end{proposition}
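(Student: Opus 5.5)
Following the proof of Theorem~\ref{mm:theorem}, we apply the functional form \eqref{mm:cheeger-functional} of the Cheeger inequality to a truncated distance function, in place of Gaussian isoperimetry. Set $g(z)=\min\{\operatorname{dist}(T(z),A),\Delta\}$. Since $y\mapsto\operatorname{dist}(y,A)$ is $1$-Lipschitz and $T$ is locally Lipschitz, $g$ is bounded and locally Lipschitz.

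The first step is a pointwise bound on $\|\nabla g\|_2$. At almost every $z$, $\|\nabla g(z)\|_2\le\|J_T(z)\|_{\mathrm{op}}$ by the chain rule for the composition of a Lipschitz function with a locally Lipschitz map, used in the weak/upper-gradient sense so that no differentiability of $\operatorname{dist}$ is needed. In addition, $\nabla g=0$ almost everywhere on $\{g=0\}=T^{-1}(A)$ and on $\{g=\Delta\}=T^{-1}((A^\Delta)^c)\cup\{\operatorname{dist}=\Delta\}$, because a locally Lipschitz function has vanishing gradient a.e. on each of its level sets. Hence $\E_\nu\|\nabla g\|_2\le\int_{T^{-1}(U)}\|J_T\|_{\mathrm{op}}\,d\nu$.

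The second step lower-bounds $\inf_c\E_\nu|g-c|$. We have $g=0$ on a set of $\nu$-mass $P(A)$ and $g=\Delta$ on a set of mass at least $1-P(A^\Delta)$. For any $c$, $|g-c|\ge c$ on the first set and $|g-c|\ge\Delta-c$ on the second. If $c\le0$ or $c\ge\Delta$, one of these alone already gives at least $\Delta b$. For $c\in[0,\Delta]$,
\[
\E_\nu|g-c|\ge c\,P(A)+(\Delta-c)(1-P(A^\Delta))\ge b\,(c+\Delta-c)=\Delta b.
\]
Combining this with \eqref{mm:cheeger-functional} and the first step gives \eqref{mm:logconcave-localized}. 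If the left-hand side is infinite there is nothing to prove.

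The approximation part is then routine and mirrors Corollary~\ref{mm:corollary}.
\begin{itemize}
\item Total variation gives $P(U)\le\mu(U)+\varepsilon\le\delta+\varepsilon$, $P(A)\ge w_1-\varepsilon$, and $1-P(A^\Delta)=P((A^\Delta)^c)\ge w_2-\varepsilon$. Hence $b\ge w-\varepsilon>0$.
\item Since $\nu(T^{-1}(U))=P(U)\le\delta+\varepsilon$, Hölder's inequality on $T^{-1}(U)$ gives
\[
\int_{T^{-1}(U)}\|J_T\|_{\mathrm{op}}\,d\nu\le\Big(\int_{T^{-1}(U)}\|J_T\|_{\mathrm{op}}^q\,d\nu\Big)^{1/q}(\delta+\varepsilon)^{1-1/q}.
\]
\item Rearranging and using \eqref{mm:logconcave-localized} yields \eqref{mm:logconcave-moment}. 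The first inequality there is trivial, since the integrand is nonnegative and the domain is enlarged to all of $\R^d$.
\end{itemize}

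The main obstacle is the rigorous justification of the first step, namely the chain rule and the vanishing gradient on the plateaus of $g$ for a merely locally Lipschitz $T$. A clean alternative avoids the distributional chain rule entirely: use the local Lipschitz constant $|\nabla g|(z)=\limsup_{w\to z}|g(w)-g(z)|/\|w-z\|$. Rademacher's theorem gives $|\nabla g|=\|\nabla g\|_2$ almost everywhere. At points where $T$ is differentiable, $|\nabla g|(z)\le\|J_T(z)\|_{\mathrm{op}}$. Vanishing on the plateaus follows from the standard fact that $\nabla g=0$ a.e. on $\{g=c\}$.
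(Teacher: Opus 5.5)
Your proposal is correct and follows the paper's proof step for step: the same truncated distance $g$, localization of $\nabla g$ to $T^{-1}(U)$, the lower bound $\inf_c\E_\nu|g-c|\ge\Delta b$ (the paper gets this in one line via $b|c|+b|\Delta-c|\ge b\Delta$), the Cheeger inequality, and then total variation plus H\"older. The differences are minor and harmless. The paper gets the vanishing of $\nabla g$ on the plateaus more simply, since $0$ and $\Delta$ are the global minimum and maximum of $g$, so the gradient vanishes at every differentiability point there. Also, by bounding $\nu(T^{-1}(U))\le\delta+\varepsilon$ before dividing, you avoid the paper's extra check that $P(U)>0$.
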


Proposition~\ref{mm:logconcave} gives a similar conclusion to
Corollary~\ref{mm:corollary} for log-concave sources. For any fixed
log-concave source $\nu$ and every fixed $q>1$, the lower bound in
\eqref{mm:logconcave-moment} diverges as $\delta,\varepsilon\to0$,
provided $\Delta$ and $w$ remain bounded away from zero.

The strength of the bounds in \eqref{mm:logconcave-localized} and
\eqref{mm:logconcave-moment} depends on the source's Cheeger constant
$h_\nu$. For log-concave $\nu$,
\[
h_\nu\geq c\,C_{\mathrm{PI}}(\nu)^{-1/2},
\]
where $C_{\mathrm{PI}}(\nu)$ is the variance Poincar\'e constant defined
in Section~\ref{sec:gradient-bounds} \citep{milman2009role}.
Thus an upper bound on $C_{\mathrm{PI}}(\nu)$ provides a lower bound on
$h_\nu$, which can be substituted into \eqref{mm:logconcave-localized}
and \eqref{mm:logconcave-moment}.

We now illustrate these bounds using a standard Gaussian source and a
two-component Gaussian mixture target.
}

\begin{example}[A Two-Component Gaussian Mixture]
\label{mm:example}
Fix $a>0$ and consider the Gaussian mixture
\begin{equation}
\label{mm:mixture}
\mu_\sigma=\tfrac12\mathcal N(-a,\sigma^2)
             +\tfrac12\mathcal N(a,\sigma^2),\qquad 0<\sigma\leq1.
\end{equation}
We let $\sigma\downarrow0$ with $a$ fixed; the mixture is bimodal for
$\sigma<a$. Take $A=(-\infty,-a/2]$ and $\Delta=a$, so that
$U=(-a/2,a/2)$.
By symmetry, both side probabilities are
$w_\sigma=(1-\delta_\sigma)/2$, where
\begin{equation}
\label{mm:mixture-valley}
\delta_\sigma=\mu_\sigma(U)
=\Phi\!\left(\frac{3a}{2\sigma}\right)
 -\Phi\!\left(\frac{a}{2\sigma}\right)
\longrightarrow0.
\end{equation}
Consequently, every locally Lipschitz exact transport from a standard
Gaussian source to $\mu_\sigma$ satisfies
\begin{equation}
\label{mm:mixture-moments}
\big(\E_\gamma\|J_T\|_{\mathrm{op}}^q\big)^{1/q}
\geq
\frac{aI((1-\delta_\sigma)/2)}
     {\delta_\sigma^{1-1/q}}
\longrightarrow\infty,\qquad q>1.
\end{equation}
This divergence occurs while every fixed absolute moment of the target
remains bounded.

The same behavior is visible directly in the univariate monotone transport map $T_\sigma=F_\sigma^{-1}\circ\Phi$, where $F_\sigma$ and
$f_\sigma$ are the cumulative distribution function and density of $\mu_\sigma$.
Since $F_\sigma(T_\sigma(z))=\Phi(z)$,
\begin{equation}
\label{mm:quantile-derivative}
T_\sigma'(z)=\frac{\varphi(z)}{f_\sigma(T_\sigma(z))},\qquad
T_\sigma'(0)=\sigma\exp\!\left(\frac{a^2}{2\sigma^2}\right).
\end{equation}
According to \eqref{mm:quantile-derivative}, the derivative at the center diverges as $\sigma\downarrow0$, while the
target density there tends to zero. Equation~\eqref{mm:mixture-moments}
shows that this growth in
Jacobian moments is \emph{intrinsic} to the transport problem, rather than a
consequence of the particular monotone construction.
Details of these calculations are given in Section~\ref{mm:example-proof}
of the Supplement.
\end{example}

\section{Practical Implications for Deep Generative Modeling}
\label{sec:training_implications}

Our results concern properties of \emph{any} pushforward map $T$ that
transports a source distribution $\nu$ to a target $\mu=T_\#\nu$.
Our heavy-tail bounds in Section \ref{sec:gradient-bounds} apply to Gaussian and, more generally, log-concave
sources under weak Sobolev regularity. Meanwhile, our multimodal bounds in Section \ref{sec:multimodal} require only
local Lipschitz continuity. We now outline practical implications of these
results for deep generative models.

\paragraph{Prior Lipschitz-based Analysis}
Prior work studies heavy-tailed targets under global Lipschitz assumptions
on the generative mapping $\widehat T$~\citep{tam2025statistical,wiese2019copula,jaini2020tails}.
Related analyses address disconnected or multimodal targets under global
Lipschitz or bi-Lipschitz assumptions~\citep{tanielian2020,salmona2022can,verine2022}.
A global Lipschitz bound requires
$\|\widehat T(x)-\widehat T(y)\|_2\le L\|x-y\|_2$ for all $x,y$.
This raises the question of whether allowing mappings without such a
bound removes these difficulties.

Our results show that large derivatives can remain necessary without a
global Lipschitz constraint. For example, with $Z\sim N(0,I_d)$,
large component variances of $T(Z)$ require
$\mathbb E\|\nabla T_j(Z)\|_2^2$ to be large for the corresponding
coordinates $j$. This averaged bound does not determine where large
derivatives occur. For the explicit heavy-tail rearrangements in
Section~\ref{sec:examples}, they grow rapidly in the latent tails. For multimodal targets,
our bounds identify a large Jacobian contribution over the preimage of
the low-probability region separating modes, even when target moments
remain bounded. The need for large gradients is therefore intrinsic to
the transport problem.

{\hypersetup{linkcolor=blue}
\paragraph{Implications for Expressivity versus Stability Trade-off}
The requirements of large gradients are related to a practical trade-off between numerical
stability and expressivity. It is known that large gradients can contribute to numerical instability in normalizing flow models \citep{behrmann2021understanding}. For a differentiable mapping $T$, perturbing an
input $z$ by an error $e$ yields
\begin{equation*}
T(z+e)-T(z)=J_T(z)e+o(\|e\|_2),\qquad e\to0,
\end{equation*}
where $J_T(z)$ is the Jacobian at $z$. Large Jacobian norms therefore allow
small input errors in some directions to produce much larger output errors. Such errors can arise from rounding and finite-precision representations in numerical computations. Behrmann et al.~\citep{behrmann2021understanding} document related
numerical failures in reconstruction and sampling, even for
mathematically invertible neural networks. Large gradients are also related to numerical instability in
neural ODE models~\citep{chen2018neural,finlay2020train}, as detailed in the following example.
\begin{example}
In neural ODE models, the transport map $T(z)=X_1(z)$ is obtained by solving the ODE $\dot X_t(z)=v_t(X_t(z))$ over $t\in[0,1]$, with initial condition $X_0(z)=z$. We use $D_zX_t(z)$  to denote the Jacobian of the flow with respect to its initial condition $z$, and $D_xv_t(x)$ to denote the Jacobian of the vector field with respect to its spatial argument $x$.
Assume that $v_t(x)$ and $D_xv_t(x)$ are continuous in
$(t,x)$ and that the flow exists on $[0,1]$. An application of the variational
equation for $D_zX_t(z)$ \citep {teschl2012ordinary} together with Gr\"onwall's inequality yields
\begin{equation}
\label{eq:gronwall_and_variational}
\log\|J_T(z)\|_{\mathrm{op}}
\leq \int_0^1\|D_xv_t(X_t(z))\|_{\mathrm{op}}\,dt.
\end{equation}
where $\|\cdot\|_{\mathrm{op}}$ denotes the operator norm. Large transport
derivatives therefore require a large time integral of the vector-field's
Jacobian norm.  Although the logarithm in \eqref{eq:gronwall_and_variational} moderates this requirement, transport derivatives can grow rapidly as the separation between target modes increases. For the two-component Gaussian mixture in
Proposition~\ref{cm:prop:original-growth}, suppose that the neural
ODE realizes the monotone transport map $T_D$. We then have
\begin{equation}
\label{eq:ode-exponential-bound}
\begin{aligned}
\E\left[
\exp\left(
2\int_0^1\|D_xv_t(X_t(Z))\|_{\mathrm{op}}\,dt
\right)\right]
&\geq \E\big[T_D'(Z)^2\big]\\
&\geq \frac{c}{D}\exp(D^2/8),
\qquad D\geq2,
\end{aligned}
\end{equation}
where $D$ is the distance between the component means, $c>0$, and the last inequality is due to Proposition~\ref{cm:prop:original-growth}. This illustrates how large transport derivatives can be associated with large accumulated vector-field Jacobian norms along individual trajectories.

This is relevant because neural ODEs rely on numerical solvers to
integrate the learned vector field. Finlay et al.~\citep{finlay2020train}
associate large vector-field Jacobian norms with numerical instability in fixed-step solvers and more required function
evaluations in adaptive solvers.
\end{example}

A natural approach to mitigating numerical stability issues
associated with large transport gradients in normalizing flows is
to impose a uniform upper bound on the Jacobian operator norm,
thereby enforcing a global Lipschitz constraint on the transport map~\citep{verine2022}. When such bounds are imposed on both the transport map and
its inverse, the resulting bi-Lipschitz constraints control gradients
in both directions~\citep{verine2022}. While these restrictions can improve numerical stability, they limit the expressivity of the map to
reproduce the target's tails and modal structure. For a Gaussian and log-concaves
sources, global Lipschitz constraints rule out exact representation
of heavy-tailed targets~\citep{jaini2020tails,tam2025statistical}. For multimodal targets, a global Lipschitz bound can prevent accurate
representation of well-separated modes~\citep{salmona2022can}. Verine
et al.~\citep{verine2022} establish related lower bounds on total
variation approximation error for bi-Lipschitz normalizing flows.

Section~\ref{sec:transformations} considers simple, invertible
coordinate transformations when the target density can be evaluated
up to a normalizing constant. Their invertibility allows us to learn
the transport in the transformed coordinates and then recover samples
in the original coordinates by applying the known inverse
transformation. By lightening the tails or bringing separated modes
closer together in the transformed coordinates, these transformations
can reduce the Jacobian norms required of the learned transport map.
}

\section{Approximating Densities via Bijective Reparametrizations}
\label{sec:transformations}

The preceding results describe limitations of learning a pushforward map
in the original coordinates of the target. Several prior works have considered modifying the source distribution to be heavier-tailed to model heavy-tailed targets \citep{jaini2020tails}, including in diffusion models \citep{pandey2024heavy}. Here, we propose a different approach which retains the simple source distribution but reparametrizes the \emph{target} geometry through a change of coordinates. By learning in the transformed coordinates, we aim to reduce the
Jacobian norms required of the transport map. We consider the task of approximating a target density that can be evaluated up to a normalizing constant.

We focus on simple one-dimensional transformations that can
be applied coordinate-wise to multivariate targets. These
transformations simplify marginal geometry, leaving the learned
generative map to model the joint distribution, including
dependence between coordinates.

Let $\Psi:\R^m\to\R^m$ be a bijection such that both $\Psi$ and $\Psi^{-1}$ are $C^1$. We choose $\Psi$ before fitting the learned map and write
$\tau=\Psi_\#\mu$ for the transformed target. We learn a map $S$ from
the source $\nu$ to $\tau$, and generate a sample in the original
coordinates by setting $X=\Psi^{-1}(S(Z))$, where $Z\sim\nu$. The
Jacobian of the resulting map $T=\Psi^{-1}\circ S$ satisfies
\begin{equation}
\label{tf:eq:factorization}
J_T(z)=J_{\Psi^{-1}}(S(z))J_S(z).
\end{equation}
The lower bounds established earlier still apply to $T$. However, the
growth required of $J_T$ can be captured partly by the known inverse
transformation, allowing $S$ to have a smaller expected gradient norm.
We illustrate this approach with a reparametrization for heavy tails and
a reparametrization for multimodal densities. Proofs are given in Section~\ref{tf:app:proofs} of the Supplement.

\subsection{Reparametrizations for Heavy Tails}
\label{tf:subsec:tails}

A simple example of a bijective reparameterization is the inverse hyperbolic sine function,
\begin{equation}
\label{tf:eq:asinh}
\psi_s(x)=\operatorname{arsinh}(x/s),\qquad
\psi_s^{-1}(y)=s\sinh y,\qquad s>0.
\end{equation}
This transformation is approximately linear near the origin and grows
logarithmically in the tails. In particular, both its inverse and its derivative
$\psi_s'(x)=(s^2+x^2)^{-1/2}$  are available in analytical form.

We now show how this reparametrization can lighten heavy tails.
Related definitions of slowly varying functions and regularly varying tails
are given in Section~\ref{supp:regular-variation} of the Supplement.
 
\begin{proposition}[Tail behavior after transformation]
\label{tf:prop:tail-transfer}
Suppose that $\mathbb P(|X|>x)=x^{-\alpha}L(x)$ for all sufficiently
large $x$, where $\alpha>0$ and $L$ is slowly varying. For any fixed
$s>0$, the random variable $Y=\psi_s(X)$ satisfies
\begin{equation}
\label{tf:eq:tail-rate}
\lim_{y\to\infty}\frac{-\log\mathbb P(|Y|>y)}{y}=\alpha.
\end{equation}
Consequently, $\E\exp(t|Y|)<\infty$ for $0<t<\alpha$, and every
polynomial moment of $Y$ is finite.
\end{proposition}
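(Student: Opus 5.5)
Since $\psi_s$ is odd and strictly increasing, we have $|Y|>y$ if and only if $|X|>s\sinh y$ for $y>0$. The tail probability of $Y$ is therefore an exact reparametrization of the tail of $|X|$:
\[
\mathbb P(|Y|>y)=\mathbb P\big(|X|>s\sinh y\big)=(s\sinh y)^{-\alpha}L(s\sinh y)
\]
for all $y$ large enough that $s\sinh y$ lies in the range where the regular-variation hypothesis holds. Taking $-\log$ and dividing by $y$ gives
\[
\frac{-\log\mathbb P(|Y|>y)}{y}
=\alpha\,\frac{\log(s\sinh y)}{y}-\frac{\log L(s\sinh y)}{y}.
\]
I will treat the two terms separately.

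For the first term, write $s\sinh y=\tfrac{s}{2}e^{y}(1-e^{-2y})$. Then $\log(s\sinh y)=y+\log(s/2)+\log(1-e^{-2y})$, so $\log(s\sinh y)/y\to1$ and the first term tends to $\alpha$.

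For the second term, the main point is the standard fact that $\log L(x)/\log x\to0$ as $x\to\infty$ for slowly varying $L$. This follows from the Karamata representation theorem, or directly from Potter-type bounds $x^{-\eta}\le L(x)\le x^{\eta}$ for every $\eta>0$ and all large $x$; I will cite it from the background in the Supplement. Applying it with $x=s\sinh y$ and using $\log x\sim y$ shows that $\log L(s\sinh y)/y\to0$. This establishes \eqref{tf:eq:tail-rate}. This step is the only non-elementary ingredient, and it amounts to citing a known property of slowly varying functions rather than a genuine obstacle.

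For the consequences, fix $0<t<\alpha$ and choose $t<t'<\alpha$. By \eqref{tf:eq:tail-rate}, there is $y_0$ with $\mathbb P(|Y|>y)\le e^{-t'y}$ for $y\ge y_0$. The layer-cake formula gives
\[
\E e^{t|Y|}=1+\int_0^\infty t e^{ty}\,\mathbb P(|Y|>y)\,dy .
\]
The integral over $[0,y_0]$ is finite, and on $[y_0,\infty)$ the integrand is bounded by $te^{-(t'-t)y}$, which is integrable. Hence $\E e^{t|Y|}<\infty$. Finally, $|y|^k\le C_{k,t}e^{t|y|}$ for every $k$, so all polynomial moments of $Y$ are finite.
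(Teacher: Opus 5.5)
Your proof is correct and follows the paper's argument essentially step for step. Both use the exact identity $\mathbb P(|Y|>y)=(s\sinh y)^{-\alpha}L(s\sinh y)$, the expansion $\log(s\sinh y)=y+\log(s/2)+\log(1-e^{-2y})$, the fact $\log L(x)/\log x\to0$ for slowly varying $L$, the layer-cake bound with an intermediate rate $t<t'<\alpha$, and finally $u^p\le M_{p,t}e^{tu}$. One small correction: the Supplement's background section only defines slow variation and does not state $\log L(x)/\log x\to0$, so you should cite Bingham et al.\ for it, as the paper does, or rely on your Potter-bound argument.
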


Consequently, in one dimension, the reparametrization \eqref{tf:eq:asinh} converts regularly varying tails into
exponential tails without requiring knowledge of the tail index. We illustrate with an example concerning the Student-$t$ family. 

\begin{example}[Transformed Student-$t$ distributions]
\label{tf:ex:student}
Let $X\sim t_\kappa$, where $\kappa>0$ denotes the degrees of
freedom, and apply the change of coordinates
$Y=\operatorname{arsinh}(X/\sqrt{\kappa})$.
Write $\tau_\kappa$ for the distribution of $Y$.
Substituting $x=\sqrt{\kappa}\sinh y$ into the
change-of-variables formula gives
\begin{equation}
\label{tf:eq:student-density}
f_\kappa(y)
=
\frac{\Gamma((\kappa+1)/2)}
{\sqrt{\pi}\,\Gamma(\kappa/2)}
(\cosh y)^{-\kappa},
\qquad y\in\R.
\end{equation}
Since $\cosh y\sim e^{|y|}/2$ as $|y|\to\infty$, the transformed
density decays exponentially rather than polynomially.
In particular, $Y$ has finite moments of every order.
This illustrates the tail and moment conclusions of
Proposition~\ref{tf:prop:tail-transfer}.

To see the implication for transport gradients, let
$S_\kappa=F_{\tau_\kappa}^{-1}\circ\Phi$, where
$F_{\tau_\kappa}$ and $\Phi$ are the cumulative distribution
functions of $\tau_\kappa$ and the standard Gaussian,
respectively.
Then $S_\kappa(Z)\sim\tau_\kappa$ for $Z\sim N(0,1)$.
For $0<\kappa\le2$, the monotone transport map to the transformed
target satisfies
\begin{equation}
\label{tf:eq:student-energy}
\E\big[(S_\kappa'(Z))^2\big]<\infty,
\end{equation}
as established in Section~\ref{tf:supp:student} of the Supplement.
Thus, changing coordinates yields an exact transport with finite
expected squared derivative.
\end{example}

More generally, the reparametrization~\eqref{tf:eq:asinh}
can be applied coordinate-wise to lighten the marginal tails
of a multivariate target, without requiring independence
between coordinates.

{
\hypersetup{linkcolor=blue,filecolor=blue}
\subsection{Reparametrizations for Multimodal Target Densities}
\label{tf:subsec:component-means}

\leavevmode Approximating multimodal target densities without additional
structural assumptions is challenging, particularly when the
locations of their modes are unknown
\citep{pompe2020framework,salmona2022can}. We therefore focus on
a setting in which reference component locations are available.
These locations may be supplied through prior knowledge of the
model structure or preliminary location estimates. For example,
label permutations in Bayesian mixture models
\citep{stephens2000label} and sign symmetries in Bayesian factor
analysis \citep{erosheva2017reflection} allow an approximate estimate of
one mode to generate reference locations for symmetry-related
modes. We use these reference locations to construct a bijective
reparametrization that brings separated components closer together
in the transformed coordinates, which can reduce the Jacobian
norms required of the learned transport map.

To illustrate this in one dimension, let
$c_1<\cdots<c_K$ be the reference component locations.
Choose $r,h>0$. The parameter $r$ sets the radius of
neighborhoods around these locations whose widths we preserve,
while $h$ controls the compression between them.
Let $d(x)=\min_i|x-c_i|$ denote the distance to the nearest
reference location. We restrict compression to the interval
$[L_\Psi,U_\Psi]$, where $L_\Psi=c_1-r$ and $U_\Psi=c_K+r$.

We define the change of coordinates by
\begin{equation}\label{tf:eq:component-means-map}
 \Psi(x)=\int_0^x a(t)\,dt,
\end{equation}
where its derivative $\Psi'(x)=a(x)$ is given by
\begin{equation}\label{tf:eq:component-means-derivative}
 a(x)=
 \begin{cases}
 \exp\!\left[-\dfrac{(d(x)-r)_+^2}{2h^2}\right],
     & L_\Psi\le x\le U_\Psi,\\
 1,  & x<L_\Psi\text{ or }x>U_\Psi,
 \end{cases}
\end{equation}
and $(u)_+=\max\{u,0\}$. Within each neighborhood
$[c_i-r,c_i+r]$, we have $a(x)=1$, so $\Psi$ simply translates
the interval without changing its width. Between these
neighborhoods, $a(x)<1$, so $\Psi$ compresses the gaps and brings
the components closer together. Smaller $h$ produces stronger
compression.

The derivative \eqref{tf:eq:component-means-derivative} remains strictly positive everywhere and
equals one outside $[L_\Psi,U_\Psi]$, ensuring that $\Psi$ is invertible
on $\R$. The construction of $\Psi$ uses only the reference
component locations and the parameters $r$ and $h$, without requiring
knowledge of the target CDF or component weights. 

We illustrate this reparametrization with a two-component
Gaussian mixture, comparing transport gradients in the original
and transformed coordinates. Consider the model,
\[
\mu_D=\tfrac12N(-D/2,1)+\tfrac12N(D/2,1),
\qquad D>0,
\]
where $D$ is the separation between the component means.
Write $F_D$ and $\Phi$ for the cumulative distribution functions
of $\mu_D$ and the standard Gaussian, respectively.
The monotone transport map from $N(0,1)$ to $\mu_D$ is
$T_D=F_D^{-1}\circ\Phi$.

\begin{proposition}[Derivative bound after reparametrization]
\label{tf:prop:component-locations}
\leavevmode Fix $r>0$ and $0<h<1$. Assume that $0\le\eta\le r$.
Suppose the reference component locations $c_1<c_2$ approximate
the component means with error at most $\eta$:
\[
|c_1+D/2|\le\eta,\qquad |c_2-D/2|\le\eta.
\]
Construct $\Psi$ using
\eqref{tf:eq:component-means-map}--\eqref{tf:eq:component-means-derivative},
and set $S_D=\Psi\circ T_D$. Then $S_D$ transports $N(0,1)$
to the transformed target $\Psi_\#\mu_D$, and
\begin{equation}\label{cm:eq:uniform-bound}
0<S_D'(z)\le C,\qquad
C=2\exp\!\left(\frac{(r+\eta)^2}{2(1-h^2)}\right),\qquad z\in\R.
\end{equation}
Consequently, $\E[S_D'(Z)^2]\le C^2$ for
$Z\sim N(0,1)$, with $C$ independent of $D$.
\end{proposition}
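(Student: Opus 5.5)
We argue pointwise through the chain rule. Since $\mu_D$ has a smooth, strictly positive density $f_D$, the map $T_D=F_D^{-1}\circ\Phi$ is $C^1$ and strictly increasing, with $T_D'(z)=\varphi(z)/f_D(T_D(z))$, exactly as in \eqref{mm:quantile-derivative}. The map $\Psi$ is a $C^1$ increasing bijection with $\Psi'=a>0$. Hence $S_D=\Psi\circ T_D$ pushes $N(0,1)$ to $\Psi_\#\mu_D$, and
\[
S_D'(z)=a(x)\,\frac{\varphi(\Phi^{-1}(F_D(x)))}{f_D(x)}=a(x)\,\frac{I(F_D(x))}{f_D(x)},\qquad x=T_D(z).
\]
This already gives $S_D'>0$. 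Since $x$ ranges over all of $\R$, it suffices to bound $G(x):=a(x)I(F_D(x))/f_D(x)$ by $C$ for every $x\in\R$. The second-moment claim $\E[S_D'(Z)^2]\le C^2$ then follows immediately.

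By the symmetry of $\mu_D$ we have $F_D(0)=1/2$. Recall that $I$ is symmetric about $1/2$ and increasing on $[0,1/2]$. The argument for $x\ge0$ mirrors the one for $x\le0$: there we work with $1-F_D$ and the reference point $c_2$. So take $x\le 0$ and put $u=x+D/2$.

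Both mixture components have CDF at most $\Phi(x+D/2)$ at $x$, so $F_D(x)\le\Phi(u)$. We also have $f_D(x)\ge\tfrac12\varphi(u)$. We split into two cases.

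\emph{Case $u\le0$.} Here $\Phi(u)\le1/2$, so $F_D(x)\le\Phi(u)\le 1/2$. Monotonicity of $I$ on $[0,1/2]$ gives $I(F_D(x))\le\varphi(u)$. Using $a\le1$, we get $G(x)\le 2\le C$.

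\emph{Case $0<u\le D/2$.} Here we only use $I\le\varphi(0)$. This gives
\[
G(x)\le 2a(x)e^{u^2/2}.
\]
Now the compression must absorb the Gaussian growth, so we need a lower bound on $d(x)$ and a check that $x$ lies in the compression window.

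First, $x\ge -D/2\ge c_1-\eta\ge c_1-r=L_\Psi$. Likewise $x\le 0\le U_\Psi$ because $c_2\ge D/2-\eta\ge -r$; note this uses $\eta\le r$. So $a(x)=\exp[-(d(x)-r)_+^2/(2h^2)]$.

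Second, $|x-c_1|\ge |x+D/2|-\eta=u-\eta$. Since $x\le0$, also $|x-c_2|\ge c_2\ge D/2-\eta\ge u-\eta$. Hence $d(x)\ge u-\eta$, and
\[
G(x)\le 2\exp\Big(\tfrac{u^2}{2}-\tfrac{(u-\eta-r)_+^2}{2h^2}\Big).
\]
Write $k=r+\eta$.
\begin{itemize}
\item If $u\le k$, the exponent is at most $k^2/2\le k^2/(2(1-h^2))$.
\item If $u>k$, set $v=u-k>0$. We must maximize $\tfrac12\big[(v+k)^2-v^2/h^2\big]$. Since $h<1$, this is a concave quadratic in $v$. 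Its maximum, attained at $v=kh^2/(1-h^2)$, equals $k^2/(2(1-h^2))$.
\end{itemize}
In both cases $G(x)\le 2\exp\big((r+\eta)^2/(2(1-h^2))\big)=C$.

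The main obstacle is this middle region between the modes, where $T_D'$ blows up like $e^{D^2/8}$. The key step is recognizing that both reference points are at least $u-\eta$ away from $x$. This lets the Gaussian factor $e^{u^2/2}$ be dominated by the faster decay $e^{-(u-k)^2/(2h^2)}$ of $a$, which is exactly where $h<1$ is used. The crude bounds $I\le\varphi(0)$ and $F_D\le\Phi(u)$ suffice, so no finer quantile asymptotics are needed, and $C$ is independent of $D$.
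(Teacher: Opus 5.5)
Your proof is correct and uses the same ingredients as the paper's proof. These are the formula $S_D'=a\,I(F_D)/p_D$, monotonicity of $I$ to get the bound $2$ in the outer tails, the crude bound $I\le\varphi(0)$ with a one-component density bound in the central region, and the same completed square $\tfrac12[(v+k)^2-v^2/h^2]\le k^2/(2(1-h^2))$ with $k=r+\eta$. What differs is the decomposition. The paper splits by the geometry of $\Psi$, inside versus outside $[L_\Psi,U_\Psi]$. Inside, it picks the nearest reference point $c_j$ and uses $|x-m_j|\le d(x)+\eta$ to lower-bound $p_D(x)$, so the exponent is a function of $d(x)$. You split by the geometry of $\mu_D$: left of $-D/2$, then between $-D/2$ and the midpoint, then reflect. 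You also run the triangle inequality the other way, $d(x)\ge|x+D/2|-\eta$, to upper-bound $a(x)$, so the exponent is a function of $u$. The paper's choice makes window membership a case hypothesis, never looks at the far reference point, and does not use the reflection symmetry of $\mu_D$. It therefore carries over essentially verbatim to $K$ equal-weight unit-variance components, with constant $K$ in place of $2$, as in the five-component experiment. Your choice costs two extra checks: that $x$ lies in the window (this is where $\eta\le r$ enters for you), and that $c_2$ is also at distance at least $u-\eta$. In return, it shows transparently that the compression is only needed on the half-gap between a mean and the midpoint, with everything outside the means handled by monotonicity of $I$ alone. Your reflection step is legitimate even though $\Psi$ itself is not symmetric, because the hypotheses are invariant under $x\mapsto-x$, $(c_1,c_2)\mapsto(-c_2,-c_1)$.
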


The quantity $r$ reflects the assumed accuracy of the reference component locations, as a smaller assumed error $\eta$ allows a smaller $r$ while maintaining $\eta\le r$. The assumption $\eta\le r$ ensures that each component mean
lies within the neighborhood around its reference location so that the transformed transport
can have uniformly bounded gradients even as the original
component separation increases. We next quantify the gradient
growth of $T_D$ in the original coordinates.

\begin{proposition}[Growth of the original transport]
\label{cm:prop:original-growth}
\leavevmode For the same mixture with $D\ge2$, there is
an absolute constant $c>0$, independent of $D$, such that,
for $Z\sim N(0,1)$,
\begin{equation}\label{cm:eq:original-growth}
\E[T_D'(Z)^2]\ge\frac{c}{D}e^{D^2/8}.
\end{equation}
\end{proposition}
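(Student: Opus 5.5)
We prove Proposition~\ref{cm:prop:original-growth} by computing the monotone map's derivative explicitly and restricting the expectation to a window of latent values that are sent near the midpoint of the mixture. Since $F_D(T_D(z))=\Phi(z)$, we have
\[
T_D'(z)=\frac{\varphi(z)}{f_D(T_D(z))},\qquad
f_D(x)=\tfrac12\varphi(x+D/2)+\tfrac12\varphi(x-D/2),
\]
exactly as in \eqref{mm:quantile-derivative}. By symmetry, $T_D(0)=0$. Hence
\[
\E[T_D'(Z)^2]=\int \frac{\varphi(z)^3}{f_D(T_D(z))^2}\,dz .
\]
The integrand is bounded below by its value on the set $\{z: |T_D(z)|\le 1\}$, where the target density is tiny.

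The first step bounds $f_D$ on $[-1,1]$. For $|x|\le1$ and $D\ge2$, both component densities are at most $\varphi(D/2-1)$, so
\[
f_D(x)\le \varphi(D/2-1)=\tfrac{1}{\sqrt{2\pi}}e^{-D^2/8+D/2-1/2}.
\]
This bound is too lossy: it carries an extra factor $e^{D/2}$ in the wrong direction. We therefore work on the smaller window $|x|\le 1/D$. There $(D/2-|x|)^2\ge D^2/4-1$, which gives
\[
f_D(x)\le C e^{-D^2/8}.
\]

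The second step locates the preimage of this window. Because $T_D$ is increasing with $T_D(0)=0$, the preimage of $[-1/D,1/D]$ is an interval $[-z_D,z_D]$ determined by
\[
\Phi(z_D)-\tfrac12=F_D(1/D)-\tfrac12=\int_0^{1/D}f_D \le \frac{C}{D}e^{-D^2/8}.
\]
This quantity is tiny, so $z_D\le 1$ for large $D$, and for such $z$ we have $\varphi(z)\ge\varphi(1)$. Since $\Phi'\le 1/\sqrt{2\pi}$, the window length satisfies
\[
z_D\ge \sqrt{2\pi}\int_0^{1/D}f_D .
\]
We also need a lower bound on $f_D$ on the window: $f_D(x)\ge \tfrac12\varphi(D/2+1)\ge c\,e^{-D^2/8-D/2}$. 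This gives only a weak bound on $z_D$.

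The main obstacle is this loss of $e^{D/2}$ factors. A change of variables $x=T_D(z)$ avoids it:
\[
\E[T_D'(Z)^2]=\int \frac{\varphi(z)^3}{f_D(T_D(z))^2}\,dz\ \ge\ \int_{-1/D}^{1/D}\frac{\varphi(T_D^{-1}(x))^2}{f_D(x)}\,dx .
\]
Here $\varphi(T_D^{-1}(x))\ge\varphi(1)$ once $z_D\le1$, and $f_D(x)\le Ce^{-D^2/8}$ on the window. Therefore
\[
\E[T_D'(Z)^2]\ge \frac{2}{D}\cdot\varphi(1)^2\cdot C^{-1}e^{D^2/8}.
\]
This is the claimed $\frac{c}{D}e^{D^2/8}$, and the factor $1/D$ arises naturally from the window width.

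The final step handles small $D$, say $2\le D\le D_0$, where $z_D\le1$ may fail. On this compact range, $\E[T_D'(Z)^2]$ is bounded below by a positive constant. One way to see this is Lemma~\ref{thm:gaussian_poincare}: $\E[T_D'(Z)^2]\ge\Var(\mu_D)=1+D^2/4$. Shrinking $c$ then makes \eqref{cm:eq:original-growth} hold for all $D\ge2$.
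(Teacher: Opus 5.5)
Your argument is correct and is essentially the paper's proof. Both change variables to $\int I(F_D(x))^2/p_D(x)\,dx$ with $I(F_D(x))=\varphi(T_D^{-1}(x))$, restrict to the window $[-1/D,1/D]$, and use $p_D\le e^{1/2}e^{-D^2/8}/\sqrt{2\pi}$ there.

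The only difference is how you bound the numerator. The paper notes $F_D(x)\in[1/4,3/4]$ on the window, so $I(F_D(x))\ge I(1/4)$ for every $D\ge2$. This also gives $z_D\le\Phi^{-1}(3/4)<1$ for all $D\ge2$, so your separate small-$D$ Poincar\'e step is unnecessary, though harmless.
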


The lower bound in \eqref{cm:eq:original-growth} grows
exponentially with the squared component separation. Increasing separation therefore forces rapid
growth in the expected squared derivative of the original
transport.  Proofs of Propositions \ref{tf:prop:component-locations}--\ref{cm:prop:original-growth}
are given in Section~\ref{supp:component-means} of the Supplement. 

}

\section{Empirical Illustrations}
\label{sec:examples}

We use explicit transport maps from standard Gaussian sources
to examine how target geometry affects transport gradients.
Sections~\ref{subsec:construction} and~\ref{emp:multimodal}
illustrate the gradient growth associated with heavy tails and
separated modes. Section~\ref{emp:transformed-flows} then examines
how the reparametrizations from Section~\ref{sec:transformations}
affect these gradients. Using known transport maps allows us to
study these effects without approximation or optimization error.
We provide additional
empirical illustrations in Section~\ref{app:additional-sims} of the
Supplement.

\subsection{Heavy-Tailed Targets}
\label{subsec:construction}
\label{subsec:sim_setup}

We provide empirical illustrations of our theoretical results through numerical
simulations on commonly used families of heavy-tailed target distributions.
Specifically, we consider the Student-$t$ family with varying degrees of freedom $\kappa$,
as well as the Pareto and Fr\'echet families with different shape parameters $\alpha$ and $\zeta$ respectively. We review related definitions of these canonical families in Section~\ref{appendix_background_tail} of the Supplement.
 As $\kappa, \alpha, \zeta$ decrease, the tails of the corresponding family become heavier.

In one dimension, an explicit transport map between probability measures can be
constructed using quantile functions. Let $\nu$ and $\mu$ be
probability measures on $\mathbb R$ with continuous cumulative distribution
functions $F_\nu$ and $F_\mu$. If $Z\sim\nu$, then the map,
\begin{equation} \label{sim-map}
T(z)=F_\mu^{-1}\!\big(F_\nu(z)\big),
\end{equation}
pushes $\nu$ forward to $\mu$, i.e., $T_\#\nu=\mu$. This monotone transport map is also known as the monotone rearrangement \citep{villani2008optimal} and enjoys optimality properties \citep{santambrogio2015optimal}. 

For each target family, we evaluate the transport map in
\eqref{sim-map} from a standard Gaussian source.
Figure~\ref{fig:jacobian_vertical} compares the numerically evaluated
expected squared Frobenius norm of its Jacobian with the
Poincar\'e lower bound as the target's tail parameter varies.
Figure~\ref{fig:heatmap} depicts the log absolute Jacobian
determinant across the source space.
Section~\ref{supp:heavy-tail-details} of the Supplement
provides further implementation details.

\begin{figure}[!t]

    \begin{subfigure}{0.32\textwidth}

        \includegraphics[width=\textwidth]{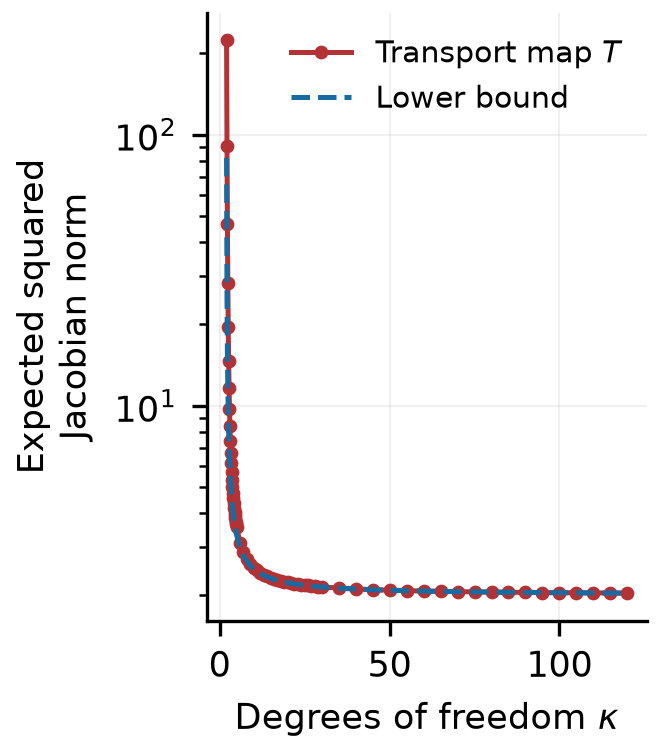}
        \caption{Student-$t$ target}
    \end{subfigure}\hfill
    \begin{subfigure}{0.32\textwidth}
\includegraphics[width=\textwidth]{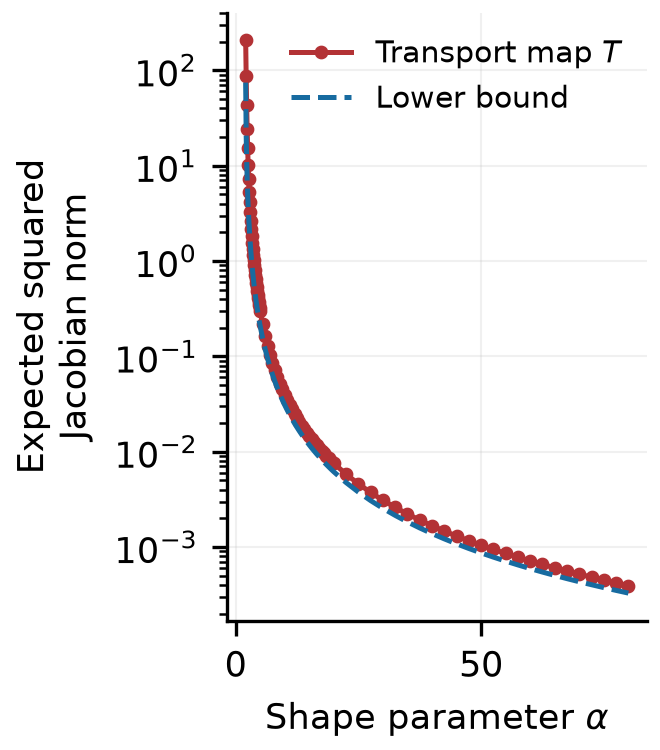}
        \caption{Pareto target}
    \end{subfigure}
\hfill
    \begin{subfigure}{0.32\textwidth}

        \includegraphics[width=\textwidth]{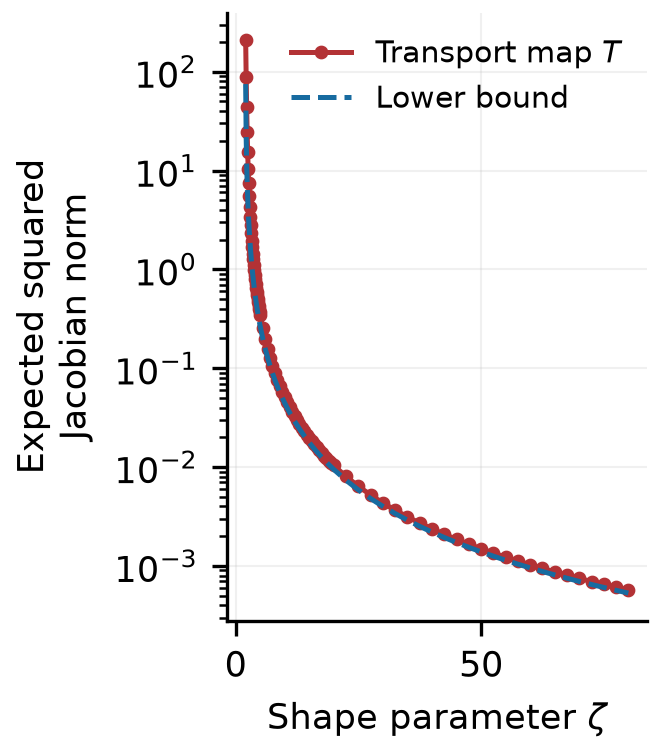}
        \caption{Fr\'echet target}
    \end{subfigure}
    \hfill
    \caption{Comparison of Poincar\'e lower bounds with the expected squared Frobenius norm of the transport Jacobian for different heavy-tailed target distributions under a standard Gaussian source. The red curves with circular markers show the numerically evaluated quantities $\E[\|J_T(Z)\|_F^2]$. The blue dashed curves show the lower bounds.}
    \label{fig:jacobian_vertical}
\end{figure}

\begin{figure*}[!h]
    \centering
    \begin{subfigure}{\textwidth}
        \centering
        \includegraphics[width=.92\textwidth]{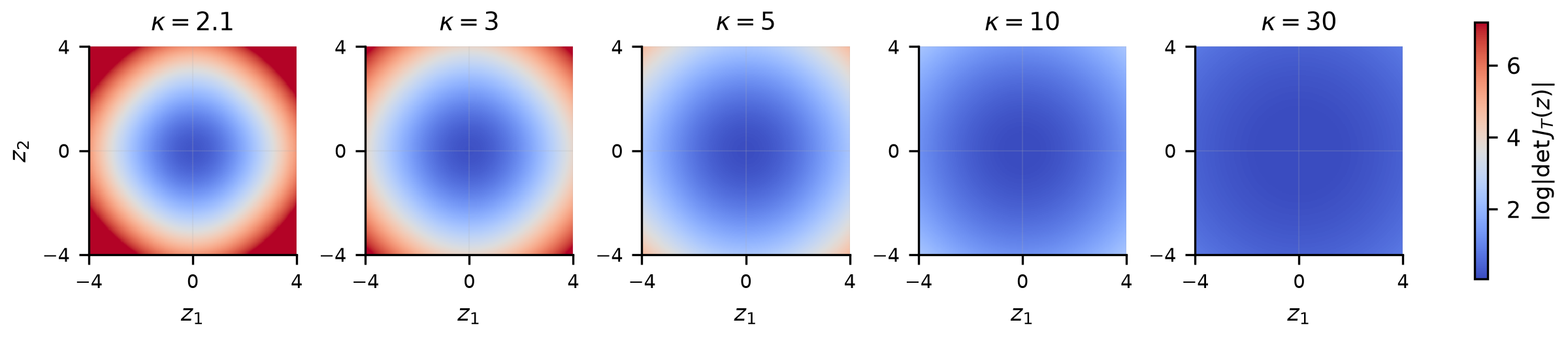}
        \caption{Student-$t$ target}
    \end{subfigure}

    \begin{subfigure}{\textwidth}
        \centering
        \includegraphics[width=\textwidth]{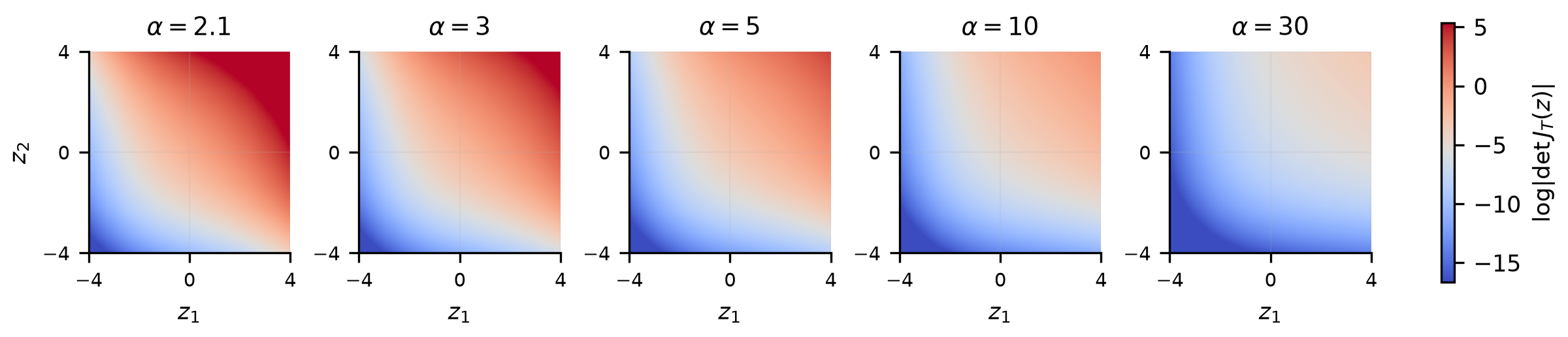}
        \caption{Pareto target}
    \end{subfigure}

    \begin{subfigure}{\textwidth}
        \centering
        \includegraphics[width=\textwidth]{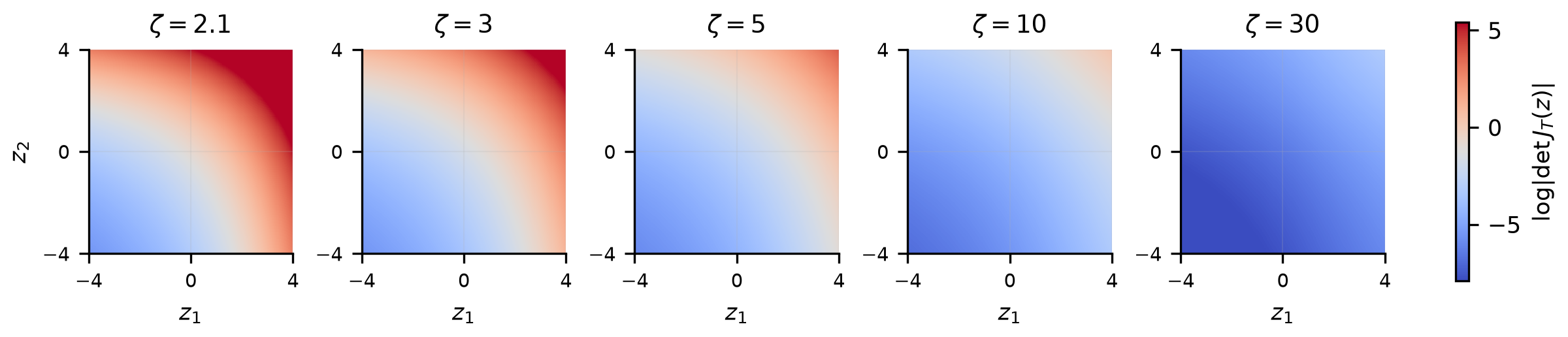}
        \caption{Fr\'echet target}
    \end{subfigure}

    \caption{
    Log absolute Jacobian determinant heatmaps for Gaussian-to-heavy-tailed transport maps. Note that the Pareto and Fr\'echet distributions are right-skewed and supported on the positive orthant.
    }
    \label{fig:heatmap}
\end{figure*}

Figure~\ref{fig:jacobian_vertical} shows that the
Poincar\'e lower bounds from Theorem~\ref{lem:vector_poincare} closely track the numerically evaluated expected
squared Frobenius norms across the three target families.
Both quantities increase sharply as the tail parameters
$\kappa,\alpha,\zeta$ approach $2$ from above, where the corresponding
target variances diverge. These comparisons illustrate how the
moment-based bounds capture the growth of transport gradients
as the targets become heavier-tailed.

Figure~\ref{fig:heatmap} depicts where the transport expands
the source distribution to populate the target's heavy tails.
The log absolute Jacobian determinant increases in these regions
as the tails become heavier.
The Student-$t$ targets have heavy tails in both directions,
whereas the Pareto and Fr\'echet targets are right-skewed
and have a single heavy tail.

\subsection{Multimodal Targets}
\label{emp:multimodal}

We next consider Gaussian mixture targets to illustrate the bounds in
Section~\ref{sec:multimodal}. As in the heavy-tailed examples, the
monotone transport map allows us to examine the transport directly,
without introducing approximation or optimization error. We use the
one-dimensional mixture
$\mu=\tfrac12\mathcal N(-a,\sigma^2)+\tfrac12\mathcal N(a,\sigma^2)$,
where $a=\Delta/2+3\sigma$, and take
$U=(-\Delta/2,\Delta/2)$ as the valley. Increasing $\Delta$ separates
the components while keeping their scales fixed.

Figure~\ref{emp:fig:multimodal} compares the numerically
evaluated mean derivative $\E[T'(Z)]$ with the lower bound in
Theorem~\ref{mm:theorem}. Here, $T$ is the monotone transport
map in \eqref{sim-map} from the standard Gaussian source to
the mixture target. The expectation averages its derivative
over source points $Z\sim N(0,1)$. Both the lower bound and
the computed mean derivative increase with the width of the
low-probability interval. We observe that the lower bound
captures the rate of this increase well. 

\begin{figure}[t]
\centering
\includegraphics[width=0.98\textwidth]{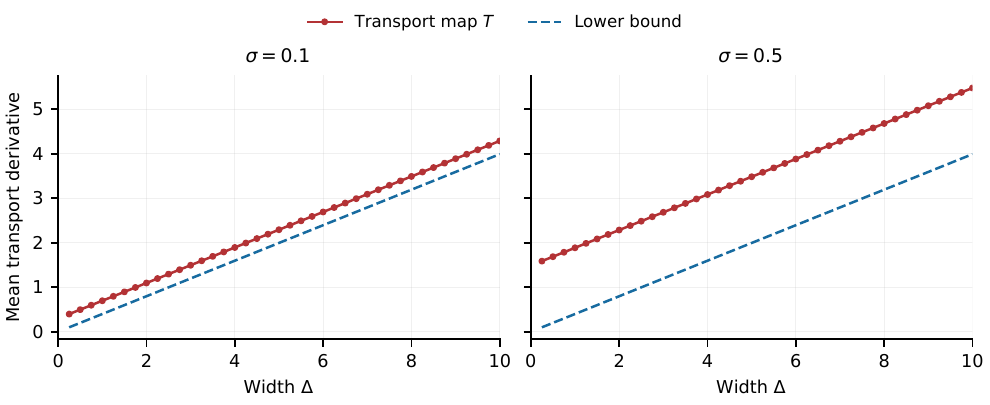}
\caption{Gradient bounds for two-component Gaussian mixtures.
The plots compare $\E[T'(Z)]$ with the lower bound
in Theorem~\ref{mm:theorem} as the width $\Delta$ of the low-probability interval
increases.
The two panels use component standard deviations
$\sigma=0.1$ and $\sigma=0.5$.}
\label{emp:fig:multimodal}
\end{figure}

\subsection{Reparametrizations of Known Transport Maps}
\label{emp:transformed-flows}
{\hypersetup{linkcolor=blue,filecolor=blue}

We next examine how changing the target coordinates affects
transport derivatives. For each scenario below, we consider  a known map $T$ that transports a standard
Gaussian source to the original target, so $X=T(Z)$ is a
sample from that target. Applying the fixed coordinate
transformation $\Psi$ gives $Y=\Psi(X)$. The composition
$S=\Psi\circ T$ therefore transports the same Gaussian source
to the target expressed in transformed coordinates.
We compare $\E[\|J_T(Z)\|_F^2]$ and
$\E[\|J_S(Z)\|_F^2]$, which measure the squared Jacobian norms
averaged over the Gaussian source. In one dimension, these
quantities are $\E[T'(Z)^2]$ and $\E[S'(Z)^2]$, respectively.

For the heavy-tailed examples, we use the three canonical
families from Section~\ref{subsec:construction}, namely the Student-$t$,
Pareto, and Fr\'echet distributions.
Each target is a two-dimensional product distribution with
independent coordinates, and the source is $N(0,I_2)$.
We apply the asinh transformation in \eqref{tf:eq:asinh}
coordinate-wise, using $s=\sqrt{\kappa}$ for the Student-$t$
family and $s=1$ for the Pareto and Fr\'echet families.
We vary each family's tail parameter from $1$ to $10$.

For the multimodal examples, we use the one-dimensional
Gaussian mixture
\[
\mu_D=\frac15\sum_{i=1}^5 N(m_i,1),
\qquad m_i=D(i-3),
\qquad D\in\{3,4,\ldots,16\}.
\]
Thus, $D$ controls the separation between adjacent component
means while the component weights and variances remain fixed.
We construct $\Psi$ using
\eqref{tf:eq:component-means-map}--\eqref{tf:eq:component-means-derivative},
with $r=1.5$ and $h=0.4$.
We specify approximate reference component locations $c_i$
satisfying $|c_i-m_i|\le0.5$. The differences $c_i-m_i$ are
listed in Section~\ref{cm:sec:numerics} of the Supplement,
which also describes the numerical integration procedure.

\begin{figure}[t]
\centering
\captionsetup{labelfont={color=black},textfont={color=black}}
\includegraphics[width=0.98\textwidth]{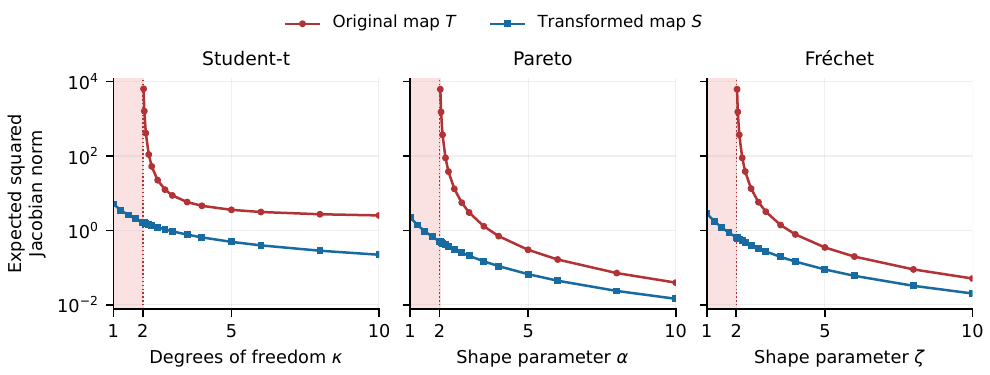}
\caption{Expected squared Frobenius Jacobian norms before and after the asinh reparametrization for two-dimensional product Student-$t$, Pareto, and Fr\'echet targets, using a standard Gaussian source. The shaded regions indicate degrees of freedom or shape parameters less than or equal to $2$.}
\label{emp:fig:transformed-tails}
\end{figure}

Figure~\ref{emp:fig:transformed-tails} shows that the
asinh reparametrization substantially reduces the expected
squared Jacobian norms across all three target families.
The original-map values increase sharply as the tail parameter
approaches $2$ from above.
The transformed-map values remain moderate across the displayed
range. These results are in line with Proposition~\ref{tf:prop:tail-transfer}, which explains how the
transformation converts heavy tails into exponential
tails.  

\begin{figure}[t]
\centering
\captionsetup{labelfont={color=black},textfont={color=black}}
\includegraphics[width=0.72\textwidth]{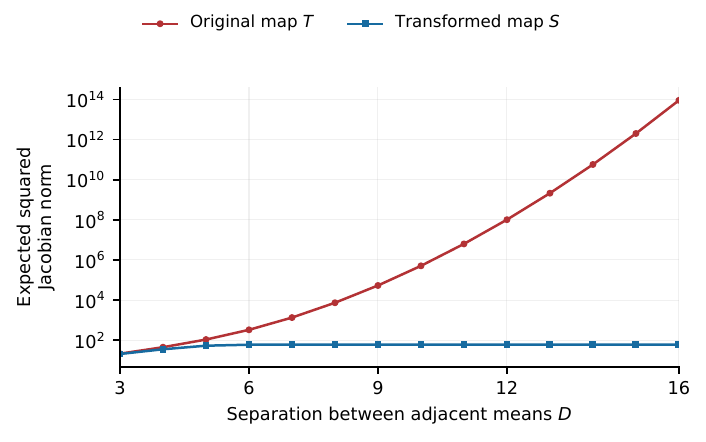}
\caption{Expected squared derivatives $\E[T'(Z)^2]$ and $\E[S'(Z)^2]$, for $Z\sim N(0,1)$, before and after transformation for five-component Gaussian mixtures as adjacent means move farther apart. Component widths, transformation parameters, and errors in the reference component locations are held fixed.}
\label{emp:fig:transformed-modes}
\end{figure}

Figure~\ref{emp:fig:transformed-modes} shows that the
original transport's expected squared derivative increases
rapidly as adjacent component means move farther apart.
After reparametrization, the corresponding quantity levels off,
although the component separation continues to increase.
This five-component example exhibits the same qualitative
contrast established for two-component mixtures by
Propositions~\ref{tf:prop:component-locations}
and~\ref{cm:prop:original-growth}: the transformed transport
admits a derivative bound independent of separation, whereas
the original transport's expected squared derivative grows
exponentially with squared separation.

}

%
%

\section*{Declaration of AI Use}
\noindent The authors developed all core logic and theoretical frameworks in this paper and drafted the initial manuscript. AI tools (ChatGPT 6 Astra and Claude Opus 5.5) were used to refine the writing, check mathematical derivations, and assist in code writing. After using these tools, the authors carefully reviewed and edited the content as necessary. The authors take full responsibility for the final manuscript.

\bibliographystyle{imsart-number-ed}
\bibliography{references}

\clearpage
\markboth{SUPPLEMENTARY MATERIAL}{SUPPLEMENTARY MATERIAL}
\section*{Supplement to ``Intrinsic Gradient Barriers in Pushforward Deep Generative Models''}
\setcounter{section}{0}
\setcounter{figure}{0}
\setcounter{table}{0}
\renewcommand{\thesection}{S\arabic{section}}
\renewcommand{\thefigure}{S\arabic{figure}}
\renewcommand{\thetable}{S\arabic{table}}
\renewcommand{\theHsection}{S\arabic{section}}
\renewcommand{\theHfigure}{S\arabic{figure}}
\renewcommand{\theHtable}{S\arabic{table}}
\captionsetup{labelfont={color=black},textfont={color=black}}
{
\section{Technical Preliminaries}
\label{append_technical}
\subsection{Weak Gradients}

Let $\nu$ be a full-dimensional log-concave probability measure on $\mathbb{R}^d$, and set $\Omega=\operatorname{int}(\operatorname{supp}\nu)$. A function $f \in L^p(\nu)$ that is locally integrable on $\Omega$ is said to admit a \emph{weak gradient} if there exists a vector-valued function $g=(g_1,\dots,g_d) \in L^p(\nu;\mathbb{R}^d)$, also locally integrable on $\Omega$, such that
\begin{align*}
    \int_{\Omega} f(x)\, \partial_i \varphi(x)\, dx
    \;=\;
    - \int_{\Omega} g_i(x)\, \varphi(x)\, dx,
\end{align*}
for all smooth, compactly supported test functions $\varphi \in C_c^\infty(\Omega)$ and all $i=1,\dots,d$.
In this case, we write $\nabla f = g$ and refer to $\nabla f$ as the weak gradient of $f$.

\paragraph{Examples.}
For functions that are classically differentiable, the weak gradient coincides with the classical gradient, so the definition introduces nothing new for smooth maps. The notion becomes useful for functions with isolated non-smoothness:
\begin{itemize}
\item The function $f(x) = |x|$ on $\mathbb{R}$ is not differentiable at $0$, but admits a weak derivative $f'(x) = \mathrm{sgn}(x)$, defined almost everywhere.
\item The ReLU function $\mathrm{ReLU}(x) = \max(x,0)$ admits the weak derivative $\mathbf{1}\{x > 0\}$.
\end{itemize}
Generally, on an intuitive level, piecewise-smooth functions with finitely many ``kinks'' admit weak gradients given by the classical gradient on each smooth piece. This level of generality is the natural one in our setting for three reasons: (i) many neural-network-based maps are only piecewise smooth; (ii) weak derivatives are the standard notion of derivative in Sobolev spaces and in the formulation of Poincar\'e-type inequalities; and (iii) they are the natural notion of derivative used in PDE-based analyses of flow- and transport-based generative models. We refer the reader to standard texts on Sobolev spaces and weak derivatives, such as \cite{evans2022partial}, for further background.

\subsection{Sobolev Spaces}

Any rigorous analysis of pushforward maps requires minimal regularity assumptions to ensure that gradients and expectations thereof are well defined. Since our main analytical tools are Poincar\'e inequalities, which relate moments of a distribution to integrals involving gradients, we work with weak notions of differentiability formalized through Sobolev spaces.

For $1\le p<\infty$, $L^p(\nu)$ is the space of measurable functions $f$ satisfying $\int |f|^p\,d\nu<\infty$, with norm
\[
\|f\|_{L^p(\nu)}
=\left(\int |f(x)|^p\,d\nu(x)\right)^{1/p}.
\]
For $p=\infty$, the norm is the essential supremum with respect to $\nu$.

For $1 \le p < \infty$, the Sobolev space $W^{1,p}(\nu)$ consists of all functions $f \in L^p(\nu)$ that admit a weak gradient $\nabla f \in L^p(\nu;\mathbb{R}^d)$. The associated norm is defined by
$$
\|f\|_{W^{1,p}(\nu)}^p
\;=\;
\|f\|_{L^p(\nu)}^p + \|\nabla f\|_{L^p(\nu)}^p.
$$

In particular, the space $W^{1,2}(\nu)$ plays a central role in our analysis, as it provides the natural domain for Poincar\'e and log-Sobolev inequalities. Membership in $W^{1,2}(\nu)$ requires only weak differentiability and square-integrability of the gradient, imposing substantially weaker regularity than global Lipschitz conditions. We refer to \cite{evans2022partial} and \cite{brezis2011functional} for comprehensive treatments of Sobolev spaces and weak derivatives, and \cite{bogachev1998gaussian} for details on Gaussian Sobolev spaces.
\subsection{Background on Heavy-Tailed Distributions}
\label{subsec:heavytails}

A central object in this work is the tail behavior of the target distribution. Tail decay is closely tied to the existence and growth of moments. For a real-valued random variable $X$, we use the two-sided convention: $X$ is heavy-tailed if $\E[e^{t|X|}]=\infty$ for every $t>0$. In practice, heavy-tailed distributions typically possess only finitely many moments, whereas light-tailed distributions admit moments of all orders. We briefly review relationships between heavy tails, $p$-th moments, and polynomial tail bounds; see \cite{roch2024modern} for additional background.

\subsubsection{Moments and Polynomial Tail Bounds}

Let $X \sim \nu$. For $p \ge 1$, the $p$-th moment of $\nu$ is
$$
\mathbb{E}_\nu \|X\|^p \;=\; \int_{\mathbb{R}^d} \|x\|^p \, d\nu(x),
$$
whenever the integral is finite. A basic consequence of Markov's inequality is the polynomial tail bound
\begin{equation}
\label{eq:markov_poly_tail}
\mathbb{P}\big(\|X\| \ge t\big)
\;\le\;
\frac{\mathbb{E}\|X\|^p}{t^p},
\qquad t>0,
\end{equation}
which shows that finiteness of the $p$-th moment implies at least $t^{-p}$ control of the tail probability.

Conversely, moments can be expressed in terms of the tail function via the standard identity
\begin{equation}
\label{eq:moment_tail_identity}
\mathbb{E}\|X\|^p
\;=\;
p \int_{0}^{\infty} t^{p-1} \, \mathbb{P}\big(\|X\| \ge t\big)\, dt,
\end{equation}
whenever the integral is finite. In particular, if $\mathbb{P}(\|X\|\ge t) \lesssim t^{-\alpha}$ for some $\alpha>0$, then (\ref{eq:moment_tail_identity}) implies $\mathbb{E}\|X\|^p < \infty$ for all $p<\alpha$. Conversely, if $\mathbb{P}(\|X\|\ge t) \gtrsim t^{-\alpha}$ for large $t$, then moments of order $p\ge \alpha$ diverge. Thus, polynomial tail decay corresponds to the existence of moments only up to a critical order.

Complementary lower bounds on tail probabilities can be obtained under additional assumptions via the Paley--Zygmund inequality. Specifically, if $Y \ge 0$ has finite second moment, then for any $0 < s < 1$,
\begin{align*}
\mathbb{P}\big( Y \ge s \, \mathbb{E}Y \big)
\;\ge\;
(1-s)^2 \, \frac{(\mathbb{E}Y)^2}{\mathbb{E}Y^2}.
\end{align*}
Applying this inequality with $Y = \|X\|^p$ yields
\begin{align*}
\mathbb{P}\Big( \|X\| \ge s^{1/p} (\mathbb{E}\|X\|^p)^{1/p} \Big)
\;\ge\;
(1-s)^2 \, \frac{(\mathbb{E}\|X\|^p)^2}{\mathbb{E}\|X\|^{2p}},
\end{align*}
which shows that, provided $\mathbb{E}\|X\|^{2p}$ is controlled relative to $(\mathbb{E}\|X\|^p)^2$, large $p$-th moments necessarily imply non-negligible mass in the tails. We refer to \cite{dufour2015moments} for further discussion.

\subsubsection{Slow and Regular Variation}
\label{supp:regular-variation}

We recall the following standard definitions; see
Bingham et al.~\citep{bingham1987regular} for further background.

A positive measurable function $L$ is \emph{slowly varying at infinity} if
\[
\lim_{x\to\infty}\frac{L(cx)}{L(x)}=1
\qquad\text{for every fixed }c>0.
\]
 
A positive measurable function $g$ is \emph{regularly varying at
infinity with index $\rho\in\R$} if
\[
\lim_{x\to\infty}\frac{g(cx)}{g(x)}=c^\rho
\qquad\text{for every fixed }c>0.
\]
Equivalently, $g(x)=x^\rho L(x)$ for a slowly varying function $L$.
Slow variation corresponds therefore to the special case $\rho=0$.

We say that a real-valued random variable $X$ has
\emph{regularly varying tails with tail index $\alpha>0$} if
its tail function $x\mapsto\mathbb P(|X|>x)$ is regularly varying
with index $-\alpha$. Equivalently,
\[
\mathbb P(|X|>x)=x^{-\alpha}L(x)
\]
for all sufficiently large $x$, where $L$ is slowly varying.
This describes power-law tail decay with a slowly varying
multiplicative factor. For example, a Student-$t$ distribution
with $\kappa$ degrees of freedom has tail index $\kappa$.
\subsubsection{Some Canonical Heavy-Tailed Distributions} \label{appendix_background_tail}

In our simulations, we consider three classical families of distributions that exhibit heavy-tailed behavior. We give more details on these families below.
The Student-$t$ distribution is a classical family of heavy-tailed distributions
that interpolates between Gaussian and power-law behavior. A univariate Student-$t$ random variable with $\kappa>0$ degrees of freedom has density,
\begin{equation} \label{eq:pdf-student-t}
f(z)
=
\frac{\Gamma\!\left(\frac{\kappa+1}{2}\right)}
{\sqrt{\kappa\pi}\,\Gamma\!\left(\frac{\kappa}{2}\right)}
\left(1+\frac{z^2}{\kappa}\right)^{-(\kappa+1)/2}.
\end{equation}
The degrees of freedom parameter $\kappa$ controls the tail behavior of the Student-$t$ distribution. As $\kappa \to \infty$, the Student-$t$ converges to a standard Gaussian. As $\kappa$ becomes smaller, it exhibits increasingly heavy-tailed behavior.

Unlike the Student-$t$ family, the Pareto distribution exhibits an exact polynomially decaying right tail. A univariate Pareto random variable with shape parameter
$\alpha>0$ and scale parameter $\lambda > 0$ has density,
\begin{equation} \label{eq:pareto-pdf}
    f(z)=\alpha\,\lambda^{\alpha}\,z^{-(\alpha+1)}, \; z\ge \lambda.
\end{equation}
The Fr\'echet distribution is another classical heavy-tailed distribution with a polynomially decaying right tail. A
univariate Fr\'echet random variable with shape parameter $\zeta>0$ and scale
parameter $\lambda>0$ has density,
\begin{equation} \label{eq:frechet-pdf}
f(z)=\zeta\,\lambda^{\zeta}\,z^{-(\zeta+1)}
\exp\!\left(-\left(\frac{\lambda}{z}\right)^{\zeta}\right), \;z>0.
\end{equation}
In both the Pareto and Fr\'echet families, the respective shape parameters $\alpha$ and $\zeta$ directly control tail heaviness and moment existence, in the sense that
the $k$th moment exists if and only if $k<\alpha$ and $k < \zeta$ respectively. The smaller $\alpha$ and $\zeta$ are, the heavier the upper tails. In particular, the variances for the Pareto and Fr\'echet are finite only for $\alpha, \zeta>2$, and they diverge as $\alpha, \zeta \downarrow 2$.

\subsection{Log-Concave and Strongly Log-Concave Distributions}
\label{supp:log-concavity}

A distribution with density $p$ on $\R^d$ is
\emph{log-concave} if $\log p$ is concave, with the convention
$\log 0=-\infty$. Equivalently, $p(x)\propto e^{-V(x)}$
for a convex function $V$, allowing $V=+\infty$ where
$p=0$. In terms of the density, this means that
\[
p((1-\theta)x+\theta y)
\ge p(x)^{1-\theta}p(y)^\theta,
\qquad x,y\in\R^d,\quad 0<\theta<1.
\]
Thus, the density cannot dip below the geometric interpolation
of its values at two points. Gaussian and Laplace distributions,
and uniform distributions on convex bodies, are examples.

The distribution is \emph{$t$-strongly log-concave}, for
$t>0$, if $V(x)-t\|x\|_2^2/2$ is convex. When
$V\in C^2(\R^d)$, this is equivalent to
\[
\nabla^2V(x)\succeq tI_d,\qquad x\in\R^d.
\]
This condition places a positive lower bound on the curvature
of $V$ in every unit direction. The standard Gaussian is
$1$-strongly log-concave.
See Saumard and Wellner~\citep{saumard2014logconcavity}
for a detailed review, and Chewi~\citep{chewi22} and
Bakry et al.~\citep{bakry2013analysis} for related functional
inequalities.

\subsection{Isoperimetric Profiles}
\label{supp:isoperimetric-profile}

An isoperimetric profile describes the smallest boundary
measure of a region having a specified probability.
For a Borel set $A\subseteq\R^d$, let
$A^\varepsilon=\{x:\operatorname{dist}(x,A)\le\varepsilon\}$.
Its boundary measure under $\nu$ is
\[
\nu^+(A)=
\liminf_{\varepsilon\downarrow0}
\frac{\nu(A^\varepsilon)-\nu(A)}{\varepsilon}.
\]
This measures how quickly probability is gained when the set
is slightly enlarged. The isoperimetric profile of $\nu$ is
\[
I_\nu(s)=
\inf_{\substack{A\subseteq\R^d\ \mathrm{Borel}\\\nu(A)=s}}
\nu^+(A),\qquad 0<s<1.
\]
Thus, every region of probability $s$ has boundary measure
at least $I_\nu(s)$.

For the standard Gaussian measure $\gamma$ in any dimension,
\[
I_\gamma(s)=I(s)=\varphi(\Phi^{-1}(s)),\qquad 0<s<1,
\]
where $\varphi$ and $\Phi$ are the standard univariate Gaussian
density and CDF. Half-spaces attain this bound.
This gives the interpretation used in
Section~\ref{sec:multimodal}: separating regions of prescribed
Gaussian probability requires a boundary of at least this size.
See Ledoux~\citep{ledoux2001concentration},
Milman~\citep{milman2009role}, and Chewi~\citep{chewi22}
for further background.

\subsection{Cheeger and Poincar\'e Constants}
\label{supp:cheeger-poincare}

Using the boundary measure defined above, the
\emph{Cheeger constant} of $\nu$ is
\[
h_\nu=
\inf_{\substack{A\subseteq\R^d\ \mathrm{Borel}\\0<\nu(A)<1}}
\frac{\nu^+(A)}{\min\{\nu(A),1-\nu(A)\}}.
\]
A larger $h_\nu$ means that separating two regions with
substantial probability requires a larger boundary measure.
Equivalently, $h_\nu$ is the largest constant satisfying
\[
h_\nu\inf_{c\in\R}\E_\nu|g-c|
\le \E_\nu\|\nabla g\|_2
\]
for every bounded locally Lipschitz function $g$.
This is the form used in Proposition~\ref{mm:logconcave}.

The \emph{variance Poincar\'e constant}
$C_{\mathrm{PI}}(\nu)$ is the smallest constant satisfying
\[
\operatorname{Var}_\nu(f)
\le C_{\mathrm{PI}}(\nu)\,
\E_\nu\|\nabla f\|_2^2,
\qquad f\in W^{1,2}(\nu).
\]
Smaller values give stronger control of a function's variance
through its expected squared gradient.
For the standard Gaussian measure,
$C_{\mathrm{PI}}(\gamma)=1$.

For log-concave $\nu$, the two constants are related by
\[
h_\nu\ge c\,C_{\mathrm{PI}}(\nu)^{-1/2}
\]
for a universal constant $c>0$.
This connects the source-dependent constants in our
moment-based and multimodal gradient bounds.
See Ledoux~\citep{ledoux2001concentration},
Bakry et al.~\citep{bakry2013analysis}, and
Milman~\citep{milman2009role} for further background.

\section{Proofs for Section~\ref{sec:gradient-bounds}} \label{append_proofs}
\label{app:tail-stability}

\label{app:poincare_proof}

\begin{proof}[Proof of Lemma \ref{thm:gaussian_poincare}]
This result and proof can be found in Theorem 1.6.4 of \cite{bogachev1998gaussian}.
\end{proof}

\begin{proof}[Proof of Theorem~\ref{lem:vector_poincare}]
(i) Apply the scalar Gaussian Poincar\'e inequality to each component
$T_j \in W^{1,2}(\gamma)$.

(ii) Fix $a\in\mathbb{R}^m$ and consider $g_a(z)=a^\top T(z)$.
Since $g_a\in W^{1,2}(\gamma)$ and $\nabla g_a(z)=J_T(z)^\top a$,
the scalar Poincar\'e inequality yields
\begin{align*}
\Var(a^\top T(Z)) &\le \E\big[\|J_T(Z)^\top a\|_2^2\big] \\
&= a^\top \E\!\big[J_T(Z)J_T(Z)^\top\big] a.
\end{align*}
Using $\Var(a^\top T(Z)) = a^\top \Cov(T(Z)) a$ for all $a$ gives
$\Cov(T(Z)) \preceq \E[J_T(Z)J_T(Z)^\top]$.

(iii) Taking traces of the PSD inequality in (ii) gives
\begin{align*}
\tr[\Cov(T(Z))] &\le \tr\!\big(\E[J_T(Z)J_T(Z)^\top]\big)
\\ &= \E\,\tr(J_T(Z)J_T(Z)^\top) =\E\|J_T(Z)\|_F^2.
\end{align*}

(iv) Let $f=T_i$ and $g=T_j$. By Cauchy--Schwarz,
\begin{align*}
&|\Cov(f(Z),g(Z))| \\
& \qquad = \big|\E[(f(Z)-\E f(Z))(g(Z)-\E g(Z))]\big| \\
& \qquad \le \sqrt{\Var(f(Z))}\,\sqrt{\Var(g(Z))}.
\end{align*}
Applying the scalar Gaussian Poincar\'e inequality to $f$ and $g$ gives
$\Var(f(Z))\le \E\|\nabla f(Z)\|_2^2$ and $\Var(g(Z))\le \E\|\nabla g(Z)\|_2^2$,
hence
$$
|\Cov(f(Z),g(Z))|
\le
\sqrt{\E\|\nabla f(Z)\|_2^2}\;
\sqrt{\E\|\nabla g(Z)\|_2^2}.
$$

\end{proof}

\begin{proof}[Proof of Lemma \ref{thm:gaussian_lp_poincare}]
   This result and proof can be found in Corollary 1.7.3 of \cite{bogachev1998gaussian}.
\end{proof}

\begin{proof}[Proof of Corollary \ref{cor:vector_gaussian_lp_poincare}]
Apply Lemma~\ref{thm:gaussian_lp_poincare} to each component $T_j$:
$$
\|T_j-\E_\gamma T_j\|_{L^p(\gamma)} \le C_p\,\|\|\nabla T_j\|_2\|_{L^p(\gamma)}.
$$
Raising to the power $p$ and summing over $j$ gives
$$
\sum_{j=1}^m \|T_j-\E_\gamma T_j\|_{L^p(\gamma)}^p
\le
C_p^p \sum_{j=1}^m \E_\gamma \|\nabla T_j(Z)\|_2^p.
$$
Next, since $\|J_T(z)\|_F^2=\sum_{j=1}^m \|\nabla T_j(z)\|_2^2$, for $p\ge2$ we have pointwise
\begin{align*}
\sum_{j=1}^m \|\nabla T_j(z)\|_2^p
&=
\sum_{j=1}^m \big(\|\nabla T_j(z)\|_2^2\big)^{p/2} \\
&\le
\Big(\sum_{j=1}^m \|\nabla T_j(z)\|_2^2\Big)^{p/2} \\
&=
\|J_T(z)\|_F^p.
\end{align*}
Taking expectations yields $\sum_{j=1}^m \E_\gamma \|\nabla T_j(Z)\|_2^p \le \E_\gamma \|J_T(Z)\|_F^p$.
This completes the proof with $C'_p := C_p^p$.
\end{proof}
 
\begin{proof}[Proof of Theorem~\ref{thm:log_concave_poincare}]
Suppose first that $d=1$. Since $\nu$ is isotropic,
$\Var_\nu(x)=1$. Corollary~4.3 of
\cite{bobkov1999logconcave} gives
\[
C_{\mathrm{PI}}(\nu)
\le 12\Var_\nu(x)=12.
\]
Here we use that the spectral gap in that reference is
the reciprocal of our variance Poincar\'e constant.

For $d\ge2$, Theorem~1.2 together with display~(1.4) of
\cite{klartag2023logarithmic} yields
\[
C_{\mathrm{PI}}(\nu)\le C_0\log d
\]
for a universal constant $C_0>0$.
Consequently, in every dimension $d\ge1$,
\[
C_{\mathrm{PI}}(\nu)\le C\log(ed),
\qquad C=\max\{12,C_0\}.
\]
The corresponding Poincar\'e inequalities apply to the
smooth functions $f\in W^{1,2}(\nu)$ in the theorem,
which proves the claim.
\end{proof}

\begin{proof}[Proof of Theorem \ref{thm:log_concave_Lp_poincare}]
  Theorem~\ref{thm:log_concave_poincare} establishes the usual
Poincar\'e inequality for $\nu$. By Theorem 2.4 and
Lemma 2.1 of \cite{milman2009role}, for every $2\le p<\infty$,
\[
\|f-\operatorname{med}_\nu(f)\|_{L^p(\nu)}
\le \widetilde C_{p,d,\nu}\,
\big\|\|\nabla f\|_2\big\|_{L^p(\nu)},
\]
where $\operatorname{med}_\nu(f)$ is a median of $f$ under $\nu$, and
$\widetilde C_{p,d,\nu}$ depends on $p$ and the Poincar\'e constant of $\nu$. Moreover,
\[
\|f-\E_\nu f\|_{L^p(\nu)}
\le \|f-\operatorname{med}_\nu(f)\|_{L^p(\nu)}
   +|\E_\nu f-\operatorname{med}_\nu(f)|
\le 2\|f-\operatorname{med}_\nu(f)\|_{L^p(\nu)}.
\]
Combining these inequalities proves the result with
$C_{p,d,\nu}=2\widetilde C_{p,d,\nu}$.
\end{proof}

\begin{remark}
The Poincar\'e inequalities above extend from smooth functions to their respective Sobolev classes with the same constants. These classes include locally Lipschitz functions with the required function and gradient integrability.

Proposition~27 of Barthe and Klartag~\citep{bartheklartag2020spectral} establishes that restrictions of functions in $C_c^\infty(\mathbb R^d)$ to $\Omega$ are dense in $W^{1,2}(\nu)$, with weak derivatives taken on $\Omega$. Their proof applies equally with $L^p$ norms replacing $L^2$ norms, giving density in $W^{1,p}(\nu)$ for every $1\le p<\infty$.

Thus, for $f\in W^{1,p}(\nu)$, choose smooth $f_n\to f$ in the Sobolev norm. We then have
\[
\|(f_n-\E_\nu f_n)-(f-\E_\nu f)\|_{L^p(\nu)}
\le 2\|f_n-f\|_{L^p(\nu)}\longrightarrow0.
\]
Convergence in the Sobolev norm also gives $\|\nabla f_n\|_{L^p(\nu)}\to\|\nabla f\|_{L^p(\nu)}$. Taking limits in the Poincar\'e inequality for $f_n$ therefore proves the same inequality for $f$, with the same constant. See Barthe and Klartag~\citep[Proposition~27 and its proof]{bartheklartag2020spectral} for details of this type of standard density and closure argument.
\end{remark}

\begin{proof}[Proof of Proposition~\ref{ht:stability}]
Let $X\sim\mu$ and $Y\sim\widehat\mu$ be any coupling.
Recall that
\[
\tr[\Cov(\mu)]
=\E\|X-\E X\|_2^2,
\qquad
\tr[\Cov(\widehat\mu)]
=\E\|Y-\E Y\|_2^2.
\]
We can decompose the centered target vector as
\[
X-\E X
=(Y-\E Y)+\big((X-Y)-\E(X-Y)\big).
\]
The triangle inequality in $L^2$ therefore gives
\begin{align*}
\sqrt{\tr[\Cov(\mu)]}
&=\big(\E\|X-\E X\|_2^2\big)^{1/2}\\
&\le
\big(\E\|Y-\E Y\|_2^2\big)^{1/2}
+\big(\E\|(X-Y)-\E(X-Y)\|_2^2\big)^{1/2}.
\end{align*}
Moreover, we center $X-Y$ by subtracting its mean.
Expanding the squared Euclidean norm gives
\begin{align*}
\E\|(X-Y)-\E(X-Y)\|_2^2
&=\E\|X-Y\|_2^2\\
&\quad-2\,\E\langle X-Y,\E(X-Y)\rangle\\
&\quad+\|\E(X-Y)\|_2^2.
\end{align*}
Since $\E(X-Y)$ is a fixed vector, linearity of expectation yields
\begin{align*}
\E\langle X-Y,\E(X-Y)\rangle
&=\langle\E(X-Y),\E(X-Y)\rangle\\
&=\|\E(X-Y)\|_2^2.
\end{align*}
Substituting this identity into the expansion and combining terms gives
\begin{align*}
\E\|(X-Y)-\E(X-Y)\|_2^2
&=\E\|X-Y\|_2^2-\|\E(X-Y)\|_2^2\\
&\le\E\|X-Y\|_2^2.
\end{align*}
Combining these observations yields
\[
\sqrt{\tr[\Cov(\mu)]}
\le
\sqrt{\tr[\Cov(\widehat\mu)]}
+\big(\E\|X-Y\|_2^2\big)^{1/2}.
\]
By definition,
\[
W_2(\mu,\widehat\mu)
=\inf_{\substack{X\sim\mu,\;Y\sim\widehat\mu\\
\text{over all couplings}}}
\big(\E\|X-Y\|_2^2\big)^{1/2}.
\]
Since the coupling was arbitrary, taking the infimum over
couplings and using $W_2(\mu,\widehat\mu)\le\varepsilon$ gives
\[
\sqrt{\tr[\Cov(\mu)]}
\le
\sqrt{\tr[\Cov(\widehat\mu)]}
+W_2(\mu,\widehat\mu)
\le
\sqrt{\tr[\Cov(\widehat\mu)]}+\varepsilon.
\]
Rearranging and using the nonnegativity of
$\sqrt{\tr[\Cov(\widehat\mu)]}$, we obtain
\[
\sqrt{\tr[\Cov(\widehat\mu)]}
\ge
\big(\sqrt{\tr[\Cov(\mu)]}-\varepsilon\big)_+.
\]
Finally, since $\widehat\mu=T_\#\gamma$, the trace bound in
Theorem~\ref{lem:vector_poincare} implies
\[
\big(\E_\gamma\|J_T\|_F^2\big)^{1/2}
\ge
\sqrt{\tr[\Cov(\widehat\mu)]}
\ge
\big(\sqrt{\tr[\Cov(\mu)]}-\varepsilon\big)_+,
\]
which proves the claim.
\end{proof}

\section{Proofs for Section~\ref{sec:multimodal}}
\label{mm:appendix}

\noindent For a nonempty closed set $A\subseteq\mathbb R^m$, let
\[
\operatorname{dist}(y,A)=\inf_{a\in A}\|y-a\|_2
\]
denote the Euclidean distance from $y$ to $A$.

We use the Gaussian isoperimetric inequality \citep{borell1975} and
the coarea formula \citep[Section~3.4]{evansgariepy2015}. Together, these imply that every bounded locally Lipschitz function $g:\R^d\to\R$ satisfies
\begin{equation}
\label{mm:gaussian-functional}
\int_{\R^d}\|\nabla g\|_2\,d\gamma
\geq\int_{\R} I\big(\gamma(\{g>t\})\big)\,dt,
\end{equation}
where the integral on the left may be infinite. An explicit formulation appears in Caselles, Miranda Jr., and Novaga~\citep[equation~(11)]{caselles2010totalvariation}.

\begin{proof}[Proof of Theorem~\ref{mm:theorem}]
Define
\[
f(z)=\min\{\operatorname{dist}(T(z),A),\Delta\}.
\]
The truncated distance is $1$-Lipschitz, so $f$ is locally Lipschitz and $\|\nabla f\|_2\leq\|J_T\|_{\mathrm{op}}$ almost everywhere. Since $0\leq f\leq\Delta$, its gradient vanishes at differentiability points where $f=0$ or $f=\Delta$. Moreover, $\{0<f<\Delta\}=T^{-1}(U)$. Consequently,
\begin{equation}
\label{mm:distance-gradient}
\|\nabla f\|_2
\leq\mathbf 1_{T^{-1}(U)}\|J_T\|_{\mathrm{op}}
\quad\text{almost everywhere}.
\end{equation}

For $0<t<\Delta$, we have
\[
\{f>t\}=T^{-1}((A^t)^c),\qquad
\gamma(\{f>t\})=1-P(A^t).
\]
Applying \eqref{mm:gaussian-functional} and using the symmetry of $I$, we obtain
\begin{align*}
\int_{T^{-1}(U)}\|J_T\|_{\mathrm{op}}\,d\gamma
&\geq\int_{\R^d}\|\nabla f\|_2\,d\gamma\\
&\geq\int_0^\Delta I\big(\gamma(\{f>t\})\big)\,dt\\
&=\int_0^\Delta I\big(P(A^t)\big)\,dt.
\end{align*}
This proves the first inequality.

The sets $A$ and $(A^\Delta)^c$ are disjoint, so $b\leq1/2$. The inclusions $A\subseteq A^t\subseteq A^\Delta$ give
\[
b\leq P(A^t)\leq1-b.
\]
By symmetry and monotonicity of $I$ on $[0,1/2]$, we therefore have $I(P(A^t))\geq I(b)$. Integrating over $0<t<\Delta$ proves the second inequality.
\end{proof}

\begin{proof}[Proof of Corollary~\ref{mm:corollary}]
The definition of total variation and the assumed probability bounds for $\mu$ give
\[
\begin{aligned}
P(A)&\geq\mu(A)-\varepsilon\geq w_1-\varepsilon,\\
P((A^\Delta)^c)&\geq\mu((A^\Delta)^c)-\varepsilon\geq w_2-\varepsilon,\\
P(U)&\leq\mu(U)+\varepsilon\leq\delta+\varepsilon.
\end{aligned}
\]
Thus the quantity $b$ in Theorem~\ref{mm:theorem} satisfies $b\geq w-\varepsilon>0$. Applying Theorem~\ref{mm:theorem} and using monotonicity of $I$ on $[0,1/2]$, we obtain
\[
\int_{T^{-1}(U)}\|J_T\|_{\mathrm{op}}\,d\gamma
\geq\Delta I(w-\varepsilon)>0.
\]
Together with $P(U)\leq\delta+\varepsilon$, this proves
bound~\eqref{mm:localized-mass}.

The integral is strictly positive, so $T^{-1}(U)$ has positive Gaussian measure. Therefore
\[
P(U)=\gamma(T^{-1}(U))>0.
\]

For $\delta+\varepsilon>0$ and $q>1$, H\"older's inequality gives
\[
\begin{aligned}
\left(\int_{T^{-1}(U)}\|J_T\|_{\mathrm{op}}^q\,d\gamma\right)^{1/q}
&\geq\frac{\displaystyle\int_{T^{-1}(U)}\|J_T\|_{\mathrm{op}}\,d\gamma}{P(U)^{1-1/q}}\\
&\geq\frac{\Delta I(w-\varepsilon)}{(\delta+\varepsilon)^{1-1/q}}.
\end{aligned}
\]
The global moment dominates the localized moment by nonnegativity. These inequalities also hold when the moment is infinite.
\end{proof}

{\hypersetup{linkcolor=blue,filecolor=blue}
\begin{proof}[Proof of Proposition~\ref{mm:logconcave}]
Define
\[
f(z)=\min\{\operatorname{dist}(T(z),A),\Delta\}.
\]
Since $\nu$ has a density, the gradient estimate \eqref{mm:distance-gradient} established in the proof of Theorem~\ref{mm:theorem} holds $\nu$-almost everywhere. Consequently,
\[
\int_{T^{-1}(U)}\|J_T\|_{\mathrm{op}}\,d\nu
\geq\E_\nu\|\nabla f\|_2.
\]

The function $f$ equals $0$ on $T^{-1}(A)$ and $\Delta$ on $T^{-1}((A^\Delta)^c)$. Each of these sets has probability at least $b$. Therefore, for every $c\in\R$,
\[
\E_\nu|f-c|\geq b|c|+b|\Delta-c|\geq b\Delta.
\]
Applying the Cheeger inequality in \eqref{mm:cheeger-functional} to the bounded locally Lipschitz function $f$ gives
\[
\E_\nu\|\nabla f\|_2
\geq h_\nu\inf_{c\in\R}\E_\nu|f-c|
\geq h_\nu\Delta b.
\]
Combining the gradient estimate \eqref{mm:distance-gradient} with the
Cheeger inequality \eqref{mm:cheeger-functional} and
\[
\inf_{c\in\R}\E_\nu|f-c|\geq b\Delta
\]
proves the localized
bound~\eqref{mm:logconcave-localized}.

To prove the claims concerning approximation of the target distribution $\mu$, assume
\[
\operatorname{TV}(P,\mu)\leq\varepsilon<w=\min\{w_1,w_2\}.
\]
Applying the definition of total variation to the three sets $A$, $(A^\Delta)^c$, and $U$, and using the assumed probability bounds for $\mu$, yields
\[
\begin{aligned}
P(A)&\geq\mu(A)-\varepsilon\geq w_1-\varepsilon,\\
P((A^\Delta)^c)&\geq\mu((A^\Delta)^c)-\varepsilon\geq w_2-\varepsilon,\\
P(U)&\leq\mu(U)+\varepsilon\leq\delta+\varepsilon.
\end{aligned}
\]
Hence $b\geq w-\varepsilon>0$, and the localized bound
\eqref{mm:logconcave-localized} gives
\[
\int_{T^{-1}(U)}\|J_T\|_{\mathrm{op}}\,d\nu
\geq h_\nu\Delta(w-\varepsilon)>0.
\]
The integral is strictly positive, so $T^{-1}(U)$ has positive $\nu$-measure. Therefore
\[
P(U)=\nu(T^{-1}(U))>0.
\]

For $\delta+\varepsilon>0$ and $q>1$, H\"older's inequality gives
\[
\begin{aligned}
\left(\int_{T^{-1}(U)}\|J_T\|_{\mathrm{op}}^q\,d\nu\right)^{1/q}
&\geq\frac{\displaystyle\int_{T^{-1}(U)}\|J_T\|_{\mathrm{op}}\,d\nu}{P(U)^{1-1/q}}\\
&\geq\frac{h_\nu\Delta(w-\varepsilon)}{(\delta+\varepsilon)^{1-1/q}}.
\end{aligned}
\]
Nonnegativity gives the global moment bound, completing the proof.
\end{proof}
}

\subsection{Calculations for Example~\ref{mm:example}}
\label{mm:example-proof}

We verify the claims in Example~\ref{mm:example}. Throughout, $a>0$
is fixed and $0<\sigma\le1$. We first show that the target moments
remain bounded as $\sigma\downarrow0$. We then show that the probability
between the modes tends to zero and apply Corollary~\ref{mm:corollary}.
Finally, we verify the derivative formula for the monotone transport.

To bound the target moments, let $B$ take the values $-1$ and $1$ with equal probabilities,
independently of a standard Gaussian random variable $G$. Then
\[
X_\sigma=aB+\sigma G
\]
has distribution $\mu_\sigma$. Since $|B|=1$ and $\sigma\le1$,
\[
|X_\sigma|\le a+|G|.
\]
Therefore, for every fixed $r>0$,
\[
\sup_{0<\sigma\le1}\E|X_\sigma|^r
\le \E(a+|G|)^r<\infty.
\]
The last expectation is finite because Gaussian random variables have
finite moments of every order. Thus every fixed absolute moment of
the target remains bounded as $\sigma\downarrow0$.

To compute the probability assigned to the separating region
and apply Corollary~\ref{mm:corollary}, take
\[
A=(-\infty,-a/2],\qquad \Delta=a,\qquad U=(-a/2,a/2).
\]
For the Gaussian component with mean $-a$,
\[
\mathbb P(-a+\sigma G\in U)
=\mathbb P\!\left(\frac{a}{2\sigma}<G<\frac{3a}{2\sigma}\right).
\]
The component with mean $a$ gives the same probability by symmetry.
Since the mixture weights sum to one, the probability
$\delta_\sigma=\mu_\sigma(U)$ is given by
\begin{align*}
\delta_\sigma
&=\Phi\!\left(\frac{3a}{2\sigma}\right)
  -\Phi\!\left(\frac{a}{2\sigma}\right)\\
&\le 1-\Phi\!\left(\frac{a}{2\sigma}\right)
\longrightarrow0.
\end{align*}
The limit follows because $a/(2\sigma)\to\infty$.

By symmetry and continuity of the target distribution, each side of
the valley has probability
\[
w_\sigma=\frac{1-\delta_\sigma}{2}\longrightarrow\frac12.
\]
For any locally Lipschitz exact transport $T$ from a standard Gaussian
source to $\mu_\sigma$, Corollary~\ref{mm:corollary} applies with
$\varepsilon=0$ and gives
\[
\big(\E_\gamma\|J_T\|_{\mathrm{op}}^q\big)^{1/q}
\ge
\frac{aI(w_\sigma)}{\delta_\sigma^{\,1-1/q}},
\qquad q>1.
\]
For every fixed $q>1$, the numerator tends to $aI(1/2)>0$, while
the denominator tends to zero. Hence the lower bound diverges, even
though every fixed target moment remains bounded.

Finally, consider the univariate monotone transport
\[
T_\sigma=F_\sigma^{-1}\circ\Phi,
\]
where $F_\sigma$ and $f_\sigma$ are the cumulative distribution function
and density of $\mu_\sigma$. The density is smooth and strictly
positive, so $F_\sigma$ has a smooth inverse on $(0,1)$.
Consequently, $T_\sigma$ is a locally Lipschitz exact transport.

Differentiating $F_\sigma(T_\sigma(z))=\Phi(z)$ gives
\[
f_\sigma(T_\sigma(z))T_\sigma'(z)=\varphi(z),
\]
and therefore
\[
T_\sigma'(z)=\frac{\varphi(z)}{f_\sigma(T_\sigma(z))}.
\]
Symmetry gives $F_\sigma(0)=1/2$, so $T_\sigma(0)=0$.
The target density at zero is
\begin{align*}
f_\sigma(0)
&=\frac{1}{2\sigma}\varphi(a/\sigma)
  +\frac{1}{2\sigma}\varphi(-a/\sigma)\\
&=\frac{1}{\sigma}\varphi(a/\sigma).
\end{align*}
Substituting into the derivative formula yields
\[
T_\sigma'(0)
=\frac{\sigma\varphi(0)}{\varphi(a/\sigma)}
=\sigma\exp\!\left(\frac{a^2}{2\sigma^2}\right)
\longrightarrow\infty.
\]
This verifies the derivative formula in Example~\ref{mm:example}.

\section{Proofs for Section~\ref{sec:transformations}}
\label{tf:app:proofs}

\subsection{Proofs for Bijective Reparametrizations}

\begin{proof}[Proof of Proposition~\ref{tf:prop:tail-transfer}]
We first establish the exponential tail rate and then use it to
prove the moment claims.

Since $\psi_s$ is odd and strictly increasing, its inverse gives
\[
\mathbb P(|Y|>y)
=
\mathbb P(|X|>s\sinh y)
=
(s\sinh y)^{-\alpha}L(s\sinh y)
\]
for all sufficiently large $y$. Taking logarithms and dividing by
$y$, we obtain
\[
\frac{-\log\mathbb P(|Y|>y)}{y}
=
\alpha\frac{\log(s\sinh y)}{y}
-
\frac{\log L(s\sinh y)}{y}.
\]

To evaluate the first term, use
\[
s\sinh y
=
\frac{s}{2}(e^y-e^{-y})
=
\frac{s}{2}e^y(1-e^{-2y}),
\qquad y>0.
\]
It follows that
\[
\log(s\sinh y)
=
y+\log(s/2)+\log(1-e^{-2y}).
\]
We therefore obtain
\[
\frac{\log(s\sinh y)}{y}\longrightarrow 1
\qquad\text{as }y\to\infty.
\]

A positive slowly varying function satisfies
\[
\lim_{x\to\infty}\frac{\log L(x)}{\log x}=0
\]
\citep{bingham1987regular}. Since $s\sinh y\to\infty$ as
$y\to\infty$, we consequently have
\[
\frac{\log L(s\sinh y)}{y}
=
\frac{\log L(s\sinh y)}{\log(s\sinh y)}
\frac{\log(s\sinh y)}{y}
\longrightarrow 0
\qquad\text{as }y\to\infty.
\]
Substituting these two limits proves the tail-rate formula
\eqref{tf:eq:tail-rate}.

We next show that $\E e^{t|Y|}<\infty$ for $0<t<\alpha$.
Fix such a $t$ and choose $b$ with $t<b<\alpha$.
The tail-rate limit \eqref{tf:eq:tail-rate} implies that there is
a finite $y_0\ge0$ such that
\[
\mathbb P(|Y|>y)\le e^{-by},
\qquad y\ge y_0.
\]
Using the equality
\[
e^{t|Y|}
=
1+t\int_0^{|Y|}e^{ty}\,dy
\]
and applying Tonelli's theorem, we obtain
\[
\begin{aligned}
\E e^{t|Y|}
&=
1+t\int_0^\infty e^{ty}\mathbb P(|Y|>y)\,dy\\
&\le
1+t\int_0^{y_0}e^{ty}\,dy
+t\int_{y_0}^\infty e^{-(b-t)y}\,dy.
\end{aligned}
\]
In the last expression, the first integral is finite because its
integrand is continuous on the bounded interval $[0,y_0]$.
The second integral is also finite, since
\[
\int_{y_0}^\infty e^{-(b-t)y}\,dy
=
\frac{e^{-(b-t)y_0}}{b-t}
<\infty.
\]
Thus $\E e^{t|Y|}<\infty$.

Finally, to establish finiteness of every polynomial moment,
fix $p>0$ and choose any $t\in(0,\alpha)$.
The function $u^p e^{-tu}$ is bounded on $[0,\infty)$, so there
is a finite constant $M_{p,t}$ such that
\[
u^p\le M_{p,t}e^{tu},
\qquad u\ge0.
\]
It follows that
\[
\E|Y|^p
\le M_{p,t}\E e^{t|Y|}
<\infty.
\]
This proves the polynomial-moment claim.
\end{proof}

\subsection{Calculations for Example~\ref{tf:ex:student}}
\label{tf:supp:student}

We verify the transformed density in
Example~\ref{tf:ex:student} and prove the finite expected squared
derivative statement~\eqref{tf:eq:student-energy}.
The calculation concerns a fixed $0<\kappa\le2$.

Write $p_\kappa$ for the Student-$t$ density.
The inverse transformation and its derivative are
\[
x=\sqrt{\kappa}\sinh y,
\qquad
\frac{dx}{dy}=\sqrt{\kappa}\cosh y>0.
\]
Since this transformation is strictly increasing,
\[
F_{\tau_\kappa}(y)
=
\mathbb P(Y\le y)
=
\mathbb P(X\le\sqrt{\kappa}\sinh y).
\]
Differentiating with respect to $y$ and substituting the
Student-$t$ density gives
\[
\begin{aligned}
f_\kappa(y)
&=
p_\kappa(\sqrt{\kappa}\sinh y)\sqrt{\kappa}\cosh y\\
&=
\frac{\Gamma((\kappa+1)/2)}
{\sqrt{\kappa\pi}\,\Gamma(\kappa/2)}
\left(
1+\frac{(\sqrt{\kappa}\sinh y)^2}{\kappa}
\right)^{-(\kappa+1)/2}
\sqrt{\kappa}\cosh y\\
&=
\frac{\Gamma((\kappa+1)/2)}
{\sqrt{\pi}\,\Gamma(\kappa/2)}
(1+\sinh^2 y)^{-(\kappa+1)/2}\cosh y\\
&=
\frac{\Gamma((\kappa+1)/2)}
{\sqrt{\pi}\,\Gamma(\kappa/2)}
(\cosh^2 y)^{-(\kappa+1)/2}\cosh y\\
&=
\frac{\Gamma((\kappa+1)/2)}
{\sqrt{\pi}\,\Gamma(\kappa/2)}
(\cosh y)^{-(\kappa+1)}\cosh y\\
&=
\frac{\Gamma((\kappa+1)/2)}
{\sqrt{\pi}\,\Gamma(\kappa/2)}
(\cosh y)^{-\kappa}.
\end{aligned}
\]
Here the factors $\sqrt{\kappa}$ cancel, and we use
$1+\sinh^2 y=\cosh^2 y$.
Since $\cosh y>0$, the remaining powers combine as
\[
(\cosh^2 y)^{-(\kappa+1)/2}\cosh y
=
(\cosh y)^{-(\kappa+1)}\cosh y
=
(\cosh y)^{-\kappa}.
\]
This proves~\eqref{tf:eq:student-density}.

To bound the derivative of the monotone transport map, we first
compare the upper-tail probability of $\tau_\kappa$ with its
density.
Write
\[
f_\kappa(y)=c_\kappa(\cosh y)^{-\kappa},
\qquad
Q_\kappa(y)
=
1-F_{\tau_\kappa}(y)
=
\int_y^\infty f_\kappa(v)\,dv,
\]
where $c_\kappa>0$ is the normalizing constant in
\eqref{tf:eq:student-density}.
For $y\ge0$, the inequalities
\[
\frac{e^y}{2}\le\cosh y\le e^y
\]
give, upon taking reciprocals,
\[
e^{-y}\le\frac{1}{\cosh y}\le2e^{-y}.
\]
Raising each term to the positive power $\kappa$ and multiplying
by $c_\kappa$ therefore yields
\[
c_\kappa e^{-\kappa y}
\le
f_\kappa(y)
\le
2^\kappa c_\kappa e^{-\kappa y},
\qquad y\ge0.
\]
Integrating the upper bound over $[y,\infty)$ gives
\[
Q_\kappa(y)
\le
\frac{2^\kappa c_\kappa}{\kappa}e^{-\kappa y}.
\]
Dividing by the lower bound on $f_\kappa(y)$, we obtain
\[
\frac{Q_\kappa(y)}{f_\kappa(y)}
\le
\frac{2^\kappa}{\kappa},
\qquad y\ge0.
\]

We now apply this bound to
$S_\kappa=F_{\tau_\kappa}^{-1}\circ\Phi$.
Let $\varphi$ denote the standard Gaussian density.
Since $f_\kappa$ is smooth and strictly positive,
$S_\kappa$ is smooth, and differentiating
\[
F_{\tau_\kappa}(S_\kappa(z))=\Phi(z)
\]
gives
\[
S_\kappa'(z)
=
\frac{\varphi(z)}{f_\kappa(S_\kappa(z))}.
\]
Both $\varphi$ and $f_\kappa$ are even.
We first express this property in terms of their CDFs.
For the transformed target, substituting $u=-v$ gives
\[
\begin{aligned}
F_{\tau_\kappa}(-y)
&=
\int_{-\infty}^{-y}f_\kappa(v)\,dv\\
&=
\int_y^\infty f_\kappa(-u)\,du\\
&=
\int_y^\infty f_\kappa(u)\,du\\
&=
1-F_{\tau_\kappa}(y).
\end{aligned}
\]
The same argument for the even Gaussian density gives
\[
\Phi(-z)=1-\Phi(z).
\]

Using $F_{\tau_\kappa}(S_\kappa(z))=\Phi(z)$, we therefore obtain
\[
\begin{aligned}
F_{\tau_\kappa}(S_\kappa(-z))
&=
\Phi(-z)\\
&=
1-\Phi(z)\\
&=
1-F_{\tau_\kappa}(S_\kappa(z))\\
&=
F_{\tau_\kappa}(-S_\kappa(z)).
\end{aligned}
\]
Since $f_\kappa>0$, the CDF $F_{\tau_\kappa}$ is strictly
increasing.
Equality of these CDF values therefore implies
\[
S_\kappa(-z)=-S_\kappa(z),
\]
so the monotone transport map $S_\kappa$ is odd and its
derivative $S_\kappa'$ is even.

Oddness also gives $S_\kappa(0)=0$.
Moreover, $S_\kappa=F_{\tau_\kappa}^{-1}\circ\Phi$ is strictly
increasing because both $F_{\tau_\kappa}^{-1}$ and $\Phi$
are strictly increasing.
Consequently,
\[
S_\kappa(z)>S_\kappa(0)=0,
\qquad z>0.
\]
In particular, $S_\kappa(z)\ge0$ for $z\ge0$, so we can apply
the bound on $Q_\kappa(y)/f_\kappa(y)$ at $y=S_\kappa(z)$.
To do so, we also use
\[
Q_\kappa(S_\kappa(z))
=
1-F_{\tau_\kappa}(S_\kappa(z))
=
1-\Phi(z).
\]
Consequently, for $z\ge0$,
\[
\begin{aligned}
S_\kappa'(z)
&=
\frac{\varphi(z)}{1-\Phi(z)}
\frac{Q_\kappa(S_\kappa(z))}
{f_\kappa(S_\kappa(z))}\\
&\le
\frac{2^\kappa}{\kappa}
\frac{\varphi(z)}{1-\Phi(z)}.
\end{aligned}
\]

A standard lower bound for the Gaussian Mills ratio is
\[
\frac{1-\Phi(z)}{\varphi(z)}
\ge
\frac{z}{1+z^2},
\qquad z>0.
\]
Taking reciprocals, for $z\ge1$ we obtain
\[
\frac{\varphi(z)}{1-\Phi(z)}
\le z+\frac1z
\le2z.
\]
For $0\le z\le1$, we instead use
$\varphi(z)\le\varphi(0)$ and
$1-\Phi(z)\ge1-\Phi(1)>0$, which give
\[
\frac{\varphi(z)}{1-\Phi(z)}
\le
\frac{\varphi(0)}{1-\Phi(1)}
<\infty.
\]
Combining the bounds for these two ranges, and recalling that
$\kappa$ is fixed, we obtain a finite constant $C_\kappa$ such that
\[
S_\kappa'(z)\le C_\kappa(1+z),
\qquad z\ge0.
\]
Since $S_\kappa'$ is even, this bound extends to
\[
S_\kappa'(z)\le C_\kappa(1+|z|),
\qquad z\in\R.
\]
For $Z\sim N(0,1)$, squaring and taking expectation gives
\[
\E[(S_\kappa'(Z))^2]
\le
C_\kappa^2\E[(1+|Z|)^2]
<\infty.
\]
This proves~\eqref{tf:eq:student-energy}.

{\hypersetup{linkcolor=blue,filecolor=blue}
\subsection{Reparametrizations for Multimodal Target Densities}
\label{supp:component-means}

We provide the details for the reparametrization in
Section~\ref{tf:subsec:component-means}.
We first verify that $\Psi$ is an invertible change of coordinates.
We then prove the derivative comparisons in
Propositions~\ref{tf:prop:component-locations}
and~\ref{cm:prop:original-growth}.

\begin{proposition}[Invertibility]
\label{cm:prop:geometry}
The map $\Psi$ in \eqref{tf:eq:component-means-map} is a bijection
from $\R$ onto $\R$ such that both $\Psi$ and $\Psi^{-1}$ are
$C^1$, and satisfies $0<\Psi'\le1$.
\end{proposition}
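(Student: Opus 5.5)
The plan is to establish the properties of the derivative $a$ first and then read off the properties of $\Psi=\int_0^x a$ via the fundamental theorem of calculus and the inverse function theorem.

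\textbf{Step 1: continuity of $a$.} On $[L_\Psi,U_\Psi]$ the function $d(x)=\min_i|x-c_i|$ is a minimum of finitely many $1$-Lipschitz functions, so it is continuous. The map $u\mapsto\exp[-(u-r)_+^2/(2h^2)]$ is continuous, so $a$ is continuous on $[L_\Psi,U_\Psi]$.

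The only place continuity could fail is at the endpoints $L_\Psi$ and $U_\Psi$. Since $c_1<\cdots<c_K$, at $x=L_\Psi=c_1-r$ the nearest reference location is $c_1$. Hence $d(L_\Psi)=r$, so $(d-r)_+=0$ and $a(L_\Psi)=1$. The same argument gives $d(U_\Psi)=r$ and $a(U_\Psi)=1$. These values match the constant value $1$ outside the interval, so $a$ is continuous on all of $\R$. This endpoint check is the one genuinely nontrivial point, and it is why $L_\Psi$ and $U_\Psi$ were chosen at distance exactly $r$ from the extreme reference locations.

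\textbf{Step 2: bounds on $a$.} An exponential of a nonpositive number lies in $(0,1]$, so $0<a\le1$ everywhere. By the fundamental theorem of calculus, $\Psi$ is $C^1$ with $\Psi'=a$, which gives $0<\Psi'\le 1$. In particular $\Psi$ is strictly increasing, hence injective.

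\textbf{Step 3: surjectivity.} For $x>U_\Psi$ we have
\[
\Psi(x)=\Psi(U_\Psi)+(x-U_\Psi)\to\infty,
\]
and symmetrically $\Psi(x)=\Psi(L_\Psi)-(L_\Psi-x)\to-\infty$ as $x\to-\infty$. Since $\Psi$ is continuous, the intermediate value theorem shows that $\Psi$ maps $\R$ onto $\R$.

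\textbf{Step 4: regularity of the inverse.} $\Psi$ is a strictly increasing continuous bijection with $\Psi'>0$ everywhere. The inverse function theorem in one dimension then gives that $\Psi^{-1}$ is differentiable with
\[
(\Psi^{-1})'(y)=\frac{1}{a(\Psi^{-1}(y))}.
\]
This expression is continuous because $a$ is continuous and positive and $\Psi^{-1}$ is continuous. Hence $\Psi^{-1}$ is $C^1$, which completes the argument.
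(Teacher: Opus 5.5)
Your proposal is correct and follows the paper's proof essentially step for step. Both establish continuity of $a$ via the endpoint check $d(L_\Psi)=d(U_\Psi)=r$, get $0<\Psi'=a\le 1$ from the fundamental theorem of calculus, use unit slope on the tails for surjectivity, and apply the one-dimensional inverse function theorem to show $\Psi^{-1}$ is $C^1$.
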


\begin{proof}
The distance function
\[
d(x)=\min_i|x-c_i|
\]
is continuous.
The formula for $a$ on $[L_\Psi,U_\Psi]$ in
\eqref{tf:eq:component-means-derivative} is therefore continuous.
At the endpoints $L_\Psi=c_1-r$ and $U_\Psi=c_K+r$,
the value of $d$ is $r$, so this formula gives $a=1$.
Since $a(x)=1$ for $x<L_\Psi$ and $x>U_\Psi$,
the function $a$ is continuous at both endpoints and hence on $\R$.
Its definition also gives
\[
0<a(x)\le1,
\qquad x\in\R.
\]

By the fundamental theorem of calculus,
\[
\Psi(x)=\int_0^x a(t)\,dt
\]
is $C^1$ with $\Psi'=a>0$.
Thus $\Psi$ is strictly increasing.
Outside $[L_\Psi,U_\Psi]$, its derivative equals one,
so $\Psi(x)$ equals $x$ plus a constant on each tail.
Consequently,
\[
\lim_{x\to-\infty}\Psi(x)=-\infty,
\qquad
\lim_{x\to\infty}\Psi(x)=\infty.
\]
Together with continuity and strict monotonicity, these limits
show that $\Psi$ is a bijection onto $\R$.

The inverse function theorem gives
\[
(\Psi^{-1})'(y)
=
\frac{1}{a(\Psi^{-1}(y))}.
\]
The right-hand side is continuous because $a$ is continuous
and strictly positive.
Hence $\Psi^{-1}$ is also $C^1$.
\end{proof}

Having established invertibility, we now compare the monotone
transport maps from the standard Gaussian to the original and
transformed targets.
Proposition~\ref{tf:prop:component-locations} gives a derivative
bound for the map to the transformed target that is independent
of the component separation $D$.
Proposition~\ref{cm:prop:original-growth} shows that the expected
squared derivative of the map to the original target grows
rapidly with $D$.

Let $\mu_D$ have density
\begin{equation}
\label{cm:eq:mixture}
p_D(x)
=
\tfrac12\varphi(x+D/2)+\tfrac12\varphi(x-D/2),
\qquad D>0,
\end{equation}
where $\varphi$ and $\Phi$ are the standard Gaussian density
and CDF.
Write
\[
m_1=-D/2,\qquad m_2=D/2
\]
for the component means, and let $F_D$ be the CDF of $\mu_D$.

For the bound in
Proposition~\ref{tf:prop:component-locations},
fix $r>0$, $0<h<1$, and $0\le\eta\le r$.
Suppose the reference component locations satisfy
\[
c_1<c_2,\qquad |c_i-m_i|\le\eta,\quad i=1,2.
\]
Construct $\Psi$ using
\eqref{tf:eq:component-means-map} and
\eqref{tf:eq:component-means-derivative}, and define
\begin{equation}
\label{cm:eq:transports}
T_D(z)=F_D^{-1}(\Phi(z)),
\qquad
S_D(z)=\Psi(T_D(z)).
\end{equation}
For $Z\sim N(0,1)$, the variable $T_D(Z)$ has distribution
$\mu_D$, while $S_D(Z)$ has distribution $\Psi_\#\mu_D$.
The CDF $F_D$ specifies the exact transports used in this
comparison.
Constructing $\Psi$ requires only the reference locations
and the parameters $r,h$.

To obtain formulas for the transport derivatives, differentiate
\[
F_D(T_D(z))=\Phi(z).
\]
Since $p_D$ is smooth and strictly positive, this gives
\[
p_D(T_D(z))T_D'(z)=\varphi(z).
\]
Let
\[
I(u)=\varphi(\Phi^{-1}(u)),
\qquad 0<u<1.
\]
For $x=T_D(z)$, we have $z=\Phi^{-1}(F_D(x))$.
Applying the chain rule to $S_D=\Psi\circ T_D$ therefore yields
\begin{equation}
\label{cm:eq:gradient}
T_D'(z)
=
\frac{I(F_D(x))}{p_D(x)},
\qquad
S_D'(z)
=
a(x)\frac{I(F_D(x))}{p_D(x)}.
\end{equation}
These formulas identify the factor $a(x)$ through which the
change of coordinates modifies the transport derivative.

We will use the elementary properties
\[
I(u)\le\frac{1}{\sqrt{2\pi}},
\qquad
I(1-u)=I(u),
\qquad
I'(u)=-\Phi^{-1}(u).
\]
In particular, $I$ is increasing on $(0,1/2]$ and decreasing
on $[1/2,1)$.

\begin{proof}[Proof of Proposition~\ref{tf:prop:component-locations}]
Fix $z\in\R$ and write $x=T_D(z)$.
We bound $S_D'(z)$ separately inside and outside
$[L_\Psi,U_\Psi]$.
Inside this interval, the factor $a(x)$ offsets the small mixture
density in the denominator of \eqref{cm:eq:gradient}.
Outside the interval, $a(x)=1$, and we bound the original
transport derivative directly.

First suppose $x\in[L_\Psi,U_\Psi]$.
Choose a nearest reference location $c_j$, so that
$|x-c_j|=d(x)$.
The reference-location error bound gives
\[
|x-m_j|
\le
|x-c_j|+|c_j-m_j|
\le
d(x)+\eta.
\]
The corresponding Gaussian component therefore gives
\[
p_D(x)
\ge
\tfrac12\varphi(x-m_j)
\ge
\frac{1}{2\sqrt{2\pi}}
\exp\left(-\frac{(d(x)+\eta)^2}{2}\right).
\]
Combining this bound with $I\le1/\sqrt{2\pi}$,
\eqref{cm:eq:gradient}, and the formula for $a$ in
\eqref{tf:eq:component-means-derivative}, we obtain
\[
S_D'(z)
\le
2\exp\left(
\frac{(d(x)+\eta)^2}{2}
-
\frac{(d(x)-r)_+^2}{2h^2}
\right).
\]

We next bound the exponent independently of $d(x)$.
If $d(x)\le r$, then
\[
\frac{(d(x)+\eta)^2}{2}
-
\frac{(d(x)-r)_+^2}{2h^2}
\le
\frac{(r+\eta)^2}{2}
\le
\frac{(r+\eta)^2}{2(1-h^2)}.
\]
If $d(x)>r$, write $d(x)=r+u$ with $u>0$.
Completing the square gives
\[
\begin{aligned}
\frac{(u+r+\eta)^2}{2}-\frac{u^2}{2h^2}
&=
\frac{(r+\eta)^2}{2(1-h^2)}\\
&\quad-
\frac{1-h^2}{2h^2}
\left(
u-\frac{h^2(r+\eta)}{1-h^2}
\right)^2\\
&\le
\frac{(r+\eta)^2}{2(1-h^2)},
\end{aligned}
\]
where the inequality uses $0<h<1$.
Both cases therefore give
\[
S_D'(z)
\le
2\exp\left(
\frac{(r+\eta)^2}{2(1-h^2)}
\right)
=:C
\]
for $x\in[L_\Psi,U_\Psi]$.
This constant depends only on $r,h,\eta$ and is independent of $D$.

Now suppose $x<L_\Psi=c_1-r$.
Since $c_1-m_1\le\eta\le r$, we have
$L_\Psi\le m_1<m_2$.
Thus $x$ lies to the left of both component means, and
\[
F_D(x)
=
\tfrac12\Phi(x-m_1)+\tfrac12\Phi(x-m_2)
\le
\Phi(x-m_1)
\le\tfrac12.
\]
Because $I$ is increasing on $(0,1/2]$,
\[
I(F_D(x))
\le
I(\Phi(x-m_1))
=
\varphi(x-m_1).
\]
Together with $p_D(x)\ge\varphi(x-m_1)/2$ and $a(x)=1$,
the formula for $S_D'$ in \eqref{cm:eq:gradient} gives
\[
S_D'(z)\le2\le C.
\]

Similarly, suppose $x>U_\Psi=c_2+r$.
Since $m_2-c_2\le\eta\le r$, we have
$m_1<m_2\le U_\Psi<x$.
Consequently,
\[
F_D(x)\ge\Phi(x-m_2)\ge\tfrac12.
\]
Because $I$ is decreasing on $[1/2,1)$,
\[
I(F_D(x))
\le
I(\Phi(x-m_2))
=
\varphi(x-m_2).
\]
Using $p_D(x)\ge\varphi(x-m_2)/2$ and $a(x)=1$ in
\eqref{cm:eq:gradient} again yields
\[
S_D'(z)\le2\le C.
\]

All factors in the expression for $S_D'$ in
\eqref{cm:eq:gradient} are strictly positive.
Combining the three regions proves
\[
0<S_D'(z)\le C,
\qquad z\in\R.
\]
Squaring and taking expectation gives
\[
\E[S_D'(Z)^2]\le C^2,
\qquad Z\sim N(0,1),
\]
as required.
\end{proof}

We next prove the growth bound for the original monotone
transport map $T_D$.
Near the midpoint between the component means, the density
$p_D(x)$ becomes exponentially small as $D$ increases,
while $I(F_D(x))$ remains bounded below.
We use an interval around this midpoint to obtain a lower bound
on the expected squared derivative.

\begin{proof}[Proof of Proposition~\ref{cm:prop:original-growth}]
Since $T_D(Z)$ has density $p_D$, the formula for $T_D'$ in
\eqref{cm:eq:gradient} gives
\[
\begin{aligned}
\E[T_D'(Z)^2]
&=
\int_\R
\left(\frac{I(F_D(x))}{p_D(x)}\right)^2p_D(x)\,dx\\
&=
\int_\R
\frac{I(F_D(x))^2}{p_D(x)}\,dx,
\end{aligned}
\]
allowing either side to be infinite.
We obtain a lower bound by restricting the last integral to
$[-1/D,1/D]$.

For $D\ge2$, this interval lies between the component means.
Thus, for every $x$ in the interval,
\[
\frac12\le\Phi(x-m_1)\le1,
\qquad
0\le\Phi(x-m_2)\le\frac12.
\]
Averaging the two component CDFs gives
\[
\frac14\le F_D(x)\le\frac34.
\]
The symmetry and monotonicity of $I$ therefore imply
\[
I(F_D(x))\ge I(1/4)>0.
\]

On the same interval, the distance from $x$ to either component
mean is at least $D/2-1/D$.
Since the standard Gaussian density decreases with distance
from zero,
\[
\begin{aligned}
p_D(x)
&\le
\frac{1}{\sqrt{2\pi}}
\exp\left[-\frac12\left(\frac D2-\frac1D\right)^2\right]\\
&=
\frac{1}{\sqrt{2\pi}}
\exp\left(-\frac{D^2}{8}+\frac12-\frac{1}{2D^2}\right)\\
&\le
\frac{e^{1/2}}{\sqrt{2\pi}}e^{-D^2/8}.
\end{aligned}
\]

Combining the lower bound on $I(F_D(x))$ with the upper bound
on $p_D(x)$, and using the interval length $2/D$, we obtain
\[
\begin{aligned}
\E[T_D'(Z)^2]
&\ge
\int_{-1/D}^{1/D}
\frac{I(F_D(x))^2}{p_D(x)}\,dx\\
&\ge
\frac{2}{D}\,I(1/4)^2\sqrt{2\pi}\,e^{-1/2}e^{D^2/8}\\
&=
\frac{c}{D}e^{D^2/8},
\end{aligned}
\]
where
$c=2I(1/4)^2\sqrt{2\pi}\,e^{-1/2}>0$
is independent of $D$.
This proves the lower bound \eqref{cm:eq:original-growth}.
\end{proof}

}

\section{Experimental Details and Additional Experiments}
\label{app:additional-sims}

\subsection{Details of the Heavy-Tail Illustrations}
\label{supp:heavy-tail-details}

These distribution families are widely used in many fields such as climatology, hydrology, and finance \citep{coles2001introduction, katz2002statistics, bradley2003financial}, making them natural and representative test cases. They are also particularly well-suited for illustrating our theory for several reasons. First, a transport map from a Gaussian source can be constructed explicitly for these targets, enabling direct
inspection of their gradients and Jacobians. This allows us to numerically assess the tightness and practical relevance of the theoretical lower bounds derived in Section \ref{sec:gradient-bounds}.
Second, the tail behavior of these families
is governed by interpretable parameters (i.e., the degrees of freedom for the Student-$t$ family and the shape parameters for the Pareto and Fr\'echet families), providing explicit control over moment
growth and allowing us to systematically investigate varying degrees of tail heaviness.

In the multivariate case, when pushing forward a standard Gaussian in $\R^d$ to a product distribution also in $\R^d$, the pushforward map can be applied coordinate-wise. If $d=2$, this is given by $$
T(z_1,z_2)
=
\big(F_{\mu}^{-1}(F_{\nu}(z_1)), F_{\mu}^{-1}(F_{\nu}(z_2))\big).
$$
The Jacobian of $T$ is thus also diagonal. We adopt this construction for the explicit heavy-tail simulations below.

As target distributions, we consider
three canonical families of heavy-tailed laws: (a) Student-$t$
distributions, (b) Pareto distributions, and (c) Fr\'echet
distributions. Here, all target measures are product measures with independent
coordinates. For ease of visualization, we present results in the two-dimensional setting.
Since all target distributions are product measures, this entails no loss of generality,
and the behavior observed in two dimensions is representative of higher-dimensional
cases.

For each parameter setting, we plot both sides of the trace bound Poincar\'e inequality given in part (iii) of Theorem \ref{lem:vector_poincare}. Specifically, the lower bound (left-hand side of the inequality) corresponds to the trace of the covariance matrix of the target distribution, which admits a closed-form expression for the distributions
considered. The right-hand side of the inequality is given by the sum of the expected squared norm of the Jacobian diagonals of the transport map under the Gaussian source measure. We compute this
expectation numerically using Gauss--Hermite quadrature. Plotting
these quantities together allows us to directly assess the tightness and practical
relevance of our theoretical bounds.

\subsection{Empirical Illustrations with a Log-Concave Source Distribution}
\label{app:logconcave_experiments}

The simulations in Section~\ref{sec:examples} take the source distribution to be a standard Gaussian, since this is the most common choice in pushforward generative modeling. Our theory in Section~\ref{sec:gradient-bounds}, however, applies to any isotropic log-concave source distribution. Here we replicate the experiments of Section~\ref{sec:examples} with the Gaussian source replaced by a Laplace distribution, which is log-concave but not strongly log-concave. More precisely, the Laplace distribution under consideration is the product of two independent Laplace distributions with location parameter $0$ and scale parameter $1/\sqrt{2}$ (since we want its variance to be $1$ to satisfy the isotropic condition). The target families and overall procedure are otherwise identical to those of Section~\ref{sec:examples}: we consider the Student-$t$, Pareto, and Fr\'echet families with parameters controlling tail heaviness and construct the transport map coordinate-wise via the probability integral transform $T(z) = F_\mu^{-1}(F_\nu(z))$ as in Section~\ref{subsec:construction}.

We report (a) comparisons of the numerically evaluated expected squared Frobenius norm of the Jacobian against the lower bound predicted by the log-concave Poincar\'e inequality (Theorem~\ref{thm:log_concave_poincare}, part (iii) of Theorem~\ref{lem:vector_poincare}), and (b) heatmaps of the log absolute Jacobian determinant across the input space. For the Gaussian case of the Poincar\'e
inequality, the leading constant is known. For log-concave distributions, the constant in the Poincar\'e inequality is in general not known. Thus, we use a working constant of $2$ for our experiments here, which allows us to inspect how closely the bound tracks the Jacobian size as tail behavior changes over these families.

The purpose of these supplementary experiments is to verify that the qualitative behavior identified in the Gaussian case is not an artifact of the Gaussian source: heavy-tailed targets continue to force large Jacobian norms when transporting from a non-Gaussian, log-concave source, in agreement with the moment-to-Jacobian inequalities for general isotropic log-concave measures.

\begin{figure}[!htb]
    \centering
    \begin{subfigure}{0.32\textwidth}
        \includegraphics[width=\textwidth]{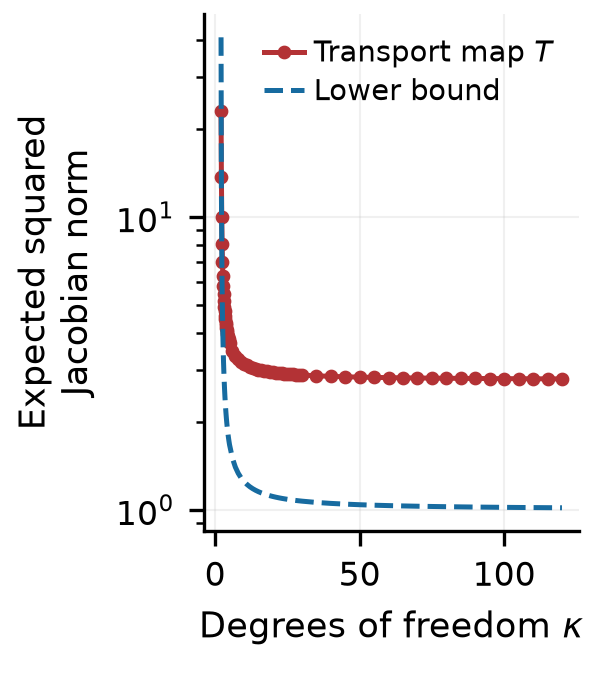}
        \caption{Student-$t$ target}
    \end{subfigure}
    \hfill
    \begin{subfigure}{0.32\textwidth}
        \includegraphics[width=\textwidth]{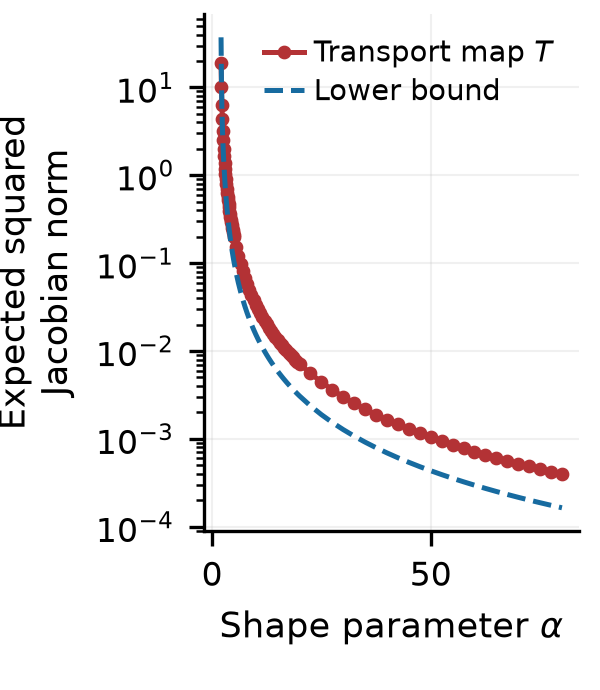}
        \caption{Pareto target}
    \end{subfigure}
    \hfill
    \begin{subfigure}{0.32\textwidth}
        \includegraphics[width=\textwidth]{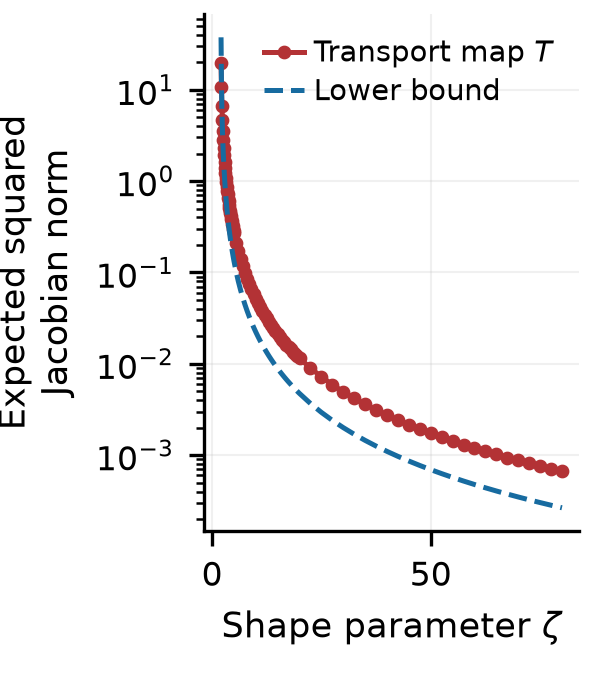}
        \caption{Fr\'echet target}
    \end{subfigure}
    \caption{Comparison of Poincar\'e lower bounds with the expected squared Frobenius norm of the transport Jacobian for a product Laplace source and heavy-tailed target distributions. The red curves with circular markers show the numerically evaluated quantities. The blue dashed curves show the lower bounds, using $C_{\mathrm{PI}}(\nu)=2$.}
    \label{fig:jacobian_vertical_logconcave}
\end{figure}

\begin{figure}[!htb]
    \centering
    \begin{subfigure}{\textwidth}
        \centering
        \includegraphics[width=\textwidth]{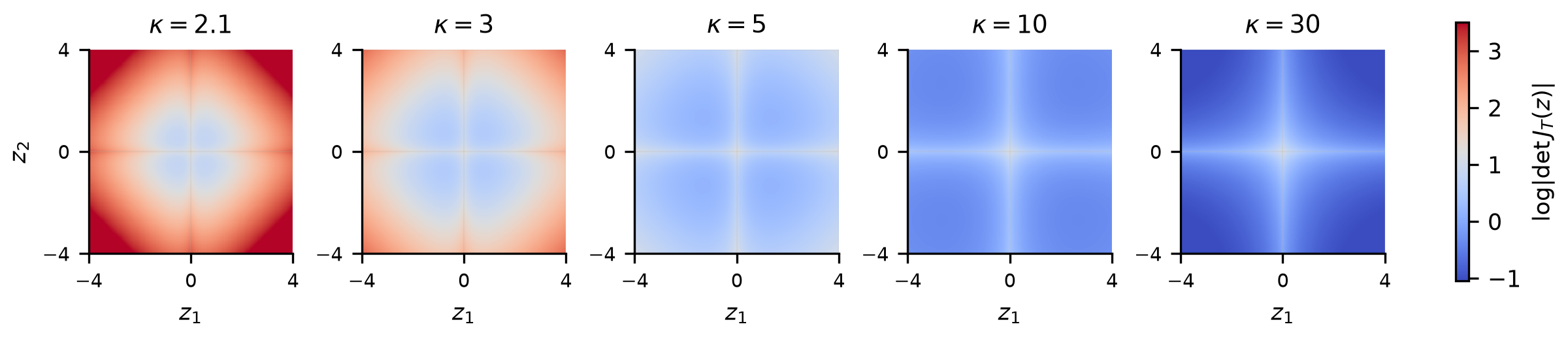}
        \caption{Student-$t$ target}
    \end{subfigure}

    \begin{subfigure}{\textwidth}
        \centering
        \includegraphics[width=\textwidth]{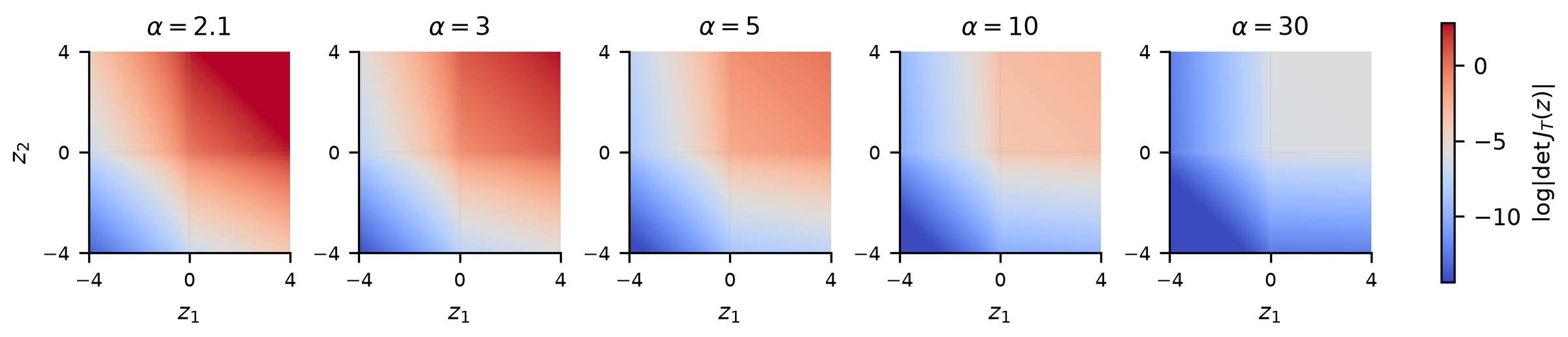}
        \caption{Pareto target}
    \end{subfigure}

    \begin{subfigure}{\textwidth}
        \centering
        \includegraphics[width=\textwidth]{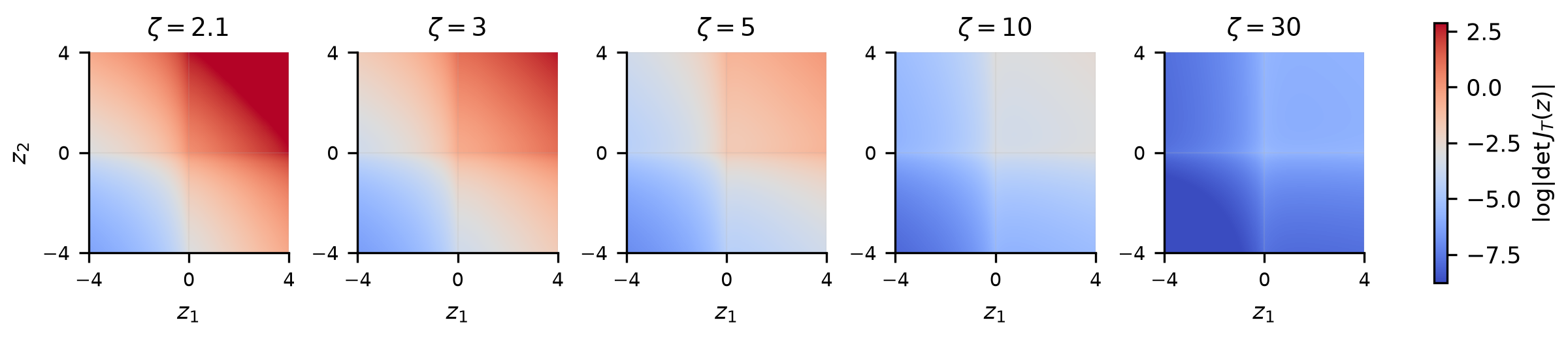}
        \caption{Fr\'echet target}
    \end{subfigure}

    \caption{Log absolute Jacobian determinant heatmaps for log-concave-to-heavy-tailed transport maps. As in the Gaussian-source case (Figure~\ref{fig:heatmap}), the magnitude of the Jacobian grows rapidly away from the center as the target becomes increasingly heavy-tailed. Note that the Pareto and Fr\'echet distributions are right-skewed and supported on the positive orthant.}
    \label{fig:heatmap_logconcave}
\end{figure}

As shown in Figure~\ref{fig:jacobian_vertical_logconcave}, the numerical estimates of the expected squared Frobenius norm continue to track the log-concave Poincar\'e lower bounds across all three target families and across the full range of tail-heaviness parameters considered. The heatmaps in Figure~\ref{fig:heatmap_logconcave} also display the same qualitative pattern as in the Gaussian-source case (Figure~\ref{fig:heatmap}): rapid Jacobian growth in regions corresponding to the heavy-tailed behavior of the target. Together, these results confirm that the moment-to-Jacobian phenomenon identified in the main text is robust to the choice of (light-tailed) log-concave source and is not specific to Gaussian sources.

\subsection{Empirical Illustrations with Correlated Heavy-Tailed Targets}
\label{app:correlated_experiments}

The simulations in Section~\ref{sec:examples} take the target distribution to be a product measure with independent coordinates, since the resulting transport map is then available coordinate-wise in closed form via the probability integral transform. Our theoretical results in Section~\ref{sec:gradient-bounds}, however, are stated in full generality and do not require the target to be a product measure. In this section, we replicate the experiments of Section~\ref{sec:examples} with the independent product target replaced by a correlated two-dimensional heavy-tailed targets. We replicate the experiments with correlated heavy-tailed targets obtained as linear transformations of independent heavy-tailed coordinates. Specifically, we first construct an independent heavy-tailed vector with Student-$t$, Pareto, or Fr\'echet coordinates as before, and then we apply a linear map $L$, chosen as the Cholesky factor of a prescribed correlation matrix.

The procedure is otherwise identical to that of Section~\ref{sec:examples}: we transport a standard Gaussian source to the correlated heavy-tailed target, vary the parameters that control tail heaviness and the correlation parameter $\rho\in\{0.5,0.9\}$, and report comparisons of the numerically evaluated expected squared Frobenius norm of the Jacobian against the lower bound predicted by the Gaussian Poincar\'e inequality (Theorem~\ref{lem:vector_poincare}, part (iii)).

These supplementary experiments verify that the qualitative behavior identified in the product-target case extends to targets with nontrivial dependence structure. Heavy-tailed correlated targets continue to force large Jacobian norms, in agreement with our inequalities, which apply to general (not necessarily product) targets.

\begin{figure}[!htb]
    \centering
    \renewcommand{\thesubfigure}{\Alph{subfigure}}
    \textbf{$\rho=0.5$}\par\smallskip
    \begin{subfigure}{0.32\textwidth}
        \includegraphics[width=\textwidth]{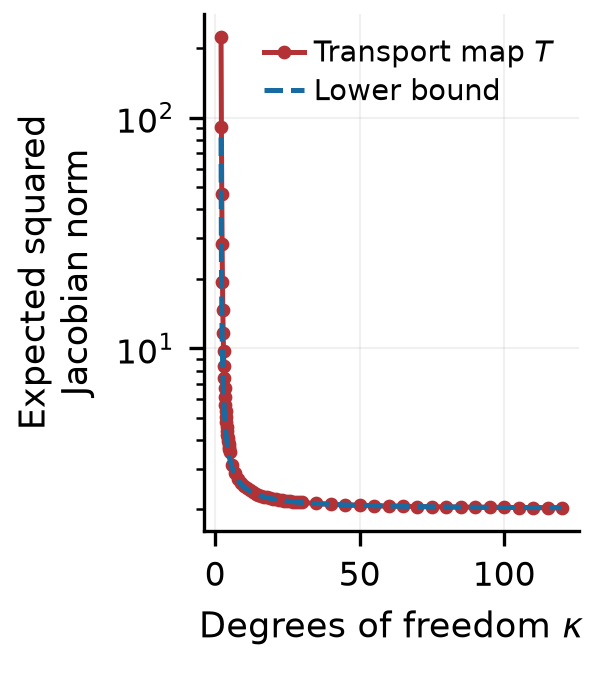}
        \caption{Student-$t$ target}
    \end{subfigure}
    \hfill
    \begin{subfigure}{0.32\textwidth}
        \includegraphics[width=\textwidth]{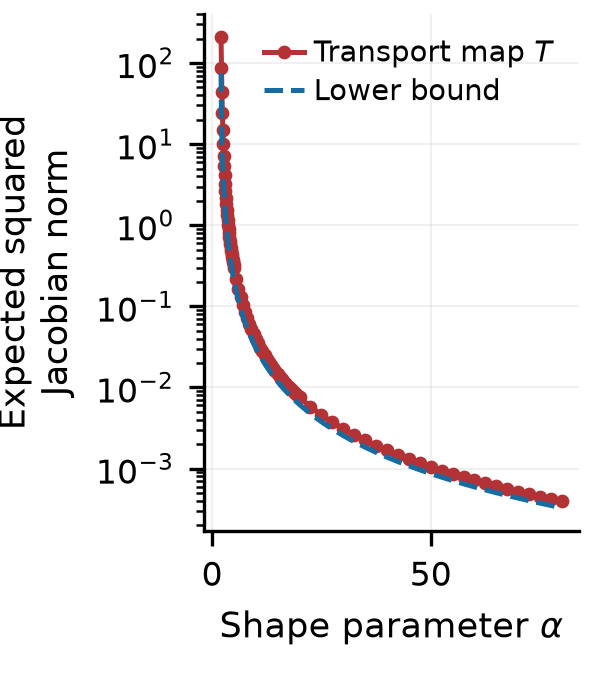}
        \caption{Pareto target}
    \end{subfigure}
    \hfill
    \begin{subfigure}{0.32\textwidth}
        \includegraphics[width=\textwidth]{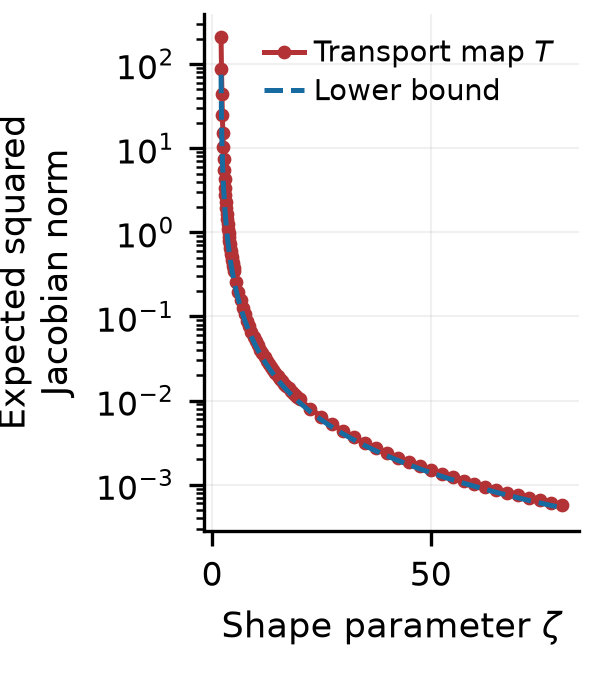}
        \caption{Fr\'echet target}
    \end{subfigure}

    \par\medskip
    \textbf{$\rho=0.9$}\par\smallskip
    \begin{subfigure}{0.32\textwidth}
        \includegraphics[width=\textwidth]{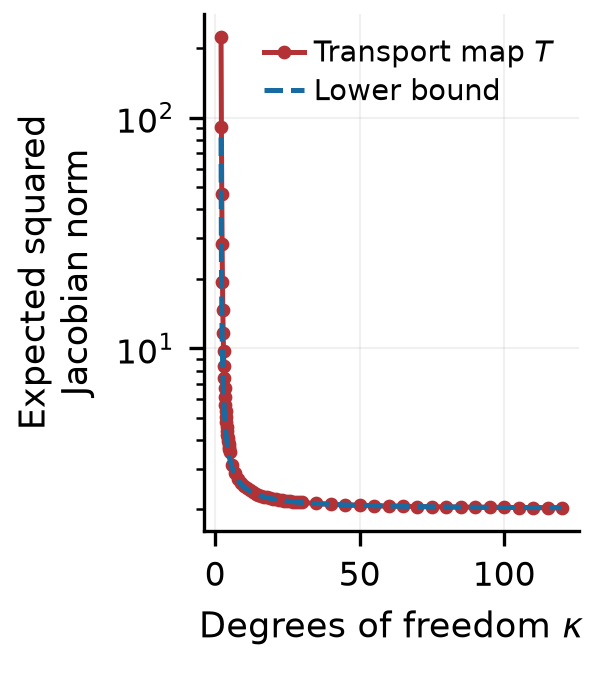}
        \caption{Student-$t$ target}
    \end{subfigure}
    \hfill
    \begin{subfigure}{0.32\textwidth}
        \includegraphics[width=\textwidth]{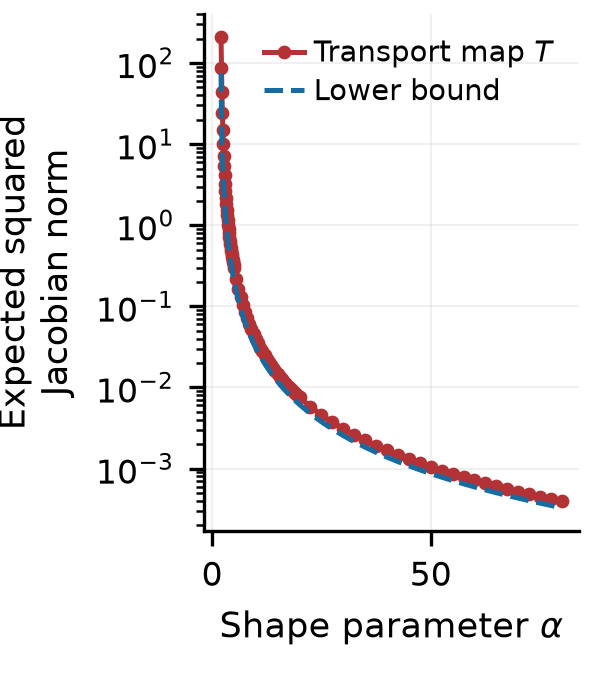}
        \caption{Pareto target}
    \end{subfigure}
    \hfill
    \begin{subfigure}{0.32\textwidth}
        \includegraphics[width=\textwidth]{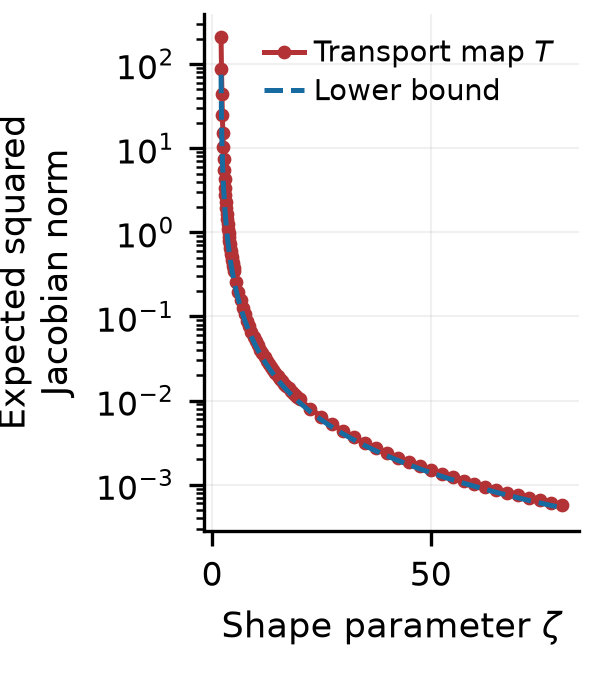}
        \caption{Fr\'echet target}
    \end{subfigure}
    \caption{Comparison of Poincar\'e lower bounds with the expected squared Frobenius norm of the transport Jacobian for correlated heavy-tailed targets under a standard Gaussian source. The top row, panels (A)--(C), uses $\rho=0.5$, and the bottom row, panels (D)--(F), uses $\rho=0.9$. Columns correspond to the Student-$t$, Pareto, and Fr\'echet target constructions. Each panel shows the numerically evaluated quantity in red and the lower bound computed from the trace of the target covariance matrix as a blue dashed curve. Compare with Figure~\ref{fig:jacobian_vertical}, where the target has independent coordinates.}
    \label{fig:jacobian_vertical_correlated}
\end{figure}

As shown in Figure~\ref{fig:jacobian_vertical_correlated}, the numerical estimates of the expected squared Frobenius norm continue to closely track the Poincar\'e-based lower bounds across all three target families when the independent heavy-tailed coordinates are linearly transformed to introduce correlation.

{\hypersetup{linkcolor=blue,filecolor=blue}
\subsection{Details of Numerical Experiments}
\label{supp:fixed-transform-experiments}
\label{cm:sec:numerics}

For the heavy-tail experiment in Figure~\ref{emp:fig:transformed-tails}, we compare the exact maps $T$ and $S=\Psi\circ T$ using a standard Gaussian source and no additional standardization. For the two-dimensional product Student, Pareto, and Fr\'echet targets, we apply $\operatorname{arsinh}(x/s)$ coordinate-wise, with $s=\sqrt\kappa$ for Student-$t$ and $s=1$ otherwise. Expectations are evaluated using Gauss--Legendre quadrature.

For the multimodal experiment in Figure~\ref{emp:fig:transformed-modes}, we use five equally weighted Gaussian components with unit standard deviations and means
\[
 m_i=D(i-3),\quad i=1,\ldots,5,\qquad D\in\{3,4,\ldots,16\}.
\]
The supplied reference component locations have the following
errors relative to the component means:
\[
(c_1-m_1,c_2-m_2,c_3-m_3,c_4-m_4,c_5-m_5)
=(0.4,-0.3,0.5,-0.4,0.2).
\]
These differences are held fixed for every value of $D$.
Throughout the study, $r=1.5$ and $h=0.4$ are fixed. No target CDF, weights, or individual component widths are supplied to the transformation.
The reference component locations are supplied synthetically to isolate the reparametrization; this study does not evaluate a procedure for locating components.

For both $g=T$ and $g=S$, we report $\E[g'(Z)^2]$ for $Z\sim N(0,1)$. Writing $p,F$ for the mixture density and CDF, and $I(u)=\varphi(\Phi^{-1}(u))$, we evaluate
\begin{equation}\label{cm:eq:quadrature}
 \E[S'(Z)^2]=\int_{\R}\frac{a(x)^2I(F(x))^2}{p(x)}\,dx.
\end{equation}
Setting $a=1$ gives the original-map quantity.

}

{\hypersetup{linkcolor=blue,filecolor=blue}
\subsection{Density-Based Fitting for Figure~\ref{fig:density-fitting}}
\label{supp:figure-one-fitting}

The heavy-tailed target has two independent standard Student-$t$ coordinates with $2.1$ degrees of freedom. The bimodal target is $\tfrac12N(m_1,\sigma^2I_2)+\tfrac12N(m_2,\sigma^2I_2)$, where $m_1=(-10,0)$, $m_2=(10,0)$, and $\sigma=0.65$. Both experiments use a standard Gaussian source on $\R^2$.

For the Student-$t$ target, apply \eqref{tf:eq:asinh} coordinate-wise with $s=\sqrt{2.1}$. For the Gaussian mixture, apply \eqref{tf:eq:component-means-map} separately to each coordinate, using the reference component locations $c_i=m_i+0.25\sigma v_i$, with $v_1=(0.8,-0.6)$ and $v_2=(-0.7,0.9)$. Set $r=1.5\sigma=0.975$ and $h=0.9\sigma=0.585$.

The normalizing flows are trained by minimizes reverse Kullback-Leibler divergence against the original target density $p$. Writing $T_\theta$ for the complete generator, $q_\theta$ for its density, and $\phi_2$ for the standard Gaussian density, the objective is
\[
 D_{\mathrm{KL}}(q_\theta\Vert p)
 =\E_{Z\sim N(0,I_2)}\!\left[
 \log\phi_2(Z)-\log|\det J_{T_\theta}(Z)|-\log p(T_\theta(Z))\right].
\]

The learned architecture is a four-layer affine-coupling RealNVP flow \citep{dinh2016density}. The network has two 32-unit $\tanh$ hidden layers. Training consists of 800 Adam updates with batch size 512 and learning rate $10^{-3}$. Heatmaps are drawn with 100,000 generated samples. Target contours are evaluated analytically. 
 
}

}
\end{document}